\newcommand{\err}{\mathsf{err}}
\newcommand{\cp}{\mathsf{cp}}
\newcommand{\on}{\{-1,1\}}
\newcommand{\anglesep}{\theta}
\newcommand{\eg}{e.g.\ }
\newcommand{\fr}[1]{\frac{1}{#1}}
\newcommand{\A}{{\mathcal A}}
\newcommand{\B}{{\mathcal B}}
\newcommand{\C}{{\mathcal C}}
\newcommand{\cH}{{\mathcal H}}
\newcommand{\R}{{\mathbb R}}
\newcommand{\E}{\mathop{\mathbf E}}
\newcommand{\pr}{\mathop{\mathbf{Pr}}}
\newcommand{\ra}{\rangle}
\newcommand{\la}{\langle}
\newcommand{\cond}{\ |\ }
\newcommand{\etal}{{\em et al.\ }}
\newcommand{\poly}{\mbox{poly}}
\newcommand{\ti} \tilde
\newcommand{\sgn}{\mathsf{sign}}
\newcommand{\eps}{\epsilon}
\newcommand{\equ}[1]{

\begin{equation}
#1
\end{equation}}
\newcommand{\equn}[1]{
$$
#1
$$}
\newcommand{\ignore}[1]{\relax}
\newcommand{\alequ}[1]{\begin{align} #1 \end{align}}
\newcommand{\alequn}[1]{\begin{align*} #1 \end{align*}}
\newcommand{\eat}[1]{}
\newtheorem{theorem}{Theorem}[section]
\newtheorem{lemma}[theorem]{Lemma}
\newtheorem{corollary}[theorem]{Corollary}
\newtheorem{remark}[theorem]{Remark}
\newtheorem{definition}[theorem]{Definition}
\renewcommand{\A}{ P} % also probability
\newcommand{\DIS}{{\rm DIS}}
\begin{document}

\title{Statistical Active Learning Algorithms \\for Noise Tolerance and Differential Privacy\\}

\author{
Maria Florina Balcan \\
Carnegie Mellon University \\
{\tt ninamf@cs.cmu.edu}
% 2nd. author
\and
Vitaly Feldman \\
IBM Research - Almaden  \\
{\tt vitaly@post.harvard.edu}
}
\date{}

\maketitle

\begin{abstract}
We describe a framework for designing efficient active learning algorithms that are tolerant to random classification noise and are differentially-private. The framework is based on active learning algorithms that are {\em statistical} in the sense that they rely on estimates of expectations of functions of filtered random examples. It builds on the powerful statistical query framework of Kearns \cite{Kearns:98}.

We show that any efficient active statistical learning algorithm can be automatically converted to an efficient active learning algorithm which is tolerant to random classification noise as well as other forms of ``uncorrelated" noise. The complexity of the resulting algorithms has information-theoretically optimal quadratic dependence on $1/(1-2\eta)$, where $\eta$ is the noise rate.

We show that commonly studied concept classes including thresholds, rectangles, and linear separators can be efficiently actively learned in our framework. These results combined with our generic conversion lead to the first computationally-efficient algorithms for actively learning some of these concept classes in the presence of random classification noise that provide exponential improvement in the dependence on the error $\eps$ over their passive counterparts. In addition, we show that our algorithms can be automatically converted to efficient active differentially-private algorithms. This leads to the first differentially-private active learning algorithms with exponential label savings over the passive case.
\end{abstract}

\section{Introduction}
Most classic machine learning methods depend on the assumption that humans can annotate all the data available for training. However, many modern machine learning applications %%(including video classification and protein sequence classification)
have massive amounts of unannotated or unlabeled data.
As a consequence, there has been tremendous interest both in machine learning and its application areas in designing algorithms that most efficiently utilize the available data, while minimizing the need for human intervention.
  An extensively used and studied technique is active learning, where the algorithm is presented with a large pool of unlabeled
examples %% (such as all images available on the web)
and can interactively ask for the labels of examples
of its own choosing from the pool, with the goal to drastically reduce labeling effort.
This has been a major area of machine learning research in the past decade~\cite{sanjoy11-encyc,hanneke:survey}, with several exciting developments  on understanding its underlying statistical principles~\cite{QBC,sanjoy-coarse,BBL06,BalcanBZ:07,Hanneke07,dhsm,CN07,BHW08,Kol10,nips10,wang11,RaginskyR:11,BH12}.
In particular, several general characterizations have been developed for
describing when
active learning can in principle have an advantage over the classic passive supervised learning paradigm, and by how much.
While the %sample
label
complexity aspect of active learning has been intensively studied and is currently well understood, the question of providing computationally efficient noise tolerant active learning algorithms has remained largely open.
In particular, prior to this work, there were no known efficient active algorithms for concept classes of super-constant VC-dimension that are provably robust to random and independent noise while giving improvements over the passive case.

\subsection{Our Results}
%%%add reduction in the intro...
We propose a framework for designing efficient (polynomial time) active learning algorithms which is based on restricting the way in which examples (both labeled and unlabeled) are accessed by the algorithm. These restricted algorithms can be easily simulated using active sampling and, in addition, possess a number of other useful properties. The main property we will consider is tolerance to random classification noise of rate $\eta$ (each label is flipped randomly and independently with probability $\eta$ \cite{AngluinLaird:88}). Further, as we will show, the algorithms are tolerant to other forms of noise and can be simulated in a differentially-private way.
%%%tolerance while maintainig efficiency....

In our restriction, instead of access to random examples from some distribution $\A$ over $X \times Y$ the learning algorithm only gets ``active" estimates of the statistical properties of $\A$ in the following sense. The algorithm can choose any {\em filter} function $\chi(x): X \rightarrow [0,1]$ and a query function $\phi: X \times Y \rightarrow [-1,1]$ for any $\chi$ and $\phi$. For simplicity we can think of $\chi$ as an indicator function of some set $\chi_S \subseteq X$ of ``informative" points and of $\phi$ as some useful property of the target function. For this pair of functions the learning algorithm can get an estimate of $\E_{(x,y) \sim \A} [\phi(x,y) \cond x \in \chi_S]$.
For $\tau$ and $\tau_0$ chosen by the algorithm the estimate is provided to within {\em  tolerance} $\tau$ as long as $\E_{(x,y) \sim \A} [x \in \chi_S] \geq \tau_0$ (nothing is guaranteed otherwise). The key point it that when we simulate this query from random examples, the inverse of $\tau$ corresponds to the label complexity of the algorithm and the inverse of $\tau_0$ corresponds to its unlabeled sample complexity. Such a query is referred to as {\em active statistical query (SQ)} and algorithms using active SQs are referred to as {\em active statistical algorithms}.

Our framework builds on the classic statistical query (SQ) learning framework of Kearns~\cite{Kearns:98} defined in the context of PAC learning model~\cite{Valiant:84}. The SQ model is based on estimates of expectations of functions of examples (but without the additional filter function) and was defined in order to design efficient noise tolerant algorithms in the PAC model. Despite the restrictive form, most of the learning algorithms in the PAC model and other standard techniques in machine learning and statistics used for problems over distributions have SQ analogues \cite{Kearns:98,BlumFKV:97,BlumDMN:05,ChuKLYBNO:06,FeldmanGRVX:13}\footnote{The sample complexity of the SQ analogues might be polynomially larger though.}. Further, statistical algorithms enjoy additional properties: they can be simulated in a differentially-private way \cite{BlumDMN:05}, automatically parallelized on multi-core architectures \cite{ChuKLYBNO:06} and have known information-theoretic characterizations of query complexity \cite{BlumFJ+:94,Feldman:12jcss}. As we show, our framework inherits the strengths of the SQ model while, as we will argue, capturing the power of active learning.

 At a first glance being active and statistical appear to be incompatible requirements on the algorithm. Active algorithms typically make label query decisions on the basis of examining individual samples (for example as in binary search for learning a threshold or the algorithms in \cite{QBC,dhsm,DasguptaKM:09}). At the same time statistical algorithms can only examine properties of the underlying distribution. But there also exist a number of active learning algorithms that can be seen as applying passive learning techniques to batches of examples that are obtained from querying labels of samples that satisfy the same filter. These include the general $A^2$ algorithm ~\cite{BBL06} and, for example, algorithms in \cite{BalcanBZ:07,DasguptaHsu:08,BeygelzimerDL:09,BalcanLong:13}.
 As we show, we can build on these techniques to provide algorithms that fit our framework.

We start by presenting a general reduction showing that any efficient active statistical learning algorithm can be automatically converted to an efficient active learning algorithm which is tolerant to random classification noise as well as other forms of ``uncorrelated" noise.
The sample complexity of the resulting algorithms depends just quadratically on $1/(1-2\eta)$, where $\eta$ is the noise rate.

%%%
We then demonstrate the generality of our framework by showing that the most commonly studied concept classes including thresholds, balanced rectangles, and homogenous linear separators can be efficiently actively learned via active statistical algorithms. For these concept classes, we design efficient active learning algorithms that are statistical and provide the same exponential improvements in the dependence on the error $\eps$ over passive learning as their non-statistical counterparts.

The primary problem we consider is active learning of homogeneous halfspaces a problem that has attracted a lot of interest in the theory of active learning \cite{QBC,sanjoy-coarse,BalcanBZ:07,BeygelzimerDL:09,DasguptaKM:09,CaCEGe10,dgs12,BalcanLong:13,GSS12}.
We describe two algorithms for the problem. First, building on insights from margin based analysis of active learning~\cite{BalcanBZ:07,BalcanLong:13}, we give an active statistical learning algorithm for homogeneous halfspaces over all isotropic log-concave distributions, a wide class of distributions that includes many well-studied density functions and has played an important role in several areas including sampling, optimization, integration, and learning \cite{LovaszVempala:07}.
Our algorithm for this setting proceeds in rounds;  in round $t$ we build a better
approximation $w_t$ to the target function by using a passive SQ
learning algorithm (e.g., the one of~\cite{DunaganVempala:04}) over a distribution $D_t$
that is a mixture of distributions in which each component is the
original distribution conditioned on being within a certain distance
from the hyperplane defined by previous approximations $w_i$. To
perform passive statistical queries relative to $D_t$ we use active
SQs with a corresponding real valued filter.\eat{
\footnote{Note that the earlier
margin-based algorithms analyzed in~\cite{BalcanBZ:07,BalcanLong:13}
 operate by only querying points close to the  hypothesis
$w_t$ in round $t$. As a result the analysis of our
algorithm is somewhat different from that in earlier work
~\cite{BalcanBZ:07,BalcanLong:13}.}}
This algorithm is computationally efficient and uses only $\poly(d,\log(1/\epsilon))$ active statistical queries of tolerance inverse-polynomial in the dimension $d$ and $\log(1/\epsilon)$.

For the special case of the uniform distribution over the unit ball we give a new, simpler and substantially more efficient active statistical learning algorithm. Our algorithm is based on measuring the error of a halfspace conditioned on being within some margin of that halfspace. We show that such measurements performed on the perturbations of the current hypothesis along the $d$ basis vectors can be combined to derive a better hypothesis. This approach differs substantially from the previous algorithms for this problem \cite{BalcanBZ:07,DasguptaKM:09}. The algorithm is computationally efficient and uses $d\log(1/\eps)$ active SQs with tolerance of $\Omega(1/\sqrt{d})$ and filter tolerance of $\Omega(\eps)$. %%% and runs in time $d \cdot \poly(\log{(d/\eps)})$.

These results, combined with our generic simulation of active statistical algorithms in the presence of random classification noise (RCN) lead to the first known computationally efficient algorithms for actively learning halfspaces which are RCN tolerant and give provable label savings over the passive case. For the uniform distribution case this leads to an algorithm with sample complexity of  $O((1-2\eta)^{-2} \cdot d^2 \log(1/\eps) \log(d \log(1/\eps)))$ and for the general isotropic log-concave case we get sample complexity of $\poly(d,\log(1/\eps),1/(1-2\eta))$. This is worse than the sample complexity in the noiseless case which is just $O((d+\log\log(1/\eps)) \log(1/\eps))$ \cite{BalcanLong:13}. However, compared to passive learning in the presence of RCN, our algorithms have exponentially better dependence on $\eps$ and essentially the same dependence on $d$ and $1/(1-2\eta)$. One issue with the generic simulation is that it requires knowledge of $\eta$ (or an almost precise estimate). Standard approach to dealing with this issue does not always work in the active setting and for our  log-concave and the uniform distribution algorithms we give a specialized argument that preserves the exponential improvement in the dependence on $\eps$.

\smallskip
\noindent {\bf Differentially-private active learning:} In many application of machine learning such as medical and financial record analysis, data is both sensitive and expensive to label. However, to the best of our knowledge, there are no formal results addressing both of these constraints. We address the problem by defining a natural model of differentially-private active learning. In our model we assume that a learner has full access to unlabeled portion of some database of $n$ examples $S \subseteq X \times Y$ which correspond to records of individual participants in the database. In addition, for every element of the database $S$ the learner can request the label of that element. As usual, the goal is to minimize the number of label requests (such setup is referred to as {\em pool-based} active learning \cite{McNi98}). In addition, we would like to preserve the {\em differential privacy} of the participants in the database, a now-standard notion of privacy introduced in \cite{DMNS06}. Informally speaking, an algorithm is differentially private if adding any record to $S$ (or removing a record from $S$) does not affect the probability that any specific hypothesis will be output by the algorithm significantly.

%Finally, we show that active SQ learning algorithms can also be used to obtain differentially-private active learning algorithms.

As first shown by Blum \etal \cite{BlumDMN:05}, SQ algorithms can be automatically translated into differentially-private algorithms by using the so-called Laplace mechanism (see also \cite{KasiviswanathanLNRS11}). Using a similar approach, we show that active SQ learning algorithms can be automatically transformed into differentially-private active learning algorithms. As a consequence, for all the classes for which we provide statistical active learning algorithms that can be simulated by using only $\poly(d,\log(1/\epsilon))$ labeled examples (including thresholds  and halfspaces),  we can learn and preserve privacy with much fewer label requests than those required by even non-private classic passive learning algorithms, and can do so even when in our model the privacy parameter is very small. Note that while we focus on the number of label requests, the algorithms also preserve the differential privacy of the unlabeled points.
%Using our active statistical algorithms for halfspaces we obtain the first algorithms that are both differentially-private and give exponential improvements %in the dependence of label complexity on the accuracy  parameter $\eps$.

 \subsection{Additional Related Work}
 \label{sec:related-work}
As we have mentioned, most prior theoretical work on active learning focuses on either sample complexity bounds (without regard for efficiency) or the noiseless case.
For random classification noise in particular,~\cite{BH12} provides a sample complexity analysis based on the notion of splitting index that is optimal up to $\mathrm{polylog}$ factors and works for general concept classes and distributions, but it is not computationally efficient.
 In addition, several works give active learning algorithms with empirical evidence of robustness to certain types of noise~\cite{BeygelzimerDL:09,GSS12}; %%%additionally,~\cite{GSS12} provide guarantees for  noisy scenarios that can tolerate ....
 %%% be more specific here

In~\cite{CaCEGe10,dgs12} online learning algorithms in the selective sampling framework are presented, where labels must
be actively queried before they are revealed. Under the assumption that the label conditional distribution is a linear function determined by a fixed target vector, they provide bounds on the regret of the algorithm and on
the number of labels it queries when faced with an adaptive adversarial strategy of generating the
instances. As pointed out in \cite{dgs12}, these results can also be converted to a distributional PAC setting where instances $x_t$ are drawn i.i.d. In this setting they obtain exponential improvement in label complexity over passive learning. These interesting results and techniques are not directly comparable to ours. Our framework is not restricted to halfspaces. Another important difference is that (as pointed out in~\cite{GSS12}) the exponential improvement they give is not possible in the noiseless version of their setting. In other words, the addition of linear noise defined by the target makes the problem easier for active sampling. By contrast RCN can only make the classification task harder than in the realizable case.

Among the so called disagreement-based algorithms that provably work under very general noise models (adversarial label noise) and for general concept classes~\cite{BBL06,Kol10,dhsm,nips10,wang11,RaginskyR:11,BH12,hanneke:survey}, those of Dasgupta, Hsu, and Monteleoni~\cite{dhsm} and Beygelzimer, Hsu, Langford, and Zhang~\cite{nips10} are most  amenable to implementation.
%%%% better label complexity
 While more amenable to implementation than other disagreement-based techniques, these algorithms assume the existence of a computationally efficient passive learning algorithm (for the concept class at hand) that can minimize the empirical error in the adversarial label noise --- however, such algorithms are not known to exist for most concept classes, including linear separators.

Following the original publication of our work,  Awasthi \etal~\cite{ABL-14} give a polynomial-time
active learning algorithm for learning linear separators in the presence of adversarial forms of noise.
Their algorithm is the first one that can tolerate both adversarial label noise and malicious noise (where the adversary can corrupt both the instance part and the label part of the examples) as long as the rate of noise $\eta=O(\epsilon)$. We note that these results are not comparable to ours as we need the noise to be ``uncorrelated" but can deal with noise of any rate (with complexity growing with $1/(1-2\eta)$).

%amount of noise as long as the number of  samples and running time increases  proportionally to the noise rate.
\eat{
We note that subsequent to the conference publication of our work, Awasthi, Balcan, and Long~\cite{ABL-arxiv} provided the first polynomial-time
 active learning algorithm for learning linear
separators in the presence of adversarial label noise, as well as the first analysis of
active learning under the challenging malicious noise where the adversary can corrupt both the instance part and the label part of our examples. We note that these results are  incomparable to ours. Because the noise models they consider are less structured,~\cite{ABL-arxiv} can deal with noise rates $\eta=O(\epsilon)$; by contrast, in our case, for the more stylized  random classification noise,  we can deal with any amount of noise as long as the number of  samples and running time increases  proportionally to the noise rate.
}
\smallskip

\noindent {\bf Organization:} Our model, its properties and several illustrative examples (including threshold functions and balanced rectangles) are given in Section~\ref{sec:model}. Our algorithm for learning homogeneous halfspaces over log-concave and uniform distributions
are given in Section~\ref{sec:hs-logc} and Section~\ref{sec:uniform} respectively. The formal statement of differentially-private simulation is given in Section~\ref{sec:privacy}.

 %% expand

%\input{asq-model.2.tex}

\section{Active Statistical Algorithms}
\label{sec:model}
Let $X$ be a domain and $\A$ be a distribution over labeled examples on $X$. We represent such a distribution by a pair $(D,\psi)$ where $D$ is the marginal distribution of $\A$ on $X$ and $\psi:X \rightarrow [-1,1]$ is a function defined as $\psi(z) = \E_{(x,\ell) \sim \A}[\ell \cond x = z]$. We will be considering learning in the PAC model (realizable case) where $\psi$ is a boolean function, possibly corrupted by random noise.

When learning with respect to a distribution $\A=(D,\psi)$, an active statistical learner has access to
{\em active statistical queries}.
 A query of this type is a pair of functions $(\chi, \phi)$, where $\chi: X \rightarrow [0,1]$ is the {\em filter} function which for a point $x$, specifies the probability with which the label of $x$ should be queried. The function $\phi: X \times \on \rightarrow [-1,1]$ is the query function and depends on both point and the label. The filter function $\chi$ defines the distribution $D$ conditioned on $\chi$ as follows: for each $x$ the density function $D_{|\chi}(x)$ is defined as $D_{|\chi}(x) = D(x)\chi(x)/\E_D[\chi(x)]$. Note that if $\chi$ is an indicator function of some set $S$ then $D_{|\chi}$ is exactly $D$ conditioned on $x$ being in $S$. Let $\A_{|\chi}$ denote the conditioned distribution $(D_{|\chi},\psi)$. In addition, a query has two tolerance parameters: filter tolerance $\tau_0$ and query tolerance $\tau$. In response to such a query the algorithm obtains a value $\mu$ such that if $\E_D[\chi(x)] \geq \tau_0$ then $$\left|\mu - \E_{\A_{|\chi}}[\phi(x,\ell)] \right| \leq \tau$$ (and nothing is guaranteed when $\E_D[\chi(x)] < \tau_0$).

An active statistical learning algorithm can also ask {\em target-independent} queries with tolerance $\tau$ which are just queries over unlabeled samples. That is for a query $\varphi:X \rightarrow [-1,1]$ the algorithm obtains a value $\mu$, such that $|\mu - \E_D[\varphi(x)]| \leq \tau$. Such queries are not necessary when $D$ is known to the learner.

For the purposes of obtaining noise tolerant algorithms one can relax the requirements of model and give the learning algorithm access to unlabelled samples. A similar variant of the model was considered in the context of SQ model \cite{Kearns:98,BlumFKV:97}. We refer to this variant as {\em label-statistical}. Label-statistical algorithms do not need access to target-independent queries access as they can simulate those using unlabelled samples.

Our definition generalizes the statistical query framework of Kearns \cite{Kearns:98} which does not include filtering function, in other words a query is just a function $\phi: X \times \on \rightarrow [-1,1]$ and it has a single tolerance parameter $\tau$.
By definition, an active SQ $(\chi, \phi)$ with tolerance $\tau$ relative to $\A$ is the same as a passive statistical query $\phi$ with tolerance $\tau$ relative to the distribution $\A_{|\chi}$. In particular, a (passive) SQ is equivalent to an active SQ with filter $\chi \equiv 1$ and filter tolerance $1$.

Finally we note that from the definition of active SQ we can see that
$$ \E_{\A_{|\chi}}[\phi(x,\ell)] = \E_{\A}[\phi(x,\ell) \cdot \chi(x)] / \E_{\A}[\chi(x)].$$
This implies that an active statistical query can be estimated using two passive statistical queries. However to estimate $\E_{\A_{|\chi}}[\phi(x,\ell)]$ with tolerance $\tau$ one needs to estimate $\E_{\A}[\phi(x,\ell) \cdot \chi(x)]$ with tolerance $\tau \cdot \E_{\A}[\chi(x)]$ which can be much lower than $\tau$. Tolerance of a SQ directly corresponds to the number of examples needed to evaluate it and therefore simulating active SQs passively might require many more examples.

\subsection{Simulating Active Statistical Queries}
In our model, the algorithm operates via statistical queries. In this section we describe how the answers to these queries can be simulated from random examples, which immediately implies that our algorithms can be transformed into active learning algorithms in the usual model~\cite{sanjoy11-encyc}.

We first note that a valid response to a target-independent query with tolerance $\tau$ can be obtained, with probability at least $1-\delta$, using $O(\tau^{-2}\log{(1/\delta)})$ unlabeled samples.

A natural way of simulating an active SQ is by filtering points drawn randomly from $D$: draw a random point $x$, let $B$ be drawn from Bernoulli distribution with probability of $1$ being $\chi(x)$; ask for the label of $x$ when $B=1$. The points for which we ask for a label are distributed according to $D_{|\chi}$. This implies that the empirical average of $\phi(x,\ell)$ on $O(\tau^{-2}\log{(1/\delta)})$ labeled examples will then give $\mu$. Formally we get the following theorem.
\begin{theorem}
\label{th:simulate}
Let $\A=(D,\psi)$ be a distribution over $X \times \on$. There exists an active sampling algorithm that given functions $\chi: X \rightarrow [0,1]$, $\phi: X \times \on \rightarrow [-1,1]$, values  $\tau_0 > 0$, $\tau > 0$, $\delta > 0$, and access to samples from $\A$, with probability at least $1-\delta$, outputs a valid response to active statistical query $(\chi, \phi)$ with tolerance parameters $(\tau_0,\tau)$. The algorithm uses $O(\tau^{-2}\log{(1/\delta)})$ labeled examples from $\A$ and $O(\tau_0^{-1}\tau^{-2}\log{(1/\delta)})$ unlabeled samples from $D$.
\end{theorem}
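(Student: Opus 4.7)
The plan is to implement the informal rejection-sampling simulator described just before the theorem statement, with an explicit cap on both the number of unlabeled points inspected and the number of label queries made. Set $k := C_1 \tau^{-2}\log(1/\delta)$ and $m_u := C_2 \tau_0^{-1}\tau^{-2}\log(1/\delta)$ for absolute constants $C_1, C_2$ to be chosen (with $C_2$ a sufficiently large multiple of $C_1$). The simulator repeatedly draws an unlabeled point $x_i$ from $D$, draws an independent Bernoulli $B_i$ with mean $\chi(x_i)$, and, if $B_i=1$, requests the label $\ell_i$ of $x_i$ from $\A$. It halts as soon as it has either requested $k$ labels or inspected $m_u$ unlabeled points. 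If it has collected $k$ labels $(x_{T_1},\ell_{T_1}),\ldots,(x_{T_k},\ell_{T_k})$ at the acceptance times $T_1<\cdots<T_k$, it outputs the empirical mean $\mu := \frac{1}{k}\sum_{j=1}^{k} \phi(x_{T_j},\ell_{T_j})$; otherwise it outputs an arbitrary value in $[-1,1]$.

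For correctness, assume $\E_D[\chi(x)] \geq \tau_0$ (there is nothing to show otherwise). The number of accepted points among the first $m_u$ trials is a sum of $m_u$ i.i.d.\ Bernoullis with common mean $p = \E_D[\chi(x)] \geq \tau_0$, and so has expectation at least $m_u \tau_0 = C_2 \tau^{-2}\log(1/\delta)$. Choosing $C_2$ to be a large enough multiple of $C_1$, a multiplicative Chernoff bound shows that this count is at least $k$ with probability at least $1-\delta/2$; equivalently, the simulator reaches its label quota before exhausting the unlabeled budget. Moreover, by the standard thinning property of independent Bernoulli acceptance applied to an i.i.d.\ stream, conditioned on having collected $k$ acceptances the points $x_{T_1},\ldots,x_{T_k}$ are themselves i.i.d.\ draws from $D_{|\chi}$, and since each label $\ell_{T_j}$ is then drawn independently from $\psi(x_{T_j})$, the labeled pairs are i.i.d.\ from $\A_{|\chi}$. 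Because $\phi$ takes values in $[-1,1]$, Hoeffding's inequality applied to these $k$ samples yields $|\mu - \E_{\A_{|\chi}}[\phi(x,\ell)]| \leq \tau$ with probability at least $1-\delta/2$ once $C_1$ is a large enough absolute constant. A union bound produces overall failure probability at most $\delta$.

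The sample complexity claims hold by construction: labels are capped at $k = O(\tau^{-2}\log(1/\delta))$ and unlabeled draws at $m_u = O(\tau_0^{-1}\tau^{-2}\log(1/\delta))$. The only point requiring a little care is that the labeled set is produced by a random stopping rule rather than being a plain i.i.d.\ sample of fixed size, so one could worry that stopping at the $k$-th acceptance biases the distribution of the collected sample; the thinning observation precisely resolves this, letting us apply Hoeffding to the (conditionally i.i.d.) collected labels without any correction. This is the sole ``non-routine'' step; the rest is a bookkeeping combination of Chernoff and Hoeffding.
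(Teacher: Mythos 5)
Your proof is correct, and it is essentially the same rejection-sampling simulator the paper describes, but you carry the concentration arguments through a bit more carefully and end up with a simpler proof. The paper's first pass applies a multiplicative Chernoff bound to the unlabeled sample in a somewhat wasteful way, obtaining $t_0 = O(\tau_0^{-1}\tau^{-2}\log^2(1/\delta))$ unlabeled samples, and then repairs the extra $\log(1/\delta)$ factor with a median-of-means boosting step (run with constant confidence $O(\log(1/\delta))$ times and output the median). Your version caps the labeled quota at $k = \Theta(\tau^{-2}\log(1/\delta))$ and the unlabeled budget at $m_u = \Theta(\tau_0^{-1}\tau^{-2}\log(1/\delta))$ and observes that since $k = \Omega(\log(1/\delta))$ automatically (as $\tau \leq 2$), a single multiplicative Chernoff bound already gives the quota-is-met event with probability $1-\delta/2$; so no boosting is needed. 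You also make explicit the thinning/stopping-time point --- that conditioning on reaching $k$ acceptances before exhausting the budget does not bias the accepted sample away from i.i.d.\ $D_{|\chi}$ --- which the paper takes for granted. The argument you give for that point (the pairs $(x_i, B_i)$ are i.i.d., the stopping rule is measurable in the $B_i$'s alone, and $x_i$ given $B_i=1$ is exactly $D_{|\chi}$) is sound. In short: same algorithm, same ingredients (Chernoff for the unlabeled count, Hoeffding for the labeled mean, union bound), but a tighter bookkeeping that removes the boosting step the paper uses as a patch.
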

\begin{proof}
The Chernoff-Hoeffding bounds imply that for some $t = O(\tau^{-2}\log{(1/\delta)})$, the empirical mean of $\phi$ on $t$ examples that are drawn randomly from $\A_{|\chi}$ will, with probability at least $1-\delta/2$, be within $\tau$ of $\E_{\A_{|\chi}}[\phi(x,\ell)]$. We can also assume that $\E_D[\chi(x)] \geq \tau_0$ since any value would be a valid response to the query when this assumption does not hold. By the standard multiplicative form of the Chernoff bound we also know that given $t_0 =  O(\tau_0^{-1} t \cdot \log{(1/\delta)}) = O(\tau_0^{-1}\tau^{-2}\log{(1/\delta)}^2)$ random samples from $D$, with probability at least $1-\delta/2$, at least $t$ of the samples will pass the filter $\chi$. Therefore with, probability at least $1-\delta$, we will obtain at least $t$ samples from $D$ filtered using $\chi(x)$ and labeled examples on these points will give an estimate of $\E_{\A_{|\chi}}[\phi(x,\ell)]$ with tolerance $\tau$.

This procedure gives $\log{(1/\delta)}^2$ dependence on confidence (and not the claimed $\log{(1/\delta)}$). To get the claimed dependence we can use a standard confidence boosting technique. We run the above procedure with $\delta' = 1/3$, $k$ times and let $\mu_1,\mu_2,\ldots, \mu_k$ denote the results. The simulation returns the median of $\mu_i$'s. The Chernoff bound implies that for $k=O(\log(1/\delta))$, with probability at least $1-\delta$, at least half of the $\mu_i$'s satisfy the condition $\left|\mu_i - \E_{\A_{|\chi}}[\phi(x,\ell)] \right| \leq \tau$. In particular, the median satisfies this condition. The dependence on $\delta$ of sample complexity is now as claimed.
\end{proof}
We remark that in some cases better sample complexity bounds can be obtained using multiplicative forms of the Chernoff-Hoeffding bounds (\eg \cite{AslamDecatur:98}).

A direct way to simulate all the queries of an active SQ algorithm is to estimate the response to each query using fresh samples and use the union bound to ensure that, with probability at least $1-\delta$, all queries are answered correctly. Such direct simulation of an algorithm that uses at most $q$ queries can be done using $O(q\tau^{-2} \log(q/\delta))$ labeled examples and $O(q\tau_0^{-1}\tau^{-2}\log{(q/\delta)})$ unlabeled samples. However, in many cases a more careful analysis can be used to reduce the sample complexity of simulation. Labeled examples can be shared to simulate queries that use the same filter $\chi$ and do not depend on each other. This implies that the sample size sufficient for simulating $q$ non-adaptive queries with the same filter scales logarithmically with $q$. More generally, given a set of $q$ query functions (possibly chosen adaptively) which belong to some set $Q$ of low complexity (such as VC dimension) one can reduce the sample complexity of estimating the answers to all $q$ queries (with the same filter) by invoking the standard bounds based on uniform convergence (\eg \cite{BlumerEH+:89,vapnik:98}). %The direct implication of this is that an active SQ algorithm that uses at most $q$ active SQs of tolerance at most $\tau$ conditioned relative on each of at most $k$ filters can be simulated using $O(k \cdot \tau^{-2} \log (q/\delta))$ labeled examples.

\subsection{Noise tolerance}
An important property of the simulation described in Theorem \ref{th:simulate} is that it can be easily adapted to the case when the labels are corrupted by random classification noise \cite{AngluinLaird:88}.  For a distribution $\A=(D,\psi)$ let $\A^\eta$ denote the distribution $\A$ with the label flipped with probability $\eta$ randomly and independently of an example. It is easy to see that $\A^\eta = (D, (1-2\eta) \psi)$. We now show that, as in the SQ model \cite{Kearns:98}, active statistical queries can be simulated given examples from $\A^\eta$.
\begin{theorem}
\label{th:simulate-noise}
Let $\A=(D,\psi)$ be a distribution over examples and let $\eta \in [0,1/2)$ be a noise rate. There exists an active sampling algorithm that given functions $\chi: X \rightarrow [0,1]$, $\phi: X \times \on \rightarrow [-1,1]$, values $\eta$, $\tau_0 > 0$, $\tau > 0$, $\delta > 0$, and access to samples from $\A^\eta$, with probability at least $1-\delta$, outputs a valid response to active statistical query $(\chi, \phi)$ with tolerance parameters $(\tau_0,\tau)$. The algorithm uses $O(\tau^{-2}(1-2\eta)^{-2}\log{(1/\delta)})$ labeled examples from $\A^\eta$ and $O(\tau_0^{-1}\tau^{-2}(1-2\eta)^{-2}\log{(1/\delta)})$ unlabeled samples from $D$.
\end{theorem}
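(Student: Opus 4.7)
The plan is to adapt the classical Kearns decomposition for simulating SQs under RCN to the active setting, using Theorem~\ref{th:simulate} as a black box where possible. The starting observation is the identity
\[
\phi(x,\ell) \;=\; \phi^+(x) + \ell\cdot \phi^-(x), \qquad \phi^+(x) := \tfrac{\phi(x,1)+\phi(x,-1)}{2},\quad \phi^-(x) := \tfrac{\phi(x,1)-\phi(x,-1)}{2},
\]
so that $\E_{\A_{|\chi}}[\phi(x,\ell)] = \E_{D_{|\chi}}[\phi^+(x)] + \E_{\A_{|\chi}}[\ell\cdot\phi^-(x)]$. The first summand is label-independent, and the second is the only place where the RCN actually interferes.

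For the label-independent piece, I would simply feed the active SQ $(\chi,\phi^+)$ (viewed as a query $(x,\ell)\mapsto\phi^+(x)$) into the noiseless simulator of Theorem~\ref{th:simulate} with tolerance $\tau/2$ and confidence $\delta/2$: the estimator only uses the point $x$, not $\ell$, so the fact that the labels we see are noisy is irrelevant. This already gives the claimed $O(\tau^{-2}\log(1/\delta))$ labeled and $O(\tau_0^{-1}\tau^{-2}\log(1/\delta))$ unlabeled sample complexities for this part (without the $(1-2\eta)^{-2}$ factor).

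For the label-dependent piece, I would invoke the fact stated in the paper that $\A^\eta = (D,(1-2\eta)\psi)$, which gives, after conditioning on $\chi$,
\[
\E_{\A^\eta_{|\chi}}\bigl[\ell\cdot\phi^-(x)\bigr] \;=\; (1-2\eta)\,\E_{D_{|\chi}}\bigl[\psi(x)\,\phi^-(x)\bigr] \;=\; (1-2\eta)\,\E_{\A_{|\chi}}\bigl[\ell\cdot\phi^-(x)\bigr].
\]
Hence I would apply Theorem~\ref{th:simulate} to the distribution $\A^\eta$ with the query $(\chi,\ell\cdot\phi^-(x))$ at tolerance $\tau'=(1-2\eta)\tau/2$ and confidence $\delta/2$, obtaining an estimate $\nu$ of $\E_{\A^\eta_{|\chi}}[\ell\cdot\phi^-(x)]$ within $\tau'$, and then return $\nu/(1-2\eta)$, which lies within $\tau/2$ of $\E_{\A_{|\chi}}[\ell\cdot\phi^-(x)]$. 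Summing the two estimates and using a union bound over the two sub-simulations yields the claimed total error $\tau$ with probability at least $1-\delta$.

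The sample complexity follows directly: the noisy sub-simulation dominates and, by Theorem~\ref{th:simulate} applied at tolerance $(1-2\eta)\tau/2$, uses $O(\tau^{-2}(1-2\eta)^{-2}\log(1/\delta))$ labeled examples from $\A^\eta$ and $O(\tau_0^{-1}\tau^{-2}(1-2\eta)^{-2}\log(1/\delta))$ unlabeled samples from $D$, matching the statement. There is essentially no real obstacle here beyond bookkeeping; the only subtlety worth flagging is that $\phi^-$ may take values in $[-1,1]$ (so $\ell\cdot\phi^-$ is still bounded in $[-1,1]$ and Theorem~\ref{th:simulate} applies verbatim), and that the filter $\chi$ depends only on $x$, so the filtering step behaves identically under $\A$ and $\A^\eta$ — this is what lets us treat the unlabeled sample complexity in exactly the same way as in the noiseless case, aside from the $(1-2\eta)^{-2}$ inflation inherited from the tighter query tolerance.
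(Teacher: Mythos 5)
Your proposal is correct and follows essentially the same route as the paper: the Kearns-style decomposition of $\phi$ into the label-independent part $\phi^+$ and the label-correlated part $\ell\cdot\phi^-$, the observation that $\phi^+$ is unaffected by RCN, and the identity $\E_{\A^\eta_{|\chi}}[\ell\cdot\phi^-(x)]=(1-2\eta)\E_{\A_{|\chi}}[\ell\cdot\phi^-(x)]$ followed by estimation at tolerance $(1-2\eta)\tau/2$ and division by $1-2\eta$. The only cosmetic difference is that the paper derives the $(1-2\eta)$ scaling directly from independence of the noise flip, whereas you derive it from the paper's preceding remark $\A^\eta=(D,(1-2\eta)\psi)$; these are the same computation.
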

\begin{proof}
Using a simple observation from \cite{BshoutyFeldman:02}, we first decompose the statistical query $\phi$ into two parts: one that computes a correlation with the label and the other that does not depend on the label altogether. Namely,
\equ{\phi(x,\ell) = \phi(x,1) \frac{1 +\ell}{2} + \phi(x,-1) \frac{1 - \ell}{2} = \frac{\phi(x,1)  - \phi(x,-1)}{2} \cdot \ell +
\frac{\phi(x,1)  + \phi(x,-1)}{2}\ .\label{eq:query-decompose}} Clearly, to estimate the value of $\E_{\A_{|\chi}}[\phi(x,\ell)]$ with tolerance $\tau$ it is sufficient to estimate the values of $\E_{\A_{|\chi}}[\frac{1}{2} (\phi(x,1) - \phi(x,-1)) \cdot \ell]$ and $\E_{\A_{|\chi}}[\frac{1}{2} (\phi(x,1) + \phi(x,-1))]$ with tolerance $\tau/2$. The latter expression does not depend on the label and, in particular, is not affected by noise. Therefore it can be estimated as before using $\A^\eta$ in place of $\A$.
At the same time we can use the independence of noise to conclude\footnote{For any function $f(x)$ that does not depend on the label, we have:
$\E_{\A^\eta_{|\chi}}[f(x)\cdot \ell] = (1-\eta)\E_{\A_{|\chi}}[f(x)\cdot \ell] + \eta\cdot \E_{\A_{|\chi}}[f(x)\cdot(-\ell)]
                   = (1-2\cdot\eta)\E_{\A_{|\chi}}[f(x)\cdot \ell]$.
The first equality follows from the fact that under $\A^\eta_{|\chi}$, for any given $x$,
 there is a $(1-\eta)$ chance that the label is the same as under $\A_{|\chi}$, and an $\eta$
 chance that the label is the negation of the label obtained from $\A_{|\chi}$.
},
\equn{\E_{\A^\eta_{|\chi}}\left[\frac{1}{2} (\phi(x,1) - \phi(x,-1)) \cdot \ell \right]
%= \E_{\A^\eta_{|\chi}}[\frac{1}{2} (\phi(x,1) - \phi(x,-1)) ] \cdot \E_{\A^\eta_{|\chi}}[\ell]\\ &
= (1-2\eta) \E_{\A_{|\chi}}\left[\frac{1}{2} (\phi(x,1) - \phi(x,-1)) \cdot \ell \right].}
This means that we can estimate $\E_{\A_{|\chi}}[\frac{1}{2} (\phi(x,1) - \phi(x,-1)) \cdot \ell]$ with tolerance $\tau/2$ by estimating $\E_{\A^\eta_{|\chi}}[\frac{1}{2} (\phi(x,1) - \phi(x,-1)) \cdot \ell]$ with tolerance $(1-2\eta)\tau/2$ and then multiplying the result by $1/(1-2\eta)$. The estimation of $\E_{\A^\eta_{|\chi}}[\frac{1}{2} (\phi(x,1) - \phi(x,-1)) \cdot \ell]$ with tolerance $(1-2\eta)\tau/2$ can be done exactly as in Theorem \ref{th:simulate}.
\end{proof}
Note that the sample complexity of the resulting active sampling algorithm has information-theoretically optimal quadratic dependence on $1/(1-2\eta)$, where $\eta$ is the noise rate.
Note that RCN does not affect the unlabelled samples so algorithms which are only label-statistical algorithms can also be simulated in the presence of RCN.
\begin{remark}
\label{rem:unknown-noise}
This simulation assumes that $\eta$ is given to the algorithm exactly. It is easy to see from the proof, that any value $\eta'$ such that $\frac{1-2\eta}{1-2\eta'} \in [1-\tau/4,1+\tau/4]$ can be used in place of $\eta$ (with the tolerance of estimating $\E_{\A^\eta_{|\chi}}[\frac{1}{2} (\phi(x,1) - \phi(x,-1)) \cdot \ell]$ set to $(1-2\eta)\tau/4$).
In some learning scenarios even an approximate value of $\eta$ is not known but it is known that $\eta \leq \eta_0 < 1/2$. To address this issue one can construct a sequence $\eta_1,\ldots,\eta_k$ of guesses of $\eta$, run the learning algorithm with each of those guesses in place of the true $\eta$ and let $h_1,\ldots,h_k$ be the resulting hypotheses \cite{Kearns:98}. One can then return the hypothesis $h_i$ among those that has the best agreement with a suitably large sample. It is not hard to see that  $k=O(\tau^{-1} \cdot \log(1/(1-2\eta_0)))$ guesses will suffice for this strategy to work \cite{AslamDecatur:98}.

Passive hypothesis testing requires $\Omega(1/\eps)$ labeled examples and might be too expensive to be used with active learning algorithms. It is unclear if there exists a general approach for dealing with unknown $\eta$ in the active learning setting that does not increase substantially the labeled example complexity. However, as we will demonstrate, in the context of specific active learning algorithms variants of this approach can be used to solve the problem.
\end{remark}

We now show that more general types of noise can be tolerated as long as they are ``uncorrelated" with the queries and the target function. Namely, we represent label noise using a function $\Lambda:X \rightarrow [0,1]$, where $\Lambda(x)$ gives the probability that the label of $x$ is flipped. The rate of $\Lambda$ when learning with respect to marginal distribution $D$ over $X$ is $\E_D[\Lambda(x)]$. For a distribution $\A=(D,\psi)$ over examples, we denote by $\A^\Lambda$ the distribution $\A$ corrupted by label noise $\Lambda$. It is easy to see that $\A^\Lambda=(D,\psi \cdot (1-2\Lambda))$. Intuitively, $\Lambda$ is ``uncorrelated" with a query if the way that $\Lambda$ deviates from its rate is almost orthogonal to the query on the target distribution.
\begin{definition}
\label{def:uncorrelated}
Let $\A=(D,\psi)$ be a distribution over examples  and $\tau' > 0$. For functions $\chi: X \rightarrow [0,1]$, $\phi: X \times \on \rightarrow [-1,1]$, we say that a noise function $\Lambda:X \rightarrow [0,1]$ is $(\eta,\tau')$-uncorrelated with $\phi$ and $\chi$ over $\A$ if,
$$\left|\E_{D_{|\chi}}\left[\frac{\phi(x,1) - \phi(x,-1)}{2} \psi(x) \cdot (1-2(\Lambda(x)-\eta)) \right]\right| \leq \tau'\ .$$
\end{definition}
In this definition $(1-2(\Lambda(x)-\eta))$ is the expectation of $\{-1,1\}$ coin that is flipped with probability $\Lambda(x)-\eta$, whereas $(\phi(x,1) - \phi(x,-1)) \psi(x)$ is the part of the query which measures the correlation with the label.
We now give an analogue of Theorem \ref{th:simulate-noise} for this more general setting.
\begin{theorem}
\label{th:simulate-noise-uncorr}
Let $\A=(D,\psi)$ be a distribution over examples, $\chi: X \rightarrow [0,1]$, $\phi: X \times \on \rightarrow [-1,1]$ be a query and a filter functions,  $\eta \in [0,1/2), \tau > 0$ and $\Lambda$ be a noise function that is $(\eta,(1-2\eta)\tau/4)$-uncorrelated with $\phi$ and $\chi$ over $\A$. There exists an active sampling algorithm that given functions $\chi$ and $\phi$, values  $\eta$, $\tau_0 > 0$, $\tau > 0$, $\delta > 0$, and access to samples from $\A^\Lambda$, with probability at least $1-\delta$, outputs a valid response to active statistical query $(\chi, \phi)$ with tolerance parameters $(\tau_0,\tau)$. The algorithm uses $O(\tau^{-2}(1-2\eta)^{-2}\log{(1/\delta)})$ labeled examples from $\A^\Lambda$ and $O(\tau_0^{-1}\tau^{-2}(1-2\eta)^{-2}\log{(1/\delta)})$ unlabeled samples from $D$.
\end{theorem}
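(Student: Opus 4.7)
The plan is to follow the proof of Theorem~\ref{th:simulate-noise} almost verbatim, with the only substantive change being that the exact RCN identity $\E_{\A^\eta_{|\chi}}[f(x)\ell] = (1-2\eta)\E_{\A_{|\chi}}[f(x)\ell]$ is replaced by an \emph{approximate} identity whose error is bounded by the uncorrelatedness hypothesis of Definition~\ref{def:uncorrelated}.

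First, I would apply the decomposition (\ref{eq:query-decompose}), writing $\phi(x,\ell) = f(x)\ell + h(x)$ where $f(x) = \tfrac{1}{2}(\phi(x,1)-\phi(x,-1))$ and $h(x) = \tfrac{1}{2}(\phi(x,1)+\phi(x,-1))$. Since $h$ does not depend on $\ell$, the noise leaves its conditional expectation unchanged: $\E_{\A^\Lambda_{|\chi}}[h(x)] = \E_{D_{|\chi}}[h(x)] = \E_{\A_{|\chi}}[h(x)]$. Chernoff--Hoeffding therefore gives an estimate within $\tau/4$ from $O(\tau^{-2}\log(1/\delta))$ filtered noisy examples, exactly as in Theorem~\ref{th:simulate}.

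For the label-linear piece I compute
$$\E_{\A^\Lambda_{|\chi}}[f(x)\ell] = \E_{D_{|\chi}}[f(x)\psi(x)(1-2\Lambda(x))]$$
and split $1-2\Lambda(x) = (1-2\eta) - 2(\Lambda(x)-\eta)$ to express this as $(1-2\eta)\,\E_{\A_{|\chi}}[f(x)\ell]$ plus a residual. A short algebraic rewrite puts this residual into the form of the inner expectation appearing in Definition~\ref{def:uncorrelated}, so the assumed $(\eta,(1-2\eta)\tau/4)$-uncorrelatedness bounds the residual in absolute value by $(1-2\eta)\tau/4$. Estimating $\E_{\A^\Lambda_{|\chi}}[f(x)\ell]$ to within $(1-2\eta)\tau/4$ requires $O(\tau^{-2}(1-2\eta)^{-2}\log(1/\delta))$ filtered noisy examples by Chernoff--Hoeffding; dividing the result by $(1-2\eta)$ then recovers $\E_{\A_{|\chi}}[f(x)\ell]$ within $\tau/2$.

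Combining the two halves via the triangle inequality yields the required tolerance $\tau$. The acceptance step that simulates $D_{|\chi}$ by thinning raw samples from $D$ through $\chi$, together with the multiplicative Chernoff bound giving the $O(\tau_0^{-1}\tau^{-2}(1-2\eta)^{-2}\log(1/\delta))$ unlabeled sample complexity, carries over unchanged from Theorem~\ref{th:simulate}, and the $\log(1/\delta)$ confidence dependence is restored by the median-of-means trick used there. I expect the only non-routine step to be the algebraic check that the residual matches the form in Definition~\ref{def:uncorrelated}; once that is done, every other ingredient is inherited directly from Theorems~\ref{th:simulate} and~\ref{th:simulate-noise}.
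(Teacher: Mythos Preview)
Your proposal is correct and follows essentially the same route as the paper's own proof: decompose $\phi$ via (\ref{eq:query-decompose}), handle the label-independent half directly, and for the label-linear half write $\E_{\A^\Lambda_{|\chi}}[f(x)\ell] = (1-2\eta)\E_{\A_{|\chi}}[f(x)\ell] + \tau'$ with $|\tau'|\le (1-2\eta)\tau/4$ coming from Definition~\ref{def:uncorrelated}, then estimate to tolerance $(1-2\eta)\tau/4$ and divide by $1-2\eta$. The only cosmetic differences are your explicit mention of the median-of-means step and the filtering argument from Theorem~\ref{th:simulate}, which the paper simply cites by saying the estimation ``can be done exactly as in Theorem~\ref{th:simulate}.''
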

\begin{proof}
As in the proof of Theorem \ref{th:simulate-noise}, we note that it is sufficient to estimate the value of
$$\lambda \triangleq \E_{\A_{|\chi}}\left[\frac{1}{2} (\phi(x,1) - \phi(x,-1)) \cdot \ell \right] = \E_{D_{|\chi}}\left[\frac{\phi(x,1) - \phi(x,-1)}{2} \psi(x) \right]$$ within tolerance $\tau/2$ (since $\E_{\A_{|\chi}}[\frac{1}{2} (\phi(x,1) + \phi(x,-1))]$ does not depend on the label and can be estimated as before).
Now
\alequn{ &\E_{\A^\Lambda_{|\chi}}\left[\frac{\phi(x,1) - \phi(x,-1)}{2} \cdot \ell \right] = \E_{D_{|\chi}}\left[\frac{\phi(x,1) - \phi(x,-1)}{2} \cdot \psi(x) \cdot (1-2\Lambda(x)) \right] \\ & = (1-2\eta) \E_{D_{|\chi}}\left[\frac{\phi(x,1) - \phi(x,-1)}{2} \psi(x) \right] + \E_{D_{|\chi}}\left[\frac{\phi(x,1) - \phi(x,-1)}{2} \psi(x) (1-2(\Lambda(x)-\eta)) \right] \\ & =
(1-2\eta) \E_{D_{|\chi}}\left[\frac{\phi(x,1) - \phi(x,-1)}{2} \psi(x) \right] + \tau' = (1-2\eta) \lambda + \tau',
}
where $|\tau'| \leq (1-2\eta)\tau/4$, since $\Lambda$ is $(\eta,(1-2\eta)\tau/4)$-uncorrelated with $\phi$ and $\chi$ over $\A$.

This means that we can estimate $\E_{\A_{|\chi}}[\frac{1}{2} (\phi(x,1) - \phi(x,-1)) \cdot \ell ]$ with tolerance $\tau/2$ by estimating $\E_{\A^\Lambda_{|\chi}}[\frac{1}{2} (\phi(x,1) - \phi(x,-1)) \cdot \ell]$ with tolerance $(1-2\eta)\tau/4$ and then multiplying the result by $1/(1-2\eta)$. The estimation of $\E_{\A^\Lambda_{|\chi}}[\frac{1}{2} (\phi(x,1) - \phi(x,-1)) \cdot \ell]$ with tolerance $(1-2\eta)\tau/4$ can be done exactly as in Theorem \ref{th:simulate}.
\end{proof}
An immediate implication of Theorem \ref{th:simulate-noise-uncorr} is that one can simulate an active SQ algorithm $A$ using examples corrupted by noise $\Lambda$ as long as $\Lambda$ is $(\eta,(1-2\eta)\tau/4)$-uncorrelated with all $A$'s queries of tolerance $\tau$ for some fixed $\eta$.

Clearly, random classification noise of rate $\eta$ has function $\Lambda(x) = \eta$ for all $x \in X$. It is therefore $(\eta,0)$-uncorrelated with any query over any distribution. Another simple type of noise that is uncorrelated with most queries over most distributions is the one where noise function is chosen randomly so that for every point $x$ the noise rate $\Lambda(x)$ is chosen randomly and independently from some distribution with expectation $\eta$ (not necessarily the same for all points). For any fixed query and target distribution, the expected correlation is 0. If the probability mass of every single point of the domain is small enough compared to (the inverse of the logarithm of) the size of space of queries and target distributions then standard concentration inequalities will imply that the correlation will be small with high probability.

We would like to note that the noise models considered here are not directly comparable to the well-studied Tsybakov's and Massart's noise conditions~\cite{bousquet05survey}. However, it appears that from a computational point of view our noise model is significantly more benign than these conditions as they do not impose any structure on the noise and only limit the rate.

\subsection{Simple examples}
\paragraph{Thresholds:}
We show that a classic example of active learning a threshold function on an interval can be easily expressed using active SQs. For simplicity and without loss of generality we can assume that the interval is $[0,1]$ and the distribution is uniform over it. \footnote{As usual, we can bring the distribution to be close enough to this form using unlabeled samples or $O(b/\epsilon)$ target-independent queries, where $b$ is the number of bits needed to represent our examples.} Assume that we know that the threshold $\theta$ belongs to the interval $[a,b] \subseteq [0,1]$. We ask a query $\phi(x,\ell) = (\ell+1)/2$ with filter $\chi(x)$ which is the indicator function of the interval $[a,b]$ with tolerance $1/4$ and filter tolerance $b-a$. Let $v$ be the response to the query. By definition, $\E[\chi(x)] = b-a$ and therefore we have that $|v - \E[\phi(x,\ell) \cond x \in [a,b]]| \leq 1/4$. Note that,
$$\E[\phi(x,\ell) \cond x \in [a,b]] = (b-\theta)/(b-a)\ .$$ We can therefore conclude that $(b-\theta)/(b-a) \in [v-1/4,v+1/4]$ which means that $\theta \in [b - (v+1/4)(b-a), b - (v-1/4)(b-a)] \cap [a,b]$. Note that the length of this interval is at most $(b-a)/2$. This means that after at most $\log_{2}(1/\eps)+1$ iterations we will reach an interval $[a,b]$ of length at most $\eps$. In each iteration only constant $1/4$ tolerance is necessary and filter tolerance is never below $\eps$. A direct simulation of this algorithm can be done using $\log(1/\eps) \cdot \log(\log(1/\eps)/\delta)$ labeled examples and $\tilde{O}(1/\eps) \cdot \log(1/\delta)$ unlabeled samples.

\paragraph{Axis-aligned rectangles:}
Next we show that learning of thresholds can be used to obtain a simple algorithm for learning axis-aligned rectangles whose weight under the target distribution is not too small. Namely, we assume that the target function satisfies that $\E_D[f(x)] \geq \beta$. In the one dimensional case, we just need to learn an interval. After scaling the distribution to be uniform on $[0,1]$ we know that the target interval $[\theta_1,\theta_2]$ has length at least $\beta$. We first need to find a point inside that interval. To do this we consider the $2/\beta$ intervals $[(i-1)\beta/2,i\beta/2]$ for $1\leq i\leq 2/\beta$. At least one of these intervals in fully included in $[\theta_1,\theta_2]$. Hence using an active statistical query with query function $\phi(x,\ell) = (\ell+1)/2$ conditioned on being in interval $[(i-1)\beta/2,i\beta/2]$ for each $1\leq i\leq 2/\beta$ and with tolerance $1/4$ we are guaranteed to find an interval for which the answer is at least $3/4$. The midpoint of any interval for which the answer to the query is at least $3/4$ must be inside the target interval. Let the midpoint be $a$. We can now use two binary searches with accuracy $\eps/2$ to find the lower and upper endpoints of the target interval in the intervals $[0,a]$ and $[a,1]$, respectively. This will require $2/\beta + \log_{2}(2/\eps)$ active SQs of tolerance $1/4$. As usual, the $d$-dimensional axis-aligned rectangles can be reduced to $d$ interval learning problems with error $\eps/d$~\cite{KearnsVazirani:94}. This gives an active statistical algorithm using  $2d/\beta + \log_{2}(2d/\eps)$ active SQs of tolerance $1/4$ and filter tolerance $\geq \min\{\beta/2,\eps/2\}$.
%%Building on this one can easily learn unions of two rectangles that are sufficiently separated.
 %%Now assume are trying to learning a union of two rectangles R_1 union R2 that are sufficiently separated. Then this means they should be sufficiently
 %% separated in one axis.
 %%So, we all we have to do from a small number of  labeled examples, by applying alg A on all axeses we can determine on axis where there is separation, that %% is we find an hyperplane H perpendicular to that axes that separates R_1 from R_2.
 %% We then just split the problem into two problems, to the left of  H and to the right of H -- we can do this by using filtering. In each side we just have a %% rectangle problem to solve.

\paragraph{${\bf A^2}:$}
We now note that the general and well-studied $A^2$ algorithm of~\cite{BBL06} falls naturally into our framework.
At a high level, the  $A^2$ algorithm is an iterative, {\em disagreement-based}  active learning algorithm.  It maintains a set of surviving classifiers $C_i \subseteq C$,  and in each round the algorithm asks for the labels of a few random points that fall in the current region of disagreement of the surviving classifiers. Formally, the region of disagreement $\DIS(\C_i)$ of a set of classifiers $\C_i$ is the of set of instances $x$ such that for each  $x \in \DIS(\C_i)$ there exist  two classifiers $f,g \in \C_i$ that disagree about the label of $x$. Based on the queried labels, the algorithm then eliminates hypotheses that were still under consideration, but only if it is {\em statistically confident} (given the labels queried in the last round) that they are suboptimal. In essence,
in each round $A^2$ only needs to estimate the error rates (of hypotheses still under consideration) under the conditional distribution of being in the region of disagreement. The key point is that this can be easily done via active statistical queries. Note that while the number of active statistical queries needed to do this could be large, the number of labeled examples needed to simulate these queries is essentially the same as the number of labeled examples needed by the known $A^2$ analyses~\cite{Hanneke07,hanneke:survey}.
 %$A^2$ solves a sequence of standard passive learning tasks relative to the conditional distribution of being in the %disagreement region; furthermore, it only interacts with the conditional distribution given the region of %disagreement by only estimating error rates. Hence using a passive statistical algorithm for maintaining the %hypothesis space one obtains an active statistical implementation of $A^2$.
While in general the required computation of the disagreement region and manipulations of the hypothesis space cannot be done efficiently, efficient implementation is possible in a number of simple cases such as when the VC dimension of the concept class is a constant. It is not hard to see that in these cases the implementation can also be done using a statistical algorithm.

\section{Learning halfspaces with respect to log-concave distributions}
\label{sec:hs-logc}
%In this section we present a reduction from active learning to passive learning of homogeneous linear separators.
% Combining it with the SQ algorithm for learning halfspaces in the passive learning setting due to Dunagan and Vempala~\cite{DunaganVempala:04}, we obtain the first efficient noise-tolerant active learning of homogeneous halfspaces for any isotropic log-concave distribution. Our master reduction proceeds in rounds, and in each round $t$ we build a better approximation $w_t$ to the target function by using the~\cite{DunaganVempala:04} algorithm and active statistical queries that involve a real valued filter (that corresponds to a mixture of distributions that are conditional distributions within progressively refined bands around the previous approximations $w_i$).
%
%Though the procedure is a bit different from the classic margin based technique analysed in~\cite{BalcanBZ:07,BalcanLong:13} (that would translate into using a boolean filter corresponding to a conditional distribution around a band of the previous guess of the target), our proof technique builds on that of~\cite{BalcanBZ:07,BalcanLong:13}.
%In fact, one of the key point of this result is that it is relatively easy to harness the involved results developed for SQ framework to obtain new active statistical algorithms.

In this section we present a reduction from active learning to passive learning of homogeneous linear separators under log-concave distributions.
Combining it with the SQ algorithm for learning halfspaces in the passive learning setting due to Dunagan and Vempala~\cite{DunaganVempala:04}, we obtain the first efficient noise-tolerant active learning of homogeneous halfspaces for any isotropic log-concave distribution.

Our reduction proceeds in rounds;  in round $t$ we build a better
approximation $w_t$ to the target function by using the passive SQ
learning algorithm ~\cite{DunaganVempala:04} over a distribution $D_t$
that is a mixture of distributions in which each component is the
original distribution conditioned on being within a certain distance
from the hyperplane defined by previous approximations $w_i$. To
perform passive statistical queries relative to $D_t$ we use active
SQs with a corresponding real valued filter.
Our analysis builds on the analysis of the margin-based algorithms due to~\cite{BalcanBZ:07,BalcanLong:13}.
However, note that in the standard
margin-based analysis only points close to the current hypothesis
$w_t$ are queried in round $t$. As a result the analysis of our
algorithm is somewhat different from that in earlier work
~\cite{BalcanBZ:07,BalcanLong:13}.

\subsection{Preliminaries}
For a unit vector $v\in \R^d$ we denote by $h_v(x)$ the function defined by the homogenous hyperplane orthogonal to $v$, that is $h_v(x) = \sgn(\la v , x \ra)$. Let $\cH_d$ denote the concept class of all homogeneous halfspaces.

\begin{definition}
 A distribution over $\R^d$ is log-concave if $\log f( \cdot )$ is concave, where $f$ is its associated density function.
 It is isotropic if its mean is the origin and its covariance matrix  is the identity.
\end{definition}
Log-concave distributions form a broad class of distributions:
for example, the Gaussian, Logistic, Exponential,
and uniform distribution over any convex set are log-concave distributions.

Next, we state several simple properties of log-concave densities from \cite{LovaszVempala:07}.
\begin{lemma}
\label{lemma:logc-project}
There exists a constant $c_m$ such that for any isotropic log-concave
distribution $D$ on $\R^d$,  every unit vector $v$ and $a \in [0,1]$,
$$c_m a \leq \pr_D[x \cdot v \in [-a,a]]\leq 2a .$$
\end{lemma}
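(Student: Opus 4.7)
The proof plan is to reduce to a one-dimensional statement and then invoke the standard pointwise density bounds for isotropic log-concave distributions.

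First I would observe that it suffices to bound $\pr_y[y \in [-a,a]]$, where $y = \la v, x\ra$ is the one-dimensional marginal of $D$ along the direction $v$. Two standard preservation facts make this reduction clean: (i) linear marginals of log-concave distributions are log-concave (since a log-concave density integrated against Lebesgue measure over an affine subspace remains log-concave, by the Prékopa--Leindler inequality), and (ii) because $D$ is isotropic and $v$ is a unit vector, this marginal has mean $\E[\la v,x\ra]=0$ and variance $\la v, \text{Cov}(D)\, v\ra = \|v\|^2 = 1$. Thus the induced density $f$ on $\R$ is isotropic (i.e.\ zero-mean, unit-variance) and log-concave.

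For the upper bound I would invoke the standard fact (Lemma 5.5(a) of Lovász--Vempala \cite{LovaszVempala:07}) that any one-dimensional isotropic log-concave density satisfies $\sup_t f(t) \le 1$. Hence
\[
\pr_D[\la v,x\ra \in [-a,a]] \;=\; \int_{-a}^{a} f(t)\,dt \;\le\; 2a.
\]

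For the lower bound I would use the complementary Lovász--Vempala bound that an isotropic log-concave density is uniformly bounded below on a neighborhood of the origin; concretely, there is an absolute constant $c > 0$ such that $f(t) \ge c$ for all $|t| \le 1/2$ (this follows by combining the lower bound $f(0) \ge 1/8$ from \cite{LovaszVempala:07} with log-concavity and the unit-variance normalization, which prevents $\log f$ from decaying faster than a fixed linear rate on a fixed neighborhood of $0$). Since $a \in [0,1]$, we have $[-a,a]$ intersected with $[-1/2,1/2]$ has measure at least $\min(2a,1) \ge a$ (or one can simply apply the argument for $a \le 1/2$ and absorb the remaining case $a\in(1/2,1]$ into the constant by monotonicity). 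Thus
\[
\pr_D[\la v,x\ra \in [-a,a]] \;=\; \int_{-a}^{a} f(t)\,dt \;\ge\; c \cdot a,
\]
and setting $c_m$ to be this constant $c$ (adjusted by the factor lost when handling $a\in(1/2,1]$) yields the claim.

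The only mild subtlety — and what I would identify as the ``work'' in this proof — is the lower-bound step: one needs that the density at $0$ is bounded away from $0$ and that log-concavity together with the unit-variance normalization prevents $f$ from collapsing on a neighborhood of $0$ of fixed size. Both facts are standard consequences of the structural lemmas in \cite{LovaszVempala:07}, so the cleanest presentation is simply to cite them. Everything else is bookkeeping of constants.
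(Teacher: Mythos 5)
The paper does not actually prove this lemma: it is introduced with the sentence ``Next, we state several simple properties of log-concave densities from \cite{LovaszVempala:07}'' and given without proof, as a cited fact. Your reconstruction is the standard argument underlying that citation and it is correct in structure: pass to the one-dimensional marginal $y = \la v,x\ra$, note that it is again log-concave (Pr\'ekopa) and, by isotropy, has mean $0$ and variance $1$; then the upper bound is the Lov\'asz--Vempala pointwise bound $\sup_t f(t)\le 1$ for one-dimensional isotropic log-concave densities, and the lower bound comes from $f$ being bounded below by an absolute constant on a fixed neighborhood of the origin, plus the observation that for $a\in(c_2,1]$ one can fall back to $\pr[|y|\le a]\ge \pr[|y|\le c_2]$ and absorb the loss into $c_m$.

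The one place to be careful is the specific assertion that $f(t)\ge c$ for all $|t|\le 1/2$. That particular radius is not what Lov\'asz--Vempala state (their pointwise lower bound $f(t)\ge 1/8$ holds on a smaller fixed interval, and a density such as the mean-zero unit-variance shifted exponential shows that $f$ need not be large far from its mode), and deducing a bound on an interval of your chosen radius from $f(0)\ge 1/8$, log-concavity, and unit variance alone would require an additional argument controlling how fast $\log f$ can fall off. Fortunately your proof does not depend on the radius being $1/2$: any fixed radius $c_2>0$ for which $f\ge c_1$ on $[-c_2,c_2]$ suffices, since for $a\le c_2$ you get $\pr[|y|\le a]\ge 2c_1 a$ and for $a\in(c_2,1]$ monotonicity gives $\pr[|y|\le a]\ge 2c_1c_2\ge 2c_1c_2\, a$. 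So just cite the Lov\'asz--Vempala lower bound with whatever interval they actually prove it for, set $c_m = 2c_1c_2$, and the proof goes through.
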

\begin{lemma}
\label{lemma:angle}
There exists a constant $c$ such that for any isotropic log-concave $D$ on $\R^d$ and any two unit vectors $u$ and $v$ in $\R^d$  we have
$ c \anglesep(u,v) \leq \E_D[h_u(x) \neq h_v(x)]$, where $\anglesep(u,v)$ denotes the angle between $u$ and $v$.
\end{lemma}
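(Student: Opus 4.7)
The plan is to reduce to two dimensions and then exploit the fact that any isotropic log-concave density on $\R^2$ is pointwise bounded below by an absolute constant on a disk of constant radius around the origin. The entire argument is then a one-line area computation on that disk.

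\textbf{Step 1: Reduction to $d=2$.}
The event $\{h_u(x) \neq h_v(x)\}$ depends on $x$ only through its orthogonal projection onto the plane $P := \mathrm{span}(u,v)$. By Prékopa-Leindler, the marginal of a log-concave density on any subspace is log-concave, and isotropy of $D$ on $\R^d$ passes to isotropy of the marginal on $P$ (since $\mathrm{Cov}(D)=I$ restricted to $P$ is $I_P$). It therefore suffices to prove the statement for an isotropic log-concave distribution on $\R^2$ with $u,v$ unit vectors in $\R^2$.

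\textbf{Step 2: Geometry.}
Let $\theta := \anglesep(u,v) \in [0,\pi]$. In polar coordinates on $\R^2$, the symmetric difference of the two open halfplanes $\{\la u,x\ra>0\}$ and $\{\la v,x\ra>0\}$ is a pair of opposite open circular sectors (a ``bowtie''), each of angular width exactly $\theta$. Consequently, for any $r_0>0$, the intersection of the disagreement region with the disk $B_{r_0}(0)$ has Lebesgue measure $2\cdot\tfrac{1}{2}r_0^2\theta = r_0^2\theta$, valid for every $\theta\in[0,\pi]$.

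\textbf{Step 3: Density lower bound on a neighborhood of $0$.}
Appeal to the standard fact (\cite{LovaszVempala:07}) that every isotropic log-concave density $f$ on $\R^2$ satisfies $f(0)\geq c_0$ for an absolute constant $c_0>0$. Because $\log f$ is concave and finite at $0$, $f$ is continuous at $0$, so there exists an absolute constant $r_0>0$ with $f(x)\geq c_0/2$ for all $x\in B_{r_0}(0)$.

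\textbf{Step 4: Conclude.}
Combining Steps 2 and 3,
$$\pr_D[h_u(x)\neq h_v(x)] \;\geq\; \int_{\text{bowtie}\cap B_{r_0}(0)} f(x)\,dx \;\geq\; \frac{c_0}{2}\cdot r_0^2\,\theta \;=\; c\,\theta,$$
with $c := c_0 r_0^2/2$ independent of $u,v,d$.

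\textbf{Main obstacle.}
The only nontrivial input is Step 3, the universal lower bound on $f(0)$ for isotropic log-concave densities; this is the one place that genuinely uses isotropy, and it is standard but not elementary (it is the classical estimate appearing in Lovász-Vempala). Everything else---projection to $\mathrm{span}(u,v)$, closure of isotropic log-concavity under marginals, and the area of a sector---is routine. Note that no case split on $\theta$ is needed: the formula $r_0^2\theta$ for the area of the bowtie inside $B_{r_0}(0)$ is correct for every $\theta\in[0,\pi]$.
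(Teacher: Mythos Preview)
The paper does not actually prove this lemma; it is stated in the preliminaries as a known property of isotropic log-concave distributions, cited from \cite{LovaszVempala:07}. Your reduction to the two-dimensional span of $u$ and $v$ via marginals (Step~1) and the bowtie-area computation (Step~2) are correct and constitute the standard route to this inequality.

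There is, however, a real gap in Step~3. You correctly invoke the absolute lower bound $f(0)\geq c_0$, but then argue: ``$\log f$ is concave and finite at $0$, so $f$ is continuous at $0$, so there exists an \emph{absolute} constant $r_0>0$ with $f(x)\geq c_0/2$ on $B_{r_0}(0)$.'' Continuity of a fixed $f$ at $0$ yields a radius that depends on that particular $f$; it does \emph{not} give a radius uniform over the entire class of isotropic log-concave densities on $\R^2$, which is exactly what you need for the final constant $c=c_0 r_0^2/2$ to be absolute. Without uniformity in $r_0$, Step~4 only yields a constant depending on $D$. The repair is easy but requires citing a stronger fact from \cite{LovaszVempala:07}: every isotropic log-concave density on $\R^n$ is bounded below by an absolute constant on a ball of absolute-constant radius around the origin (equivalently, one combines the lower bound on $f(0)$ with the absolute upper bound on $\sup f$ and uses log-concavity to control the rate of decay). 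With that in hand, your Steps~2 and~4 finish the proof as written.
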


For our applications the key property of log-concave densities proved in ~\cite{BalcanLong:13} is given in the following lemma.
\begin{lemma}
\label{lemma:vectors-sophist}
For any constant $c_1 > 0$, there exists a constant $c_2 > 0$ such that the
following holds.
Let $u$ and $v$ be two unit vectors in $\R^d$, and assume that
$\theta(u,v) = \alpha < \pi/2$.  Assume that $D$
is isotropic log-concave in $\R^d$.  Then
\begin{equation}
\label{e:largemargin}
\pr_D [ h_u(x) \neq h_v(x)
\mbox{ and }
| v \cdot x| \geq c_2 \alpha]
 \leq c_1 \alpha.
\end{equation}
\end{lemma}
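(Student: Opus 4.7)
The strategy is to reduce to two dimensions, set up polar coordinates, and observe that the stated event forces $\|x\|$ to be large, so that exponential tails of log-concave densities control the probability.

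\emph{Step 1 (reduction to 2D).} Note that the event only depends on the projection of $x$ onto $W := \mathrm{span}(u,v)$. Marginals of isotropic log-concave distributions are isotropic log-concave (by Prékopa's theorem, together with the fact that the marginal of an isotropic distribution on a subspace is again isotropic). So it suffices to prove the statement when $D$ is an isotropic log-concave distribution on $\reals^2$. Choose coordinates on $W$ so that $v=(1,0)$ and $u=(\cos\alpha,\sin\alpha)$.

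\emph{Step 2 (geometry of the event).} The set $\{h_u(x)\neq h_v(x)\}$ is the union of two opposite wedges of angular width $\alpha$; by symmetry it is enough to bound the contribution from the wedge $W_\alpha := \{(r\cos\theta,r\sin\theta) : r\ge 0,\ \theta\in(-\pi/2,-\pi/2+\alpha)\}$. On this wedge $|v\cdot x| = r|\cos\theta| \le r\sin\alpha \le r\alpha$, so imposing $|v\cdot x|\ge c_2\alpha$ forces $r\ge c_2$. Hence the event is contained in $W_\alpha \cap \{\|x\|\ge c_2\}$ (plus its symmetric counterpart).

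\emph{Step 3 (tail bound).} Use the standard fact (from \cite{LovaszVempala:07}) that any isotropic log-concave density on $\reals^2$ is bounded above by $C_1 e^{-\kappa \|x\|}$ for some absolute constants $C_1,\kappa>0$ (the density is bounded near the origin and has subexponential tails). Integrating in polar coordinates,
\begin{align*}
\pr_D\bigl[W_\alpha \cap \{\|x\|\ge c_2\}\bigr]
& \le \int_{-\pi/2}^{-\pi/2+\alpha}\!\int_{c_2}^{\infty} C_1\, e^{-\kappa r}\, r\, dr\, d\theta
\le C_1\, \alpha \cdot g(c_2),
\end{align*}
where $g(c_2)=\int_{c_2}^\infty r e^{-\kappa r}\, dr = (c_2/\kappa + 1/\kappa^2)e^{-\kappa c_2}\to 0$ as $c_2\to\infty$. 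Adding the symmetric wedge contributes another factor of $2$. Given the target constant $c_1>0$, pick $c_2$ large enough that $2C_1\, g(c_2)\le c_1$; this yields the claimed bound.

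\emph{Main obstacle.} The only delicate point is justifying the density tail bound $f(x)\le C_1 e^{-\kappa\|x\|}$ with absolute constants in the 2D marginal; this is a known consequence of the Lovász--Vempala toolkit and could alternatively be replaced by integrating the 1D marginal bounds in each direction using Lemma~\ref{lemma:logc-project}. Once that ingredient is in hand, everything else is routine trigonometry and a single polar-coordinate integral, with no dependence on $d$ because of the 2D reduction.
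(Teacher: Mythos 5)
The paper does not prove Lemma~\ref{lemma:vectors-sophist} at all: it is imported verbatim from \cite{BalcanLong:13} (the paper explicitly says ``the key property of log-concave densities proved in \cite{BalcanLong:13}''). So there is no in-paper proof to compare against; I am evaluating your argument on its own.

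Your proof is correct. The reduction to two dimensions in Step~1 is sound (the event depends only on $u\cdot x$ and $v\cdot x$, hence only on the projection onto $\mathrm{span}(u,v)$; Pr\'ekopa gives log-concavity of the marginal, and isotropy restricts trivially to identity covariance on the subspace). The geometry in Step~2 is right: writing $v=(1,0)$, $u=(\cos\alpha,\sin\alpha)$, the disagreement region is the pair of antipodal wedges $\theta\in(-\pi/2,-\pi/2+\alpha)$ and $\theta\in(\pi/2,\pi/2+\alpha)$, on which $|v\cdot x|=r|\cos\theta|\le r\sin\alpha\le r\alpha$, so $|v\cdot x|\ge c_2\alpha$ forces $r\ge c_2$. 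The polar integral and the choice of $c_2$ in Step~3 are elementary once the density bound is in hand. The one external ingredient you lean on, the pointwise bound $f(x)\le C_1 e^{-\kappa\|x\|}$ for a two-dimensional isotropic log-concave density with absolute constants $C_1,\kappa$, is indeed a known consequence of the Lov\'asz--Vempala toolkit (boundedness of $\|f\|_\infty$ in fixed dimension together with exponential decay of the density along rays, both of which hold with dimension-dependent constants that are absolute once $n=2$). I would recommend citing the specific Lov\'asz--Vempala lemma(s) rather than gesturing at ``the toolkit,'' since the quantitative form of that bound is exactly where the constant $c_2$ comes from; but the argument itself is complete and of the same flavor as the proof in \cite{BalcanLong:13}.
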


We now state the passive SQ algorithm for learning halfspaces which will be the basis of our active SQ algorithm.
\newcommand{\sqhs}{{\tt LearnHS}}
\begin{theorem}
\label{thm:dv-logc}
There exists a SQ algorithm \sqhs\ that learns $\cH_d$ to accuracy $1-\eps$ over any distribution $D_{|\chi}$, where $D$ is an isotropic log-concave distribution and $\chi:\R^d \rightarrow [0,1]$ is a filter function. Further \sqhs\ outputs a homogeneous halfspace, runs in time polynomial in $d$,$1/\eps$ and $\log(1/\lambda)$ and uses SQs of tolerance $ \geq 1/\poly(d,1/\eps,\log(1/\lambda))$, where $\lambda = \E_D[\chi(x)]$.
\end{theorem}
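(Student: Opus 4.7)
The plan is to invoke the Dunagan--Vempala passive SQ algorithm for learning homogeneous halfspaces~\cite{DunaganVempala:04}, which operates over any isotropic log-concave distribution using $\poly(d,1/\eps)$ SQs of tolerance $1/\poly(d,1/\eps)$. The difficulty is that here the target distribution is $D_{|\chi}$, which need not be log-concave even though $D$ is.

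A first, direct approach is to apply DV to $D$ itself rather than to $D_{|\chi}$. In the realizable case the target $h_w$ is a fixed homogeneous halfspace consistent with the labels on all of $D$'s support, and the elementary inequality
\[
\pr_{D_{|\chi}}[h_{\hat w} \neq h_w] \;=\; \frac{\E_D[\chi(x)\cdot \mathbf{1}[h_{\hat w}(x) \neq h_w(x)]]}{\lambda} \;\leq\; \frac{\pr_D[h_{\hat w} \neq h_w]}{\lambda}
\]
shows that learning $h_w$ to accuracy $\eps\lambda$ over $D$ already yields accuracy $\eps$ over $D_{|\chi}$. Each SQ of DV over $D$ can in turn be expressed via an SQ over $D_{|\chi}$ through $\E_D[\phi]=\lambda\cdot \E_{D_{|\chi}}[\phi/\chi]$, though care is needed where $\chi$ is small. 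This approach is sound but the resulting tolerance is $1/\poly(d, 1/(\eps\lambda))$, which is polynomial in $1/\lambda$ rather than the claimed logarithmic dependence.

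To sharpen the $\lambda$-dependence to $\log(1/\lambda)$, I would instead run DV directly on $D_{|\chi}$ and re-derive its convergence analysis by exploiting the log-concavity of the ambient $D$. Concretely, I would verify that the concentration and anti-concentration estimates DV relies on (of the flavor of Lemmas~\ref{lemma:logc-project}--\ref{lemma:vectors-sophist}) transfer to $D_{|\chi}$ with only $\log(1/\lambda)$-factor overhead in the required query precision. The normalization $\lambda$ would enter only as a multiplicative constant; once $\lambda$ is estimated by a single target-independent SQ to sufficient multiplicative precision (which requires only $\log(1/\lambda)$ bits), its role can be absorbed into the per-query tolerance parameters. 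The main obstacle is exactly this re-analysis: each iterative update of DV must be shown to make geometric progress when measured on $D_{|\chi}$ using only the log-concavity of $D$ and not of $D_{|\chi}$ itself, while preserving the algebraic form of DV's updates so that the final output is a single homogeneous halfspace.
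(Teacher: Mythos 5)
Your proposal correctly identifies that the naive reduction (run DV over $D$, aim for accuracy $\eps\lambda$) only gives polynomial $1/\lambda$ dependence, and that the algorithm should instead be run directly on $D_{|\chi}$. But the plan you give for the second route rests on a misconception about what DV actually requires. You write that DV ``operates over any isotropic log-concave distribution'' and that the main obstacle is to re-derive its convergence analysis for the non-log-concave $D_{|\chi}$ by showing that concentration/anti-concentration estimates transfer. In fact the DV SQ algorithm learns homogeneous halfspaces over an \emph{arbitrary} distribution; log-concavity plays no role in the algorithm's correctness. The parameter that controls DV's tolerance and running time is the (inverse logarithm of the) margin $\rho$ around the target hyperplane, and the dependence is only on $\log(1/\rho)$. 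No re-analysis of DV's geometric progress over $D_{|\chi}$ is needed or attempted in the paper, and the re-analysis you propose as ``the main obstacle'' would be both unnecessary and substantially harder than the actual argument.

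The paper's proof instead introduces $\gamma(D,\delta)$ --- the largest margin whose probability mass around any homogeneous hyperplane is at most $\delta$ --- and an associated threshold $\rho_1(D,\eps)$, and shows (Theorem~\ref{thm:dv}) that DV learns over any distribution with tolerance $\geq 1/\poly(d,1/\eps,\log(1/\rho_1))$ by conditioning away a small-margin band and absorbing its mass into the query tolerance. Two further ingredients finish the proof: the elementary inequality $\pr_{D_{|\chi}}[\Lambda] \leq \pr_D[\Lambda]/\lambda$, which immediately yields $\gamma(D_{|\chi},\delta) \geq \gamma(D,\delta\lambda)$; and Lemma~\ref{lem:log-concave-margin}, which lower-bounds $\gamma(D,\delta)$ for isotropic log-concave $D$ by roughly $\delta/\log(1/\delta)$ using Lemma~\ref{lemma:logc-project} together with a tail bound on $\|x\|$. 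Combining these, $\rho_1(D_{|\chi},\eps)$ degrades only by a multiplicative factor of roughly $\lambda$ relative to the log-concave case, and since DV's complexity depends only on $\log(1/\rho_1)$, this is exactly where the claimed $\log(1/\lambda)$ dependence comes from. So the crucial missing idea in your write-up is that the $\log(1/\lambda)$ factor is a consequence of the logarithmic margin-dependence of DV together with the trivial change-of-measure bound for $D_{|\chi}$, not of any transferred concentration estimates.
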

We use the Dunagan-Vempala algorithm for learning halfspaces to prove this algorithm \cite{DunaganVempala:04}. The bounds on the complexity of the algorithm follow easily from the properties of log-concave distributions. Further details of the analysis and related discussion appear in Appendix \ref{app-dv}.
\subsection{Active learning algorithm}
\newcommand{\asqlogc}{{\tt ActiveLearnHS-LogC}}
\begin{theorem}
\label{RCS-log}
There exists an active SQ algorithm \asqlogc\ (Algorithm \ref{fig:active-sq-half}) that for any isotropic log-concave distribution $D$ on $\R^d$, learns  $\cH_d$ over $D$ to accuracy $1-\eps$ in time $\poly(d,\log(1/\eps))$ and using active SQs of tolerance $\geq 1/\poly(d,\log(1/\eps))$ and filter tolerance $\Omega(\eps)$.
\end{theorem}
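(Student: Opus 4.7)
The plan is to construct \asqlogc\ as an iterative refinement procedure that maintains the invariant $\alpha_t \triangleq \theta(w_t, w^*) \le \alpha_{t-1}/2$ for $T = O(\log(1/\eps))$ rounds, starting from $\alpha_0$ equal to a small absolute constant. Once $\alpha_T = O(\eps)$, Lemma~\ref{lemma:angle} yields $\Pr_D[h_{w_T}(x)\ne h_{w^*}(x)] = O(\eps)$. The initial $w_0$ is produced by running the Dunagan--Vempala algorithm \sqhs\ of Theorem~\ref{thm:dv-logc} on $D$ itself (trivial filter, $\lambda=1$) to constant accuracy. For each subsequent round $t\ge 1$, given $w_{t-1}$, I would pick a band width $b_t = \Theta(\alpha_{t-1})$, define $\chi_i(x) = \mathbf{1}[|w_{i-1}\cdot x| \le b_i]$, and let $D_t = \sum_{i\le t} p_i\, D_{|\chi_i}$ be a mixture over the current and earlier bands. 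Since $D_t$ equals $D$ reweighted by the composite real-valued filter $\chi'_t(x) \propto \sum_i p_i\chi_i(x)/\E_D[\chi_i]$, Theorem~\ref{thm:dv-logc} applies and \sqhs\ on $D_t$ returns $w_t$ with $\err_{D_t}(w_t)\le \eps'$ for a small absolute constant $\eps'$. Each passive SQ against $D_t$ is simulated by $t$ active SQs on $\A$, one per mixture component, each with filter $\chi_i$ of filter tolerance $\E_D[\chi_i] = \Theta(\alpha_{i-1}) = \Omega(\eps)$ and query tolerance $1/\poly(d,\log(1/\eps))$ inherited from Theorem~\ref{thm:dv-logc}.

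The core analytic step is the one-round improvement lemma. I would split
\begin{align*}
\Pr_D[h_{w_t}\ne h_{w^*}] &= \Pr_D[h_{w_t}\ne h_{w^*},\,|w_{t-1}\cdot x|\le b_t] \\ &\quad + \Pr_D[h_{w_t}\ne h_{w^*},\,|w_{t-1}\cdot x|>b_t].
\end{align*}
The first summand is at most $\Pr_D[|w_{t-1}\cdot x|\le b_t]\cdot \err_{D_{|\chi_t}}(w_t) = O(\alpha_{t-1}\,\eps')$ by Lemma~\ref{lemma:logc-project} and the mixture's weight on $\chi_t$. For the second, I would use the pointwise inequality $\mathbf{1}[h_{w_t}\ne h_{w^*}] \le \mathbf{1}[h_{w_t}\ne h_{w_{t-1}}] + \mathbf{1}[h_{w_{t-1}}\ne h_{w^*}]$ and apply Lemma~\ref{lemma:vectors-sophist} twice: once to $(w_{t-1},w^*)$ at angle $\alpha_{t-1}$, giving an $O(\alpha_{t-1})$ bound, and once to $(w_t,w_{t-1})$, where the coarser bands $\chi_{t-1},\ldots$ included in $D_t$ force $w_t$ to have at most constant-factor more error than $w_{t-1}$ on them, which inductively pins $\theta(w_t,w_{t-1}) = O(\alpha_{t-1})$ so that the hypothesis of Lemma~\ref{lemma:vectors-sophist} holds for $b_t$ a sufficiently large multiple of $\alpha_{t-1}$. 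Summing both outside-band contributions yields $\Pr_D[h_{w_t}\ne h_{w^*}] = O(\alpha_{t-1})$, and Lemma~\ref{lemma:angle} then delivers $\alpha_t \le \alpha_{t-1}/2$ once $\eps'$ and the geometric constants are tuned small enough.

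The main obstacle is precisely the control of $\theta(w_t,w_{t-1})$: the guarantee of \sqhs\ on the single band $D_{|\chi_t}$ alone does not prevent the disagreement wedge of $w_t$ and $w^*$ from lying almost entirely outside the band, which would invalidate the outside-band decomposition. Mixing in the earlier, coarser bands is the fix: their relative mass forces $w_t$ to inherit an $O(\alpha_{t-1})$-angle bound from the previous round's invariant, and this is where the analysis departs from single-band margin-based arguments as in~\cite{BalcanBZ:07,BalcanLong:13}. Once this is in place, the remaining bookkeeping is routine: with all $T = O(\log(1/\eps))$ bands satisfying $\E_D[\chi_i] = \Omega(\eps)$, the total number of active SQs is $\poly(d,\log(1/\eps))$, their query tolerances are $1/\poly(d,\log(1/\eps))$, and the smallest filter tolerance is $\Omega(\eps)$, matching the theorem.
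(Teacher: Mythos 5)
Your proposal follows essentially the same architecture as the paper: iterate margin-based refinement, filter with a mixture of the current and all earlier margin bands, run \sqhs\ on the resulting conditional distribution, and split the error of the new hypothesis into inside-band and outside-band contributions, with Lemma~\ref{lemma:vectors-sophist} controlling the latter. Most importantly, you correctly pinpoint the central subtlety that distinguishes this argument from single-band margin analyses: \sqhs\ on the current band alone does not prevent $w_t$ from being far from $w_{t-1}$, and one must mix in the earlier bands so that errors measured there pin down $\theta(w_t, w_{t-1}) = O(\alpha_{t-1})$. That is exactly the insight the paper's inductive invariant (``every $\hat{w}$ with $\err_{D_{|\mu_i}}(\hat{w}) \leq C_2$ for all $i\le k$ satisfies $\err_D(\hat{w}) \le c/2^k$'') is designed to capture.

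There is, however, a genuine gap in the bookkeeping that you leave unresolved, and it is not routine. You run \sqhs\ to a fixed constant accuracy $\eps'$ on the mixture $D_t = \sum_{i\le t} p_i D_{|\chi_i}$, and then need two things: that $\err_{D_{|\chi_t}}(w_t) = O(\eps')$ (for the inside-band term), and that the error of $w_t$ on \emph{each} earlier band $D_{|\chi_i}$, $i < t$, is bounded by a fixed constant (so that the previous round's invariant forces $\theta(w_t, w_{t-1}) = O(\alpha_{t-1})$). From $\err_{D_t}(w_t) \le \eps'$ you can only conclude $\err_{D_{|\chi_i}}(w_t) \le \eps'/p_i$, and since $\sum_i p_i = 1$ over $t$ components, you cannot make all $p_i$ bounded below by a constant once $t = \Theta(\log(1/\eps))$ grows. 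With uniform weights $p_i = 1/t$ and constant $\eps'$, the per-band error guarantee degrades to $t\eps'$, which blows up. The paper closes exactly this hole by running \sqhs\ with accuracy $C_2/k$ in round $k$: then the average of $k$ nonnegative per-band errors is $\le C_2/k$, forcing every individual per-band error to be $\le C_2$, which is the constant needed for both the inside-band bound and the angle control. So your argument needs a shrinking per-round accuracy parameter (or an equivalent device); the statement that \sqhs\ can be run to a ``small absolute constant $\eps'$'' each round does not suffice. The rest of your outline -- initialization via \sqhs\ to constant error, the inside/outside split, the two applications of Lemma~\ref{lemma:vectors-sophist}, the filter-tolerance bound $\Omega(\eps)$ from Lemma~\ref{lemma:logc-project} -- matches the paper's proof.
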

\begin{algorithm}
\caption{\asqlogc: Active SQ learning of homogeneous halfspaces over isotropic log-concave densities}
\begin{algorithmic}[1]
\STATE {\tt \%\% Constants $c$, $C_1$, $C_2$ and $C_3$ are determined by the analysis.}
\STATE Run \sqhs\ with error $C_2$ to obtain $w_0$.
\FOR{$k=1$ to $s =\lceil\log_2(1/(c\epsilon))\rceil$}
\STATE Let $b_{k-1} = C_1/2^{k-1}$
\STATE Let $\mu_k$ equal the indicator function of being within margin $b_{k-1}$ of $w_{k-1}$
\STATE Let $\chi_k = (\sum_{i\leq k} \mu_i)/k$
\STATE Run \sqhs\ over $D_k = D_{|\chi_k}$ with error $C_2/k$ by using active queries with filter $\chi_k$ and filter tolerance $C_3 \eps$ to obtain $w_k$
\ENDFOR
\RETURN $w_s$
\end{algorithmic}
\label{fig:active-sq-half}
\end{algorithm}
\begin{proof}
Let $c$ be the constant given by Lemma \ref{lemma:angle} and let $C_1$ be the constant $c_2$ given by Lemma \ref{lemma:vectors-sophist} when $c_1 = c/16$. Let $C_2 = c/(8C_1)$ and $C_3 = c_m \cdot C_2 \cdot c$. For every $k \leq s =\lceil\log_2(1/(c\epsilon))\rceil$ define $b_k = C_1/2^k$. Let $h_w$ denote the target halfspace and for any unit vector $v$ and distribution $D'$ we define $\err_{D'}(v) = \pr_{D'}[h_w(x) \neq h_v(x)]$.

We define $w_0, w_1,\ldots, w_s$ via the iterative process described in Algorithm~\ref{fig:active-sq-half}. Note that active SQs are used to allow us to execute \sqhs\ on $D_k$. That is a SQ $\phi$ of tolerance $\tau$ asked by \sqhs\ (relative to $D_k$) is replaced with an active SQ $(\chi_k,\phi)$ of tolerance $(C_3\eps,\tau)$. The response to the active SQ is a valid response to the query of \sqhs\ as long as $\E_D[\chi_k] \geq C_3\eps$. We will prove that this condition indeed holds later. We now prove by induction on $k$ that after $k \leq s$ iterations, we have that
every $\hat{w}$ such that $\err_{D_{|\mu_i}}(\hat{w}) \leq C_2$ for all $i \leq k$  satisfies
$\err_{D}(\hat{w}) \leq c/2^k$. In addition, $w_k$ satisfies this condition.

The case $k=0$ follows from the properties of \sqhs (without loss of generality $C_2 \leq c$).
Assume now that the claim is true for
$k-1$ ($k\geq 1$).  Let  $S_k^1=\{x:
|w_{k-1}\cdot x| \leq b_{k-1}\}$ and $S_k^2=\{x:
|w_{k-1}\cdot x| > b_{k-1}\}$. Note that $\mu_{k-1}$ is defined to be the indicator function of $S_k^1$.
By the inductive hypothesis we know that $\err_{D}(w_{k-1}) \leq c/2^{k-1}$.

Consider an arbitrary separator $\hat{w}$ that satisfies $\err_{D_{|\mu_i}}(\hat{w}) \leq C_2$ for all $i \leq k$.
By the inductive hypothesis, we know that $\err_{D}(\hat{w}) \leq c/2^{k-1}$.
By Lemma~\ref{lemma:angle} we have $\theta(\hat{w}, w) \leq 2^{-k+1}$ and $\theta(w_{k-1}, w) \leq 2^{-k+1}$. This implies $\theta(w_{k-1}, \hat{w}) \leq 2^{-k+2}$. By our choice of $C_1$ and Lemma~\ref{lemma:vectors-sophist}, we obtain:
\begin{eqnarray*}
&& \pr_D\left[\sgn(w_{k-1} \cdot x) \neq \sgn(\hat{w} \cdot x),\ x \in S_k^2\right]
  \leq  c2^{-k}/4  \\
&& \pr_D\left[\sgn(w_{k-1} \cdot x)  \neq \sgn(w \cdot x),\ x \in S_k^2\right]
  \leq  c2^{-k}/4.
\end{eqnarray*}

Therefore,
\begin{equation}
\label{eq:S2}
\pr_D\left[\sgn(\hat{w} \cdot x) \neq \sgn(w \cdot x),\ x \in S_k^2\right] \leq  c2^{-k}/2.
\end{equation}

By the inductive hypothesis, we also have:
$$\err_{D_{|\mu_k}}(\hat{w}) = \pr_D\left[\sgn(\hat{w} \cdot x) \neq \sgn(w \cdot x) \cond x \in S_k^1\right] \leq C_2 .$$
The set $S_k^1$ consists of points $x$ such that $x \cdot w_{k-1}$ fall into interval $[-b_{k-1},b_{k-1}]$. By Lemma \ref{lemma:logc-project}, this implies that $\pr_D[x \in S_k^1] \leq 2 b_{k-1}$ and therefore,
\alequ{
\label{eq:S1}
\nonumber
\pr_D\left[\sgn(\hat{w} \cdot x) \neq \sgn(w \cdot x),\ x \in S_k^1\right] &= \pr_D\left[\sgn(\hat{w} \cdot x) \neq \sgn(w \cdot x) \cond x \in S_k^1\right] \cdot \pr_D\left[x \in S_k^1\right] \\& \leq 2C_2 \cdot b_{k-1} = c2^{-k}/2.
}
Now by combining eq.~(\ref{eq:S2}) and eq.~(\ref{eq:S1}) we get that $\err_D(\hat{w}) \leq c/2^k$ as necessary to establish the first part of the inductive hypothesis.
By the properties of \sqhs, $\err_{D_k}(w_k) \leq C_2/k$. By the definition of $\chi_k$,
$$\err_{D_k}(w_k) = \fr{k} \sum_{i\leq k} \err_{D_{|\mu_i}}(w_k).
$$
%= \fr{k} \sum_{i\leq k} \pr_D[\sgn(w_k \cdot x) \neq \sgn(w \cdot x) \cond x \in S_i^1]\ .$$
This implies that for every $i \leq k$, $\err_{D_{|\mu_i}}(w_k) \leq C_2 ,$ establishing the second part of the inductive hypothesis.

Inductive hypothesis immediately implies that $\err_D(w_s) \leq \eps$. Therefore to finish the proof we only need to establish the bound on running time and query complexity of the algorithm. To establish the lower bound on filter tolerance we observe that by Lemma \ref{lemma:logc-project}, for every $k \leq s$, $$\E_D[\mu_k] \geq c_m \cdot b_{k-1} = c_m \cdot C_2 / 2^{k-1} \geq c_m \cdot C_2 \cdot c \cdot \eps .$$ This implies that for every $k\leq s$, $$\E_D[\chi_k]  = \fr{k}\sum_{i\leq k} \E_D[\mu_i] = \Omega(\eps) .$$
Each execution of \sqhs\ is with error $C_2/k = \Omega(1/\log(1/\eps))$ and there are at most $O(\log(1/\eps))$ such executions.
Now by Theorem \ref{thm:dv-logc} this implies that the total running time, number of queries and the inverse of query tolerance are upper-bounded by a polynomial in $d$ and $\log(1/\eps)$.
\end{proof}
We remark that, as usual, we can first bring the distribution to an isotropic position by using target independent queries to estimate the mean and the covariance matrix of the distribution~\cite{LovaszVempala:07}. Therefore our algorithm can be used to learn halfspaces over general log-concave densities as long as the target halfspace passes through the mean of the density. %Furthermore the results here also extend to $\beta$-log-concave distributions~\cite{BalcanLong:13} for sufficiently small constant $\beta$. This is a broader class of distributions introduced by Applegate and Kannan~\cite{ak:91} which contains mixtures of (not too separated) log-concave distributions.

We can now apply Theorem \ref{th:simulate-noise} (or more generally Theorem \ref{th:simulate-noise-uncorr}) to obtain an efficient active learning algorithm for homogeneous halfspaces over log-concave densities in the presence of random classification noise of known rate. Further since our algorithm relies on \sqhs\ which can also be simulated when the noise rate is unknown (see Remark \ref{rem:unknown-noise}) we obtain an active algorithm which does not require the knowledge of the noise rate.
\begin{corollary}
\label{cor:learn-hs-rcn-noise-log-concave}
There exists a polynomial-time active learning algorithm that for any $\eta \in [0,1/2)$, learns $\cH_d$ over any log-concave distributions with random classification noise of rate $\eta$ to error $\eps$ using
 $\poly(d, \log(1/\eps), 1/(1-2\eta))$ labeled examples and a polynomial number of unlabeled samples.
\end{corollary}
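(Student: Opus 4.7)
The plan is to compose Theorem~\ref{RCS-log} with the generic noise-tolerant simulation of Theorem~\ref{th:simulate-noise}. First, since Theorem~\ref{RCS-log} is stated for isotropic log-concave marginals, I would preprocess by using target-independent (i.e.\ unlabeled) queries to estimate the mean and covariance of $D$, and then apply the whitening affine transformation as in \cite{LovaszVempala:07}. This step is unaffected by RCN because RCN changes only the label, not the marginal on $X$, and a homogeneous halfspace through the mean of $D$ remains homogeneous under whitening.

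Next I would invoke \asqlogc\ on the isotropized distribution and simulate each of its active statistical queries from noisy examples via Theorem~\ref{th:simulate-noise}. By Theorem~\ref{RCS-log}, the algorithm asks $q=\poly(d,\log(1/\eps))$ queries, each of tolerance $\tau \ge 1/\poly(d,\log(1/\eps))$ and filter tolerance $\tau_0=\Omega(\eps)$. Applying Theorem~\ref{th:simulate-noise} to each query with confidence $\delta/q$ and taking a union bound yields a faithful simulation of the entire run with probability $1-\delta$. The total label cost is
\[
O\!\left(q\,\tau^{-2}(1-2\eta)^{-2}\log(q/\delta)\right)=\poly(d,\log(1/\eps),1/(1-2\eta)),
\]
and the unlabeled cost is $O(q\,\tau_0^{-1}\tau^{-2}(1-2\eta)^{-2}\log(q/\delta))$, which is polynomial in $d$, $1/\eps$, $1/(1-2\eta)$, matching the claimed bounds. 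Correctness of the output hypothesis then follows directly from the guarantee of Theorem~\ref{RCS-log} applied to the simulated query responses.

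The main obstacle is handling \emph{unknown} $\eta$ without paying the $\Omega(1/\eps)$ label overhead that Remark~\ref{rem:unknown-noise} warns about for generic passive hypothesis testing. The way I would avoid this blow-up is to push the ``guess-and-test'' strategy \emph{inside} the loop of \asqlogc\ rather than applying it to the final hypothesis. The critical observation is that each invocation of the passive subroutine \sqhs\ at round $k$ is over the filtered distribution $D_k$ and needs only error $C_2/k=\Omega(1/\log(1/\eps))$ on $D_k$. Therefore, at each round I would: (i) take a grid $\eta_1,\dots,\eta_m$ of noise-rate guesses with $m=O(\tau^{-1}\log(1/(1-2\eta_0)))$ as in Remark~\ref{rem:unknown-noise}; (ii) run \sqhs\ with each guess to obtain candidates $\hat w_{k,1},\dots,\hat w_{k,m}$; and (iii) select the best candidate by estimating $\err_{D_k}(\hat w_{k,j})$ to accuracy $\Omega(1/k)$ via an active statistical query with filter $\chi_k$, which by Theorem~\ref{th:simulate} costs only $\poly(\log(1/\eps),\log(1/(1-2\eta_0)),1/(1-2\eta))$ labels per round (the relevant conditional-error tolerance is $\Omega(1/\log(1/\eps))$ and the filter tolerance is $\Omega(\eps)$). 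Summing over the $O(\log(1/\eps))$ rounds keeps the total labeled cost at $\poly(d,\log(1/\eps),1/(1-2\eta))$, and the inductive correctness argument of Theorem~\ref{RCS-log} still goes through because within each round the selected $w_k$ satisfies the required $\err_{D_k}(w_k)\le C_2/k$ bound.
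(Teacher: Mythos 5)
Your proposal matches the paper's argument: apply Theorem~\ref{th:simulate-noise} to simulate the active SQs of \asqlogc\ from noisy examples, handle non-isotropic marginals by whitening with target-independent queries, and handle unknown $\eta$ by pushing the guess-and-test of Remark~\ref{rem:unknown-noise} inside the rounds (where \sqhs\ only needs accuracy $\Omega(1/\log(1/\eps))$ relative to the filtered distribution $D_k$, so hypothesis testing costs $\poly(\log(1/\eps), 1/(1-2\eta))$ labels rather than $\Omega(1/\eps)$). The paper states this step very tersely --- ``since our algorithm relies on \sqhs\ which can also be simulated when the noise rate is unknown'' --- and you have correctly unpacked what it means. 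One small caution: in step (iii) you cannot literally estimate the true conditional error $\err_{D_k}(\hat w_{k,j})$ without knowing $\eta$; what you estimate from noisy samples is $\eta + (1-2\eta)\err_{D_k}(\hat w_{k,j})$, which is fine because the selection only needs the ordering, but this is why the required query tolerance picks up a factor $(1-2\eta)$ (consistent with the $1/(1-2\eta)$ already appearing in your stated label bound).
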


\section{Learning halfspaces over the uniform distribution}
\label{sec:uniform}
The algorithm presented in Section \ref{sec:hs-logc} relies on the relatively involved and computationally costly algorithm of Dunagan and Vempala \cite{DunaganVempala:04} for learning halfspaces over general distributions. Similarly, other active learning algorithms for halfspaces often rely on the computationally costly linear program solving \cite{BalcanBZ:07,BalcanLong:13}. For the special case of the uniform distribution on the unit sphere we now give a substantially simpler and more efficient algorithm in terms of both sample and computational complexity. This setting was studied in \cite{BalcanBZ:07,DasguptaKM:09}.

We remark that the  uniform distribution over the unit sphere is not log-concave and therefore, in general, an algorithm for the isotropic log-concave case might not imply an algorithm for the uniform distribution over the unit sphere. However a more careful look at the known active algorithms for the isotropic log-concave case \cite{BalcanBZ:07,BalcanLong:13} and at the algorithms in this work shows that minimization of error is performed over homogeneous halfspaces. For any homogeneous halfspace $h_v$,  any $x \in \R^d$ and $\alpha > 0$, $h_v(x) = h_v(\alpha x)$. This implies that for algorithms optimizing the error over homogenous halfspaces any two spherically symmetric distributions are equivalent. In particular, the uniform distribution over the sphere is equivalent to the uniform distribution over the unit ball -- an isotropic and log-concave distribution.

For a dimension $d$ let $X = S_{d-1}$ or the unit sphere in $d$ dimensions. Let $U_d$ denote the uniform distribution over $S_{d-1}$. Unless specified otherwise, in this section all probabilities and expectations are relative to $U_d$. We would also like to mention explicitly the following trivial lemma relating the accuracy of an estimate of $f(\alpha)$ to the accuracy of an estimate of $\alpha$.
\begin{lemma}
\label{lem:mvt}
Let $f:\R\rightarrow\R$ be a differentiable function and let $\alpha$ and $\tilde{\alpha}$ be any values in some interval $[a,b]$. Then $$|f(\alpha) - f(\tilde{\alpha})| \leq |\alpha - \tilde{\alpha}| \cdot \sup_{\beta\in [a,b]} |f'(\beta)|\ .$$
\end{lemma}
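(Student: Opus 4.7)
The plan is to apply the Mean Value Theorem directly. Without loss of generality I can assume $\alpha \neq \tilde{\alpha}$, since otherwise both sides of the claimed inequality are zero and there is nothing to prove. Let $I$ be the closed interval with endpoints $\alpha$ and $\tilde{\alpha}$, so $I \subseteq [a,b]$. Since $f$ is differentiable on $\R$, it is in particular continuous on $I$ and differentiable on its interior, so the hypotheses of the MVT are satisfied.

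Applying the MVT to $f$ on $I$, there exists some $\beta^{\ast}$ lying strictly between $\alpha$ and $\tilde{\alpha}$ such that
$$ f(\alpha) - f(\tilde{\alpha}) = f'(\beta^{\ast}) \cdot (\alpha - \tilde{\alpha}). $$
Taking absolute values and bounding $|f'(\beta^{\ast})|$ by its supremum over the larger interval $[a,b]$ (which contains $\beta^{\ast}$) yields
$$ |f(\alpha) - f(\tilde{\alpha})| = |f'(\beta^{\ast})| \cdot |\alpha - \tilde{\alpha}| \leq \left( \sup_{\beta \in [a,b]} |f'(\beta)| \right) \cdot |\alpha - \tilde{\alpha}|, $$
which is exactly the claimed bound. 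There is no real obstacle here — the lemma is a standard one-line consequence of the MVT, included presumably so that later error-propagation arguments (estimating $f(\alpha)$ from a noisy estimate of $\alpha$) can cite it cleanly.
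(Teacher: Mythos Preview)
Your proof is correct and is essentially the same as the paper's: the paper simply states that the lemma follows directly from the mean value theorem, which is exactly what you did (with the trivial $\alpha=\tilde{\alpha}$ case handled explicitly).
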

The lemma follows directly from the mean value theorem. Also note that given an estimate $\tilde{\alpha}$ for $\alpha \in [a,b]$ we can always assume that $\tilde{\alpha} \in [a,b]$ since otherwise $\tilde{\alpha}$ can be replaced with the closest point in $[a,b]$ which will be at least as close to $\alpha$ as $\tilde{\alpha}$.

We start with an outline of a non-active and simpler version of the algorithm that demonstrates one of the ideas of the active SQ algorithm. To the best of our knowledge the algorithm we present is also the simplest and most efficient (passive) SQ algorithm for the problem. A less efficient algorithm is given in \cite{KanadeVV:10}.
\subsection{Learning using (passive) SQs}
Let $w$ denote the normal vector of the target hyperplane and let $v$ be any unit vector.
Instead of arguing about the disagreement between $h_w$ and $h_v$ directly we will use the (Euclidean) distance between $v$ and $w$ as a proxy for disagreement. It is easy to see that, up to a small constant factor, this distance behaves like disagreement.
\begin{lemma}
\label{lem:error-2-dist}
For any unit vectors $v$ and $w$,
\begin{enumerate}
\item \label{it:error-from-dist} Error is upper bounded by half the distance: $\pr[h_v(x) \neq h_w(x)] \leq \|w-v\|/2$;
\item \label{it:estim-dist} To estimate distance it is sufficient to estimate error: for every value $\alpha \in [0,1]$, $$\left|\|w-v\| - 2 \sin(\pi \alpha/2)\right| \leq \pi \left| \pr[h_v(x) \neq h_w(x)] - \alpha \right|\ .$$
%\item Decrease in distance implies decrease in error: if $\|w-v\| > \|w-v'\|$ then $$\pr[h_v(x) \neq h_w(x)] - \pr[h_{v'}(x) \neq h_w(x)] \geq \pi (\|w-v\| - \|w-v'\|) .$$
\end{enumerate}
\end{lemma}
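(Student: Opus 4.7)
The plan is to reduce both statements to inequalities about the single parameter $\theta := \theta(v,w) \in [0,\pi]$, the angle between $v$ and $w$. Two classical identities do all the work. First, for the uniform distribution on $S_{d-1}$, a standard $2$-dimensional projection argument (project onto the plane spanned by $v,w$; by spherical symmetry this preserves the disagreement probability) gives
\[
\pr[h_v(x) \neq h_w(x)] = \theta/\pi .
\]
Second, since $v,w$ are unit vectors, $\|w-v\|^2 = 2 - 2\cos\theta = 4\sin^2(\theta/2)$, so $\|w-v\| = 2\sin(\theta/2)$.

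For part \ref{it:error-from-dist}, substituting these two identities reduces the claim $\pr[h_v(x)\neq h_w(x)] \leq \|w-v\|/2$ to $\theta/\pi \leq \sin(\theta/2)$ for $\theta \in [0,\pi]$. Setting $\phi = \theta/2 \in [0,\pi/2]$, this is exactly Jordan's inequality $\sin \phi \geq (2/\pi)\phi$, which holds on $[0,\pi/2]$ because $\sin \phi / \phi$ is decreasing there and takes the value $2/\pi$ at $\phi = \pi/2$.

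For part \ref{it:estim-dist}, the natural move is to introduce $f(\beta) = 2\sin(\beta/2)$ and observe that by the previous identities $\|w-v\| = f(\theta)$ and $\theta = \pi \cdot \pr[h_v(x)\neq h_w(x)]$, so $2\sin(\pi\alpha/2) = f(\pi \alpha)$. Then
\[
\bigl|\|w-v\| - 2\sin(\pi\alpha/2)\bigr| = |f(\theta) - f(\pi\alpha)|.
\]
Since $f'(\beta) = \cos(\beta/2)$ has $|f'(\beta)| \leq 1$ everywhere, Lemma~\ref{lem:mvt} (applied on any interval containing $\theta$ and $\pi\alpha$) gives $|f(\theta) - f(\pi\alpha)| \leq |\theta - \pi\alpha| = \pi\,|\theta/\pi - \alpha| = \pi\,|\pr[h_v(x)\neq h_w(x)] - \alpha|$, which is the desired bound.

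There is no real obstacle here: the lemma is essentially bookkeeping around the two identities above and Jordan's inequality, and the only care needed is to verify that the domain on which we apply the mean value theorem contains both $\theta$ and $\pi\alpha$, which is immediate since both lie in $[0,\pi]$ (using $\alpha\in[0,1]$).
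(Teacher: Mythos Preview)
Your proof is correct and follows essentially the same route as the paper's own proof: both reduce to the identities $\pr[h_v\neq h_w]=\theta/\pi$ and $\|w-v\|=2\sin(\theta/2)$, prove part~\ref{it:error-from-dist} by comparing $\theta/\pi$ with $\sin(\theta/2)$ (the paper phrases this as monotonicity of $\frac{2\arcsin(x/2)}{\pi x}$, you as Jordan's inequality---equivalent statements), and prove part~\ref{it:estim-dist} by applying Lemma~\ref{lem:mvt} to $2\sin(\pi\,\cdot/2)$ with derivative bounded by $\pi$.
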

\begin{proof}
The angle between $v$ and $w$ equals $\gamma = \pi \pr[h_v(x) \neq h_w(x)]$. Hence
$$\|w-v\| = 2 \sin(\pi \gamma/2) = 2 \sin(\pi \pr[h_v(x) \neq h_w(x)]/2)$$ and $\pr[h_v(x) \neq h_w(x)] = 2\arcsin(\|w-v\|/2)/\pi.$ The first claim follows by observing that $\frac{2 \arcsin(x/2)}{\pi x}$ is a monotone function in $[0,2]$ and equals $1/2$ when $x=2$.

The derivative of $2 \sin(\pi x/2)$ equals at most $\pi$ in absolute value and therefore the second claim follows from Lemma \ref{lem:mvt}. %In addition the derivative is non-negative whenever $x \leq 2$ giving the third claim again by mean value theorem.
\end{proof}

The main idea of our algorithm is as follows. Given a current hypothesis represented by its normal vector $v$, we estimate the distance from the target vector $w$ to $v$ perturbed in the direction of each the $d$ basis vectors. By combining the distance measurements in these directions we can find an estimate of $w$. Specifically, let $\{x^1,x^2,\ldots,x^{d}\}$ be the unit vectors of the standard basis. Let $v^i = (v + \beta x^i)/\|v + \beta x^i\|$ for some $\beta \in (0,1/2]$. Then the distance from $w$ to $v^i$ can be used to (approximately) find $w \cdot x^i$. Namely, we rely on the following  simple lemma.
\begin{lemma}
\label{lem:reconstruct-coordinate}
Let $u,v$ and $w$ be any unit vectors and $\beta \in (0,1/2]$. Then for $v' = (v + \beta u)/\|v+ \beta u\|$ it holds that $$\la u, w \ra = \frac{\|v+ \beta u\| (2-\|v'-w\|^2) - 2 + \|v-w\|^2}{2\beta} \ .$$
\end{lemma}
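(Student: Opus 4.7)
The plan is to prove this by direct algebraic manipulation, using the identity $\|a-b\|^2 = 2 - 2\la a,b\ra$ which holds for any two unit vectors $a,b$. Note that $w$ and $v$ are unit by hypothesis, and $v'$ is unit by construction, so the identity applies to both $\|v'-w\|^2$ and $\|v-w\|^2$.

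First I would rewrite the numerator of the right-hand side using this identity: $2 - \|v'-w\|^2 = 2\la v',w\ra$ and $\|v-w\|^2 - 2 = -2\la v,w\ra$. Substituting, the right-hand side becomes
\[
\frac{\|v+\beta u\|\cdot 2\la v',w\ra - 2\la v,w\ra}{2\beta} = \frac{\|v+\beta u\|\la v',w\ra - \la v,w\ra}{\beta}.
\]

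Next, I would use the definition $v' = (v+\beta u)/\|v+\beta u\|$ to pull the norm inside the inner product, obtaining $\|v+\beta u\|\la v',w\ra = \la v+\beta u, w\ra = \la v,w\ra + \beta\la u,w\ra$ by linearity. Substituting this in and cancelling $\la v,w\ra$ from the numerator leaves $\beta\la u,w\ra/\beta = \la u,w\ra$, as required.

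There is no real obstacle here; the statement is a pure algebraic identity and the hypothesis $\beta\in(0,1/2]$ is used only to ensure $v+\beta u\neq 0$ so that $v'$ is well-defined (in fact $\|v+\beta u\|\geq 1-\beta\geq 1/2$ since $v,u$ are unit). The condition that $u$ is a unit vector is not actually needed for the identity itself, only to ensure that $v'$ normalizes correctly in the context where the lemma is applied.
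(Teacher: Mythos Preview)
Your proof is correct and follows essentially the same approach as the paper: both arguments combine the unit-vector identity $\la a,b\ra = 1 - \|a-b\|^2/2$ with the linearity relation $\|v+\beta u\|\la v',w\ra = \la v,w\ra + \beta\la u,w\ra$, differing only in the order of presentation (the paper starts from the inner-product side and substitutes distances at the end, whereas you start from the distance side and reduce to inner products). Your remarks on which hypotheses are actually needed are also accurate.
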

\begin{proof}
By definition, $\la v' , w \ra =  (\la v , w \ra + \beta \la u , w \ra)/\|v+ \beta u\|$ and therefore
\equ{\la u , w \ra = \frac{\|v+ \beta u\| \la v' , w\ra - \la v , w \ra }{\beta}\ .\label{eq:get-coord}} For every pair of unit vectors $u$ and $u'$, $\la u , u' \ra = \frac{\|u\|^2 + \|u'\|^2 - \|u-u'\|^2}{2} = 1-\|u-u'\|^2/2$ and therefore, we get that $\la v, w \ra = 1 - \|w-v\|^2/2$ and $\la v', w \ra = 1-\|w-v'\|^2/2$. Substituting those into eq.~(\ref{eq:get-coord}) gives the claim.
\end{proof}
Using approximate values of $\la x^i , w \ra$ for all $i \in [d]$ one can easily approximate $w$. Our (non-active) SQ learning algorithm provides a simple example of how such reconstruction can be used for learning.
\newcommand{\hsu}{{\tt LearnHS-U}}
\begin{theorem}
\label{th:learnHSinSQ}
There exists a polynomial time SQ algorithm \hsu\ that learns $\cH_d$ over $U_d$ using $d+1$ statistical queries each of tolerance $\Omega(\eps/\sqrt{d})$.
\end{theorem}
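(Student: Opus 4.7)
The plan is to reconstruct the target vector $w$ coordinate by coordinate using Lemma \ref{lem:reconstruct-coordinate}, obtaining each coordinate from a pair of error-rate measurements. Fix any unit vector $v$ (say $v=x^1$) and $\beta=1/2$, and for each $i\in[d]$ set $v^i=(v+\beta x^i)/\|v+\beta x^i\|$ as in Lemma \ref{lem:reconstruct-coordinate}. The algorithm issues $d+1$ statistical queries of the form $\phi_u(x,\ell)=(1-\ell\cdot h_u(x))/2\in[0,1]$, one for each $u\in\{v,v^1,\ldots,v^d\}$. Under the realizable distribution $\A=(U_d,h_w)$, $\E_\A[\phi_u]=\pr[h_w(x)\neq h_u(x)]$, so each query returns an estimate of the disagreement probability. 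All queries are asked with tolerance $\tau=c\eps/\sqrt{d}$ for a small absolute constant $c$ to be fixed in the analysis.

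Given the responses $\alpha$ and $\alpha_i$, the algorithm forms $\tilde d^2 = (2\sin(\pi\alpha/2))^2$ and $\tilde d_i^2=(2\sin(\pi\alpha_i/2))^2$ as estimates of $\|w-v\|^2$ and $\|w-v^i\|^2$. Part 2 of Lemma \ref{lem:error-2-dist} gives $|\tilde d-\|w-v\||\le\pi\tau$ (and similarly for $\tilde d_i$), and since all distances lie in $[0,2]$ this propagates to additive error at most $4\pi\tau$ in the squared distances. Substituting into the formula of Lemma \ref{lem:reconstruct-coordinate} with $u=x^i$ yields an estimate $\tilde c_i$ of $\la x^i,w\ra$; the formula's $1/(2\beta)$ factor together with $\|v+\beta x^i\|\le 1+\beta$ implies $|\tilde c_i-\la x^i,w\ra|\le 4\pi\tau(2+\beta)/(2\beta)=O(\tau)$ since $\beta$ is a positive absolute constant.

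Finally, set $\tilde w=\sum_i\tilde c_i x^i$ and output $\hat w=\tilde w/\|\tilde w\|$ (with arbitrary tie-breaking if $\tilde w=0$, which will not occur once $\tau$ is small). Orthonormality of $\{x^i\}$ and the coordinate-wise bound give $\|\tilde w-w\|^2=\sum_i(\tilde c_i-\la x^i,w\ra)^2=O(d\tau^2)$, hence $\|\tilde w-w\|=O(\sqrt d\,\tau)=O(\eps)$ by the choice of $\tau$. A standard triangle-inequality argument shows normalization at most doubles this distance, so $\|\hat w-w\|=O(\eps)$, and part 1 of Lemma \ref{lem:error-2-dist} then yields $\pr[h_{\hat w}\neq h_w]\le\|\hat w-w\|/2\le\eps$ for a sufficiently small $c$. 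The algorithm uses exactly $d+1$ queries, all computations are $O(d)$-time per step, and each query has tolerance $\Omega(\eps/\sqrt d)$ as claimed. There is no real obstacle here beyond careful tracking of constants through the chain of approximations (error rate $\to$ distance $\to$ squared distance $\to$ coordinate $\to$ aggregated vector), with the one nontrivial point being the $1/\beta$ amplification in Lemma \ref{lem:reconstruct-coordinate}, which is harmless because $\beta$ is a fixed absolute constant.
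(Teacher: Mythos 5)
Your proposal is correct and follows essentially the same route as the paper: fix a base vector $v$, perturb it along each basis direction with $\beta=1/2$, query the disagreement rates to estimate distances via Lemma~\ref{lem:error-2-dist}, recover each coordinate $\la x^i,w\ra$ via Lemma~\ref{lem:reconstruct-coordinate}, aggregate by Parseval, and normalize. The only difference is presentational — you track constants with $O(\cdot)$ and a free parameter $c$, while the paper carries explicit numerical constants — but the queries, lemmas, and error-propagation chain are identical.
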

\begin{proof}
%such that $\Delta = \|v-w\| \geq 1$
Let $v$ be any unit vector (\eg $x^1$) and for $i \in [d]$, define $v^i = (v + x^i/2)/\|v+ x^i/2\|$ (that is $\beta = 1/2$).
Let $h_w$ denote the unknown target halfspace. For every $v^i$, we ask a statistical query with tolerance $\eps/(10 \cdot \pi \sqrt{d})$ to obtain $\alpha_i$ such that $|\pr[h_{v^i} \neq h_w] - \alpha_i| \leq \eps/(10\cdot \pi \sqrt{d})$ and similarly get $\alpha$ such that $|\pr[h_{v} \neq h_w] - \alpha| \leq \eps/(10 \cdot \pi \sqrt{d})$.

We define $\gamma = 1-(2\sin(\pi \alpha/2))^2/2$ and for every $i\in [d]$,
$$\gamma_i = \|v+ x^i/2\| (2- (2 \sin(\pi \alpha_i/2))^2) - 2 \gamma\ .$$

By Lemma \ref{lem:error-2-dist}(\ref{it:estim-dist}), we get that $|\|v - w\| - 2 \sin(\pi \alpha/2)| \leq \eps/(10 \sqrt{d})$.
Clearly $\|v - w\| \leq 2$ and therefore, by Lemma \ref{lem:mvt}, $$|\gamma - \la v ,w \ra| = |(2 \sin(\pi \alpha/2))^2 - \|v - w\|^2|/2 \leq
4 \cdot |(2 \sin(\pi \alpha/2)) - \|v - w\||/2 \leq  \eps/(5 \sqrt{d}) .$$ Note that $\|v+ x^i/2\| \leq 3/2$ and therefore, by Lemmas \ref{lem:reconstruct-coordinate} and \ref{lem:mvt} $$\left|\gamma_i - \la x^i ,w \ra\right| \leq \frac{3}{2} \cdot 4 \cdot \eps/(10\sqrt{d}) +  2\eps/(5 \sqrt{d}) = \eps/\sqrt{d}\ .$$

Now let $$w' = \sum_{i\in [d]} \gamma_i x^{i}\ .$$ Parseval's identity implies that
$$\|w-w'\|^2 = \sum_{i \in [d]} |\gamma_i - \la x^i ,w \ra|^2 \leq d \cdot \frac{\eps^2}{d} = \eps^2\ . $$
Let $w^* = w'/\|w'\|$. Clearly, $\|w^*-w'\| \leq \|w-w'\| \leq \eps$ and therefore, by triangle inequality, $\|w^*-w\| \leq 2\eps$. By Lemma \ref{lem:error-2-dist}(\ref{it:error-from-dist}) this implies that $\pr[h_{w^*} \neq h_w] \leq \eps$.
It is easy to see that this algorithm uses $d+1$ statistical queries of tolerance $\Omega(\eps/\sqrt{d})$ and runs in time linear in $d$.
\end{proof}
\begin{remark}
It is not hard to see that an even simpler way to find each of the coordinates of $w$ is by measuring the error of each of the standard basis vectors themselves and using the fact that $\la w , x^i \ra = \cos(\pi \cdot \pr[h_{x^i} \neq h_w])$. The variant we presented in Theorem \ref{th:learnHSinSQ} is more useful as a warm-up for the analysis of the active version of the algorithm.
\end{remark}

\subsection{Active Learning of Halfspaces over $U_d$}
Our active SQ learning algorithm is based on two main ideas. First, as in Theorem \ref{th:learnHSinSQ}, we rely on measuring the error of hypotheses which are perturbations of the current hypothesis in the direction of each of the basis vectors. We then combine the measurements to obtain a new hypothesis. Second, as in previous active learning algorithms for the problem \cite{DasguptaKM:09,BalcanBZ:07}, we only use labeled examples which are within a certain margin of the current hypothesis. %Here the margin of point $x$ relative to the hyperplane defined by $v$ refers to the geometrical distance from $x$ to the hyperplane $v$ or $|\la v, x \ra|$.
The margin we use to filter the examples is a function of the current error rate. It is implicit in previous work \cite{DasguptaKM:09,BalcanBZ:07,BalcanLong:13} that for an appropriate choice of margin, a constant fraction of the error region is within the margin while the total probability of a point being within the margin is linear in the error of the current hypothesis. Together these conditions allow approximating the error of a hypothesis using tolerance that has no dependence on $\eps$.

We start by computing the error of a hypothesis $v$ whose distance from the target $w$ is $\Delta$ conditioned on being within margin $\gamma$ of $v$. Let $A_{d-1}$ denote the surface area of $S_{d-1}$. First, the surface area within margin $\gamma$ of any homogenous halfspace $v$ is
\equ{2 \int_0^\gamma A_{d-2} (1-r^2)^{(d-2)/2} \cdot \frac{1}{\sqrt{1-r^2}} dr = 2 \cdot A_{d-2} \int_0^\gamma  (1-r^2)^{(d-3)/2} dr .\label{eq:hs-band}}
We now observe that for any $v$ and $w$ such that $\|v-w\| = \Delta$, $\pr [h_v(x) \neq h_w(x) \cond |\la v , x \ra | \leq \gamma]$ is a function that depends only on $\Delta$ and $\gamma$.
\begin{lemma}
\label{lem:define-cp}
For any $v,w\in S_{d-1}$ such that $\|v-w\| = \Delta \leq \sqrt{2}$ and $\gamma > 0$, $$\pr [h_v(x) \neq h_w(x) \cond |\la v , x \ra | \leq \gamma] = \frac{A_{d-3} \int_0^\gamma (1-r^2)^{(d-3)/2}\int_{\frac{r \cdot \sqrt{2-\Delta^2}}{\Delta \cdot \sqrt{1-r^2}}}^1  (1-s^2)^{(d-4)/2}ds\cdot dr }{A_{d-2} \int_0^\gamma  (1-r^2)^{(d-3)/2} dr}. $$ We denote the probability by $\cp_d(\gamma,\Delta)$.
\end{lemma}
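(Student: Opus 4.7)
The strategy is to compute both numerator and denominator of the conditional probability directly by slicing $S_{d-1}$ in spherical coordinates aligned with $v$.

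First, I would exploit the central symmetry $x \mapsto -x$ of $U_d$: since $h_v$ and $h_w$ are both odd, the event $\{h_v(x) \neq h_w(x)\}$ is preserved by $x \mapsto -x$, as is the conditioning event $\{|\la v, x\ra| \leq \gamma\}$. Consequently the conditional probability equals the same probability restricted to the half-band $\{x : \la v, x\ra \in [0,\gamma]\}$. This removes the factor of $2$ present in equation~(\ref{eq:hs-band}) from both numerator and denominator, so the answer will involve $\int_0^\gamma$ rather than $2\int_0^\gamma$.

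Next, parametrize $x \in S_{d-1}$ by $r = \la v, x\ra \in [0,\gamma]$ and $u \in S_{d-2}$ (the unit sphere in $v^\perp$) via $x = rv + \sqrt{1-r^2}\,u$. The standard change of variables gives surface area element $(1-r^2)^{(d-3)/2}\,dr\,d\sigma_{S_{d-2}}(u)$ on $S_{d-1}$. Integrating $u$ out yields the denominator $A_{d-2}\int_0^\gamma (1-r^2)^{(d-3)/2}\,dr$, matching the stated form.

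To compute the numerator, decompose $w = av + be$ with $e \in v^\perp$ a unit vector, $a = \la v, w\ra = 1 - \Delta^2/2$ and $b = \sqrt{1-a^2}$; the hypothesis $\Delta \leq \sqrt{2}$ ensures $a, b \geq 0$. For $r \geq 0$ we have $h_v(x) = +1$, so $h_v(x) \neq h_w(x)$ iff $\la w, x\ra \leq 0$, which expands to $\la e, u\ra \leq -ar/(b\sqrt{1-r^2})$. I would then further parametrize $u \in S_{d-2}$ as $u = s e + \sqrt{1-s^2}\,u'$ with $s = \la e, u\ra \in [-1,1]$ and $u' \in S_{d-3}$, giving area element $(1-s^2)^{(d-4)/2}\,ds\,d\sigma_{S_{d-3}}(u')$; hence the cap $\{u : \la e, u\ra \leq -t\}$ has measure $A_{d-3}\int_t^1 (1-s^2)^{(d-4)/2}\,ds$ by the symmetry $s \mapsto -s$.

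Plugging this cap measure at threshold $t = ar/(b\sqrt{1-r^2})$ into the outer integral over $r \in [0, \gamma]$ and dividing by the denominator produces the displayed expression, once one simplifies $a/b$ in terms of $\Delta$ to obtain the lower limit $r\sqrt{2-\Delta^2}/(\Delta\sqrt{1-r^2})$. The computation is essentially routine bookkeeping; the only conceptually delicate steps are (i) the $x \mapsto -x$ symmetry argument that justifies restricting to $r \in [0, \gamma]$ without a leftover factor of two, and (ii) choosing the inner parametrization of $u$ against the fixed direction $e$ (rather than an arbitrary axis of $v^\perp$) so that the disagreement event becomes exactly a spherical cap on $S_{d-2}$ and the inner integral collapses to a one-dimensional form.
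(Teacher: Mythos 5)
Your approach is the same slicing computation the paper uses: slice $S_{d-1}$ by $r = \langle v, x\rangle$, parametrize the resulting $(d-2)$-sphere against a fixed direction $e \in v^\perp$, and recognize the disagreement event as a spherical cap whose measure collapses to a one-dimensional integral. Everything is in order up to the final step, but the claimed simplification $a/b = \sqrt{2-\Delta^2}/\Delta$ is false. With your (correct) values $a = \langle v, w\rangle = 1 - \Delta^2/2$ and $b = \sqrt{1-a^2}$, one computes $1 - a^2 = \Delta^2 - \Delta^4/4$, so $b = \Delta\sqrt{4-\Delta^2}/2$ and
$$\frac{a}{b} \;=\; \frac{1-\Delta^2/2}{\Delta\sqrt{4-\Delta^2}/2} \;=\; \frac{2-\Delta^2}{\Delta\sqrt{4-\Delta^2}}\,,$$
which equals $\sqrt{2-\Delta^2}/\Delta$ only at $\Delta\in\{0,\sqrt{2}\}$. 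Thus the lower limit of integration your derivation actually produces is $\frac{r(2-\Delta^2)}{\Delta\sqrt{4-\Delta^2}\,\sqrt{1-r^2}}$, not the one displayed in the lemma. The ``routine bookkeeping'' you invoke does not reproduce the stated expression, and that is a genuine gap, not a detail.

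The mismatch is not an artifact of your route: it reflects an inconsistency in the source. The paper's own proof places $v=(1,0,\ldots,0)$ and $w=(\sqrt{1-\Delta^2/2},\,\Delta/\sqrt{2},\,0,\ldots,0)$, which does yield the displayed lower limit; but for that $w$ one has $\|v-w\|^2 = 2 - 2\sqrt{1-\Delta^2/2}$, which equals $\Delta^2$ only at the endpoints $\Delta\in\{0,\sqrt{2}\}$. The formula in the lemma is the one you would get if $\Delta$ stood for $\sqrt{2}\sin\theta$ (with $\theta$ the angle between $v$ and $w$), whereas the hypothesis $\|v-w\|=\Delta$ forces $\Delta = 2\sin(\theta/2)$, and those two reparametrizations of $\theta$ differ. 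A careful write-up should either carry your correct $a,b$ through to the resulting (different) integrand, or point out that the lemma's displayed expression requires the substitution $\Delta \mapsto \sqrt{2}\sin\theta$; asserting agreement ``once one simplifies'' hides the error rather than resolving it.
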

The proof of this lemma can be found in Appendix \ref{app-proofs-uniform}.

The analysis of our algorithm is based on relating the effect that the change in distance of a hypothesis has on the conditional probability of error. This effect can be easily expressed using the derivative of the conditional probability as a function of the distance. Specifically we prove the following lower bound on the derivative (the proof can be found in Appendix \ref{app-proofs-uniform}).

\begin{lemma}
\label{lem:cp-deriv}
For $\Delta \leq \sqrt{2}$, any $d\geq 4$, and $\gamma \geq \Delta/(2\sqrt{d})$, $\partial_\Delta \cp_d(\gamma,\Delta) \geq 1/(56\gamma \cdot \sqrt{d})$.
\end{lemma}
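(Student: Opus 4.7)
The plan is to differentiate $\cp_d(\gamma,\Delta)$ directly and then estimate the resulting integral. Since the denominator $A_{d-2}\int_0^\gamma (1-r^2)^{(d-3)/2}\,dr$ in the formula of Lemma~\ref{lem:define-cp} is independent of $\Delta$, differentiation only affects the lower limit $L(r,\Delta)\triangleq r\sqrt{2-\Delta^2}/(\Delta\sqrt{1-r^2})$ of the inner integral. Leibniz's rule then yields
\begin{equation*}
\partial_\Delta \cp_d(\gamma,\Delta) \;=\; \frac{A_{d-3}}{A_{d-2}\int_0^\gamma (1-r^2)^{(d-3)/2}\,dr}\int_0^\gamma (1-r^2)^{(d-3)/2}(1-L^2)^{(d-4)/2}\bigl(-\partial_\Delta L\bigr)\,dr,
\end{equation*}
and a direct calculation gives $\partial_\Delta L = -2r/\bigl(\Delta^2\sqrt{(1-r^2)(2-\Delta^2)}\bigr)$, which is negative, so the integrand above is non-negative. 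For $r>\Delta/\sqrt{2}$ one has $L(r,\Delta)>1$ and the inner integral in the definition of $\cp_d$ vanishes, so effectively the $r$-integration runs over $[0,\min(\gamma,\Delta/\sqrt{2})]$.

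The key algebraic simplification is the identity
\begin{equation*}
(1-r^2)(1-L^2) \;=\; 1 - \frac{2r^2}{\Delta^2},
\end{equation*}
which, after absorbing the factor $1/\sqrt{1-r^2}$ coming from $-\partial_\Delta L$ into $(1-r^2)^{(d-3)/2}$, collapses the two $(\cdot)^{(d-4)/2}$ powers into $(1-2r^2/\Delta^2)^{(d-4)/2}$. Substituting $u=2r^2/\Delta^2$ then evaluates the integral in closed form:
\begin{equation*}
\int_0^{m}(1-2r^2/\Delta^2)^{(d-4)/2}\,\frac{2r}{\Delta^2\sqrt{2-\Delta^2}}\,dr \;=\; \frac{1}{(d-2)\sqrt{2-\Delta^2}}\Bigl(1-(1-2m^2/\Delta^2)^{(d-2)/2}\Bigr),
\end{equation*}
with $m=\min(\gamma,\Delta/\sqrt{2})$. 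If $\gamma\geq\Delta/\sqrt{2}$ the parenthesized factor equals $1$; otherwise the hypothesis $\gamma\geq\Delta/(2\sqrt{d})$ gives $2\gamma^2/\Delta^2\geq 1/(2d)$, hence $(1-2\gamma^2/\Delta^2)^{(d-2)/2}\leq e^{-(d-2)/(4d)}\leq e^{-1/8}$ for $d\geq 4$, making the factor at least $1-e^{-1/8}>1/9$. In either case it is bounded below by a universal positive constant.

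For the remaining pieces I use the trivial bounds $\int_0^\gamma (1-r^2)^{(d-3)/2}\,dr\leq\gamma$ and $\sqrt{2-\Delta^2}\leq\sqrt{2}$, and for the surface-area ratio I invoke Wendel's inequality $\Gamma(x+1/2)/\Gamma(x)\geq x/\sqrt{x+1/2}$ at $x=(d-2)/2$, which gives
\begin{equation*}
\frac{A_{d-3}}{A_{d-2}} \;=\; \frac{\Gamma((d-1)/2)}{\sqrt{\pi}\,\Gamma((d-2)/2)} \;\geq\; \frac{d-2}{\sqrt{2\pi(d-1)}}.
\end{equation*}
Substituting everything back, the $(d-2)$ factors cancel and I obtain $\partial_\Delta\cp_d(\gamma,\Delta)\geq 1/(18\sqrt{\pi(d-1)}\,\gamma)\geq 1/(56\gamma\sqrt{d})$, using $18\sqrt{\pi}<56$ and $\sqrt{d-1}\leq\sqrt{d}$. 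The main obstacle is simply keeping the constants tight enough to reach $1/56$; the bookkeeping in the small-$\gamma$ regime is the subtlest part, since the factor $1-(1-2\gamma^2/\Delta^2)^{(d-2)/2}$ vanishes as $\gamma\to 0$, and the hypothesis $\gamma\geq\Delta/(2\sqrt{d})$ is used precisely to keep it bounded away from zero. Once the identity $(1-r^2)(1-L^2)=1-2r^2/\Delta^2$ is in hand, the remainder is routine.
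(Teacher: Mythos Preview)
Your argument is correct and, in fact, cleaner than the paper's. Both proofs start identically: factor out the $\Delta$-independent denominator and apply Leibniz's rule to the inner integral, obtaining the integrand $(1-r^2)^{(d-4)/2}(1-L^2)^{(d-4)/2}\cdot \frac{2r}{\Delta^2\sqrt{2-\Delta^2}}$. From there the two diverge. The paper does \emph{not} spot your identity $(1-r^2)(1-L^2)=1-2r^2/\Delta^2$; instead it throws away the part of the range above $\gamma'=\Delta/(2\sqrt d)$, crudely bounds $1-L^2\ge 1-\frac{2r^2}{\Delta^2(1-r^2)}\ge 1-\frac{4}{7d}$ on $[0,\gamma']$, bounds the resulting power by $5/7$, and only then integrates the remaining factor $(1-r^2)^{(d-4)/2}\,r$ in closed form. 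Your route collapses the two powers exactly via the identity, integrates the single factor $(1-2r^2/\Delta^2)^{(d-4)/2}\cdot r$ in closed form over the whole range $[0,\min(\gamma,\Delta/\sqrt2)]$, and only afterwards invokes the hypothesis $\gamma\ge\Delta/(2\sqrt d)$ to bound $1-(1-2\gamma^2/\Delta^2)^{(d-2)/2}$ away from zero. The payoff is a shorter chain of inequalities and a visibly better constant (roughly $1/(32\gamma\sqrt d)$ before relaxing to $1/56$). Your handling of the truncation at $r=\Delta/\sqrt2$ is also more careful than the paper's, which leaves that edge case implicit. The Wendel bound $\Gamma(x+\tfrac12)/\Gamma(x)\ge x/\sqrt{x+\tfrac12}$ you use is equivalent in strength to the paper's citation of $A_{d-3}/A_{d-2}\ge\sqrt d/(2\sqrt3)$, so nothing is lost there.
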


An important corollary of Lemma \ref{lem:cp-deriv} is that given a hypothesis $h_v$ and $\Delta$, such that $\|v-w\| \leq \Delta$ we can estimate $\|v-w\|$ to accuracy $\rho$ using an active statistical query with query tolerance $\Omega(\rho/\Delta)$. Specifically:
\begin{lemma}
\label{lem:cond-distance-est}
Let $h_w$ be the target hypothesis. There is an algorithm {\tt MeasureDistance}$(v,\rho,\Delta')$ that given a unit vector $v$ and $\rho > 0$ and $\Delta'$ such that $\|v - w\| \leq \Delta'$, outputs a value $\tilde{\Delta}$ satisfying $|\|v - w\| - \tilde{\Delta}| \leq \rho$. The algorithm asks a single active SQ with filter tolerance $\tau_0 = \Omega(\Delta')$ and query tolerance of $\tau = \Omega(\rho/\Delta')$ and runs in time $\poly(\log(1/(\Delta\rho)))$.
\end{lemma}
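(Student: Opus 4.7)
The plan is to estimate $\Delta := \|v-w\|$ by estimating the single scalar $\cp_d(\gamma,\Delta)$ from Lemma \ref{lem:define-cp} with a carefully chosen margin $\gamma$, and then numerically inverting $\cp_d(\gamma,\cdot)$. Concretely, set $\gamma := c\,\Delta'/\sqrt{d}$ for a small absolute constant $c$ (to be fixed below), take the filter $\chi(x) := \mathbf{1}[|\la v,x\ra|\leq \gamma]$ and the query function $\phi(x,\ell) := \mathbf{1}[\sgn(\la v,x\ra)\neq \ell]$, and issue a single active SQ $(\chi,\phi)$ with tolerance parameters $(\tau_0,\tau)$ to be specified. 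Since the target is realizable, Lemma \ref{lem:define-cp} shows that the true value this query approximates is exactly $\cp_d(\gamma,\Delta)$.

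To verify the filter tolerance, normalize eq.~(\ref{eq:hs-band}) by the total surface area $A_{d-1}$ and use the standard estimates $A_{d-2}/A_{d-1} = \Theta(\sqrt{d})$ together with $(1-r^2)^{(d-3)/2} = \Theta(1)$ for $r = O(1/\sqrt{d})$; this yields $\pr_{U_d}[|\la v,x\ra|\leq \gamma] = \Theta(\gamma\sqrt{d}) = \Theta(\Delta')$, so choosing $\tau_0 = \Omega(\Delta')$ with a small enough hidden constant is admissible.

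For the query tolerance, our choice of $\gamma$ ensures $\gamma \geq \Delta'/(2\sqrt{d}) \geq \Delta/(2\sqrt{d})$ uniformly over $\Delta\in[0,\Delta']$, so Lemma \ref{lem:cp-deriv} gives $\partial_\Delta \cp_d(\gamma,\Delta) \geq 1/(56\gamma\sqrt{d}) = \Omega(1/\Delta')$ on this whole interval. Hence $\cp_d(\gamma,\cdot)$ is strictly increasing on $[0,\Delta']$ and its inverse (on the image) is $O(\Delta')$-Lipschitz. Consequently, if the simulated query returns $\mu$ with $|\mu - \cp_d(\gamma,\Delta)|\leq \tau$ and we pick $\tilde\Delta\in[0,\Delta']$ with $\cp_d(\gamma,\tilde\Delta) = \mu$ (clipping to the endpoints when $\mu$ falls outside the image), then $|\tilde\Delta - \Delta|\leq O(\Delta')\cdot \tau$. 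Taking $\tau = \Omega(\rho/\Delta')$ closes the bound to $|\tilde\Delta-\Delta|\leq \rho$.

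Finally, $\tilde\Delta$ is computed by binary search on $[0,\Delta']$. Each step evaluates $\cp_d(\gamma,\cdot)$, which is a one-dimensional integral that can be approximated to accuracy $\rho/2$ in time polylogarithmic in $1/(\Delta'\rho)$ by standard numerical quadrature, and $O(\log(1/(\Delta'\rho)))$ bisection steps then suffice by the derivative lower bound. The main subtlety is the two-sided constraint on $\gamma$: it has to be small enough for the filter mass to stay linear in $\gamma\sqrt{d}$ (so that $\tau_0 = O(\Delta')$ does not underestimate the actual mass) and simultaneously large enough for the hypothesis $\gamma \geq \Delta/(2\sqrt{d})$ of Lemma \ref{lem:cp-deriv} to hold uniformly over $\Delta \in [0,\Delta']$; fixing the constant $c$ to be bounded away from $0$ and above by a small constant makes both conditions hold simultaneously, yielding the claimed tolerances and runtime.
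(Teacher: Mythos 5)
Your proof is correct and follows essentially the same route as the paper's: choose the margin $\gamma$ proportional to $\Delta'/\sqrt{d}$ (the paper fixes it to exactly $\Delta'/(2\sqrt d)$, which forces your constant $c$ to be $\geq 1/2$, so ``small'' should be read as ``close to $1/2$''), invoke the derivative lower bound of Lemma~\ref{lem:cp-deriv} via the mean value theorem to conclude that $\cp_d(\gamma,\cdot)$ is $\Omega(1/\Delta')$-expansive on $[0,\Delta']$, invert numerically by binary search, and estimate the filter mass as $\Theta(\gamma\sqrt d)=\Theta(\Delta')$. The only cosmetic difference is that you use the disagreement indicator as the query function while the paper uses the correlation $h_v(x)\cdot\ell$, which are affinely equivalent.
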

\begin{proof}
Let $\gamma = \Delta'/(2\sqrt{d})$. This implies that $\gamma \geq \|v-w\|/(2\sqrt{d})$. Lemma \ref{lem:cp-deriv} together with the mean value theorem imply that if $\Delta_1,\Delta_2 \leq \Delta'$ and $\Delta_1 - \Delta_2 \geq \rho$ then
for some $\hat{\Delta} \in [\Delta_1,\Delta_2]$,
 \begin{eqnarray}\cp_d(\gamma,\Delta_1) - \cp_d(\gamma,\Delta_2) = \rho \cdot \partial_\Delta \cp_d(\gamma,\hat{\Delta}) \geq \rho/(56 \gamma \sqrt{d}) = \rho/(28 \Delta')\ .
\label{eq-lb}
 \end{eqnarray}
 This implies that in order to estimate $\|v-w\|$ to within tolerance $\rho$ it is sufficient to estimate $\cp_d(\gamma,\|v-w\|)$ to within $\rho/(28 \Delta')$.  To see this note that an estimate of $\cp_d(\gamma,\|v-w\|)$ within $\rho/(28 \Delta')$ is a value
$\mu$ such that $|\mu - \cp_d(\gamma,\|v-w\|)| \leq \rho/(28\Delta')$. Let $\tilde{\Delta}$ be such that $\cp_d(\gamma,\tilde{\Delta})=\mu$.  Note that Lemma \ref{lem:define-cp} does not give an explicit mapping from $\cp_d(\gamma,\Delta)$ to $\Delta$. But $\cp_d(\gamma,\Delta)$ is a monotone function of $\Delta$ and can be computed efficiently given $\Delta$. Therefore we can efficiently invert $\cp_d(\gamma,\Delta)$ using a simple binary search.
This computation will give us a value $\tilde{\Delta}$ such that $|\cp_d(\gamma,\tilde{\Delta}) - \cp_d(\gamma,\|v-w\|)| \leq \rho/(28\Delta')$. Using this together with eq.~(\ref{eq-lb}), we obtain that $| \tilde{\Delta} - \|v-w\|| \leq \rho$.

Let $``|\la v , x \ra |\leq \gamma"$ denote the function (of $x$) that outputs 1 when the condition is satisfied and $0$ otherwise. By definition, $\cp_d(\gamma,\|v-w\|)$ can be estimated to within $\rho/(28 \Delta')$ using an active SQ $(``| \la v , x \ra |\leq \gamma"; h_v(x) \cdot \ell)$ with query tolerance of $\tau = \rho/(28 \Delta')$ and filter tolerance of $\tau_0 = \Delta'/8 \leq \pr_{U_d}[| \la v , x \ra |\leq \gamma]$ (for example see \cite{DasguptaKM:09}).
\end{proof}

We can now use the estimates of distance of a vector to $w$ (the normal vector of the target hyperplane) and Lemma \ref{lem:reconstruct-coordinate} to obtain a vector which is close to $w$. We perform this iteratively until we obtain a vector $v$ giving a hypothesis with error of at most $\eps$.

\newcommand{\actHSuni}{{\tt ActiveLearnHS-U}}
\begin{theorem}
\label{th:learn-hs-asq}
There exists an active statistical algorithm \actHSuni\ that learns $\cH_d$ over $U_d$ to accuracy $1-\eps$, uses $(d+1)\log(1/\eps)$ active SQs with tolerance of $\Omega(1/\sqrt{d})$ and filter tolerance of $\Omega(1/\eps)$ and runs in time $d \cdot \poly(\log{(d/\eps)})$.
\end{theorem}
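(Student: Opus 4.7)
The plan is to iteratively refine a hypothesis unit vector $v_t$, halving its Euclidean distance to the target normal vector $w$ in each round. Start with any unit vector $v_0$ (so $\|v_0 - w\| \leq \Delta_0 = 2$); after $T = \lceil \log_2(2/\eps) \rceil = O(\log(1/\eps))$ rounds we will have $\|v_T - w\| \leq \eps$, which by Lemma~\ref{lem:error-2-dist}(\ref{it:error-from-dist}) yields $\pr[h_{v_T} \neq h_w] \leq \eps/2$.

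At round $t$, assume inductively that $\|v_t - w\| \leq \Delta_t := 2^{1-t}$. Choose a perturbation scale $\beta_t = \Theta(\Delta_t)$ and, for each standard basis vector $x^i$, form $v_t^i = (v_t + \beta_t x^i)/\|v_t + \beta_t x^i\|$. By the triangle inequality $\|v_t^i - w\| \leq \|v_t - w\| + \|v_t^i - v_t\| \leq \Delta_t + O(\beta_t) =: \Delta_t' = O(\Delta_t)$. Apply MeasureDistance (Lemma~\ref{lem:cond-distance-est}) once to $v_t$ and once to each $v_t^i$ with accuracy $\rho_t = c\,\Delta_t/\sqrt{d}$ (for a sufficiently small absolute constant $c$) and distance upper bound $\Delta_t'$. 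This costs $d+1$ active SQs per round with filter tolerance $\Omega(\Delta_t') = \Omega(\eps)$ and query tolerance $\Omega(\rho_t/\Delta_t') = \Omega(1/\sqrt{d})$, both of which are $\eps$-independent throughout.

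Given the resulting distance estimates $\tilde{\Delta}$ of $\|v_t - w\|$ and $\tilde{\Delta}_i$ of $\|v_t^i - w\|$, apply the identity of Lemma~\ref{lem:reconstruct-coordinate} with $u = x^i$ to obtain $\gamma_i$ approximating $\langle x^i, w\rangle$. Using Lemma~\ref{lem:mvt} on the map $\alpha \mapsto \alpha^2$ (and the fact that both $\|v_t - w\|$ and $\|v_t^i - w\|$ are $O(\Delta_t)$), the numerator in Lemma~\ref{lem:reconstruct-coordinate} is estimated to accuracy $O(\Delta_t \cdot \rho_t) = O(\Delta_t^2/\sqrt{d})$; dividing by $2\beta_t = \Theta(\Delta_t)$ gives $|\gamma_i - \langle x^i, w\rangle| \leq c'\Delta_t/\sqrt{d}$ for a constant $c'$ that shrinks with $c$. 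By Parseval, $w' := \sum_i \gamma_i x^i$ satisfies $\|w' - w\|^2 \leq d \cdot (c'\Delta_t/\sqrt{d})^2 = (c')^2 \Delta_t^2$, so choosing $c$ so that $c' \leq 1/4$ gives $\|w' - w\| \leq \Delta_t/4$. Normalizing, $v_{t+1} = w'/\|w'\|$ satisfies $\|v_{t+1} - w\| \leq 2\|w' - w\| \leq \Delta_t/2 = \Delta_{t+1}$, closing the induction.

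Summing over rounds yields $(d+1)\log(1/\eps)$ active SQs with tolerance $\Omega(1/\sqrt{d})$ and filter tolerance $\Omega(\eps)$; each MeasureDistance call runs in $\poly(\log(d/\eps))$ time (the binary-search inversion of $\cp_d$), so the total running time is $d \cdot \poly(\log(d/\eps))$. The main obstacle is the selection of $\beta_t$: if $\beta_t$ is too large then $v_t^i$ escapes the $O(\Delta_t)$-neighborhood of $w$ and the filter tolerance of MeasureDistance becomes $\Omega(1)$ rather than $\Omega(\eps)$-scaled with the current distance, while if $\beta_t$ is too small the $1/\beta_t$ amplification in Lemma~\ref{lem:reconstruct-coordinate} forces $\rho_t$ to shrink with $\eps$ and destroys the $\eps$-independent query tolerance. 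The choice $\beta_t = \Theta(\Delta_t)$ is precisely the regime in which both quantities stay independent of $\eps$, yielding the claimed bounds.
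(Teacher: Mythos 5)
Your iterative scheme—perturb the current hypothesis along each basis direction, invoke {\tt MeasureDistance}, reconstruct coordinates via Lemma~\ref{lem:reconstruct-coordinate}, combine by Parseval, and normalize—matches the paper's loop, and the tolerance and running-time bookkeeping is handled correctly. However, there is a genuine gap at initialization. You start from an \emph{arbitrary} unit vector $v_0$ and assert $\|v_0 - w\| \le \Delta_0 = 2$, then immediately call {\tt MeasureDistance} with distance bound $\Theta(1)$ (and $\Theta(1)$ bounds for the perturbed vectors $v_0^i$). But {\tt MeasureDistance} (Lemma~\ref{lem:cond-distance-est}) is built on Lemma~\ref{lem:define-cp}, which defines $\cp_d(\gamma,\Delta)$ only for $\Delta \le \sqrt{2}$, and Lemma~\ref{lem:cp-deriv}, whose derivative lower bound is likewise proved only for $\Delta \le \sqrt{2}$. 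For an arbitrary $v_0$ one can have $\|v_0 - w\|$ as large as $2$, and the perturbations $v_0^i$ (with $\beta_0=\Theta(1)$) can also be at distance nearly $2$ from $w$. In that regime the binary-search inversion of $\cp_d$ in {\tt MeasureDistance} is not justified, and indeed there is also a sign ambiguity ($h_{v_0}$ can be mostly wrong) that the machinery does not detect. So the first one or two rounds of your induction cannot be carried out with the lemmas as stated.

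The paper sidesteps this by a bootstrap step you omit: it first runs the \emph{passive} SQ algorithm \hsu\ (Theorem~\ref{th:learnHSinSQ}) with a fixed constant error parameter $\eps' = 1/(2\pi)$, which by Lemma~\ref{lem:error-2-dist} yields $u^1$ with $\|u^1 - w\| \le 1/2$. From that point on, $\|v - w\| \le 2^{-t} \le 1/2$ and $\|v^i - w\| \le 2\cdot 2^{-t} \le 1 < \sqrt{2}$ at every round, keeping all distances comfortably within the domain where Lemmas~\ref{lem:define-cp} and \ref{lem:cp-deriv} apply. This bootstrap costs only $d+1$ additional (passive) SQs with tolerance $\Omega(1/\sqrt d)$, so the stated query count and tolerances are unaffected. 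To close the gap in your argument you should either add this \hsu\ initialization (as the paper does) or explicitly extend the $\cp_d$ analysis to the full range $\Delta \in [0,2]$ together with a sign-disambiguation step, neither of which is free.

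One smaller remark: your closing discussion of the $\beta_t$ tradeoff is directionally right about the query tolerance, but the phrase about filter tolerance ``becoming $\Omega(1)$'' is not itself a problem (a larger $\tau_0$ is only easier to satisfy); the real loss when $\beta_t$ is too large is that the query tolerance $\Omega(\rho_t/\Delta_t')$ degrades because $\Delta_t'$ grows relative to $\rho_t$.
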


\begin{algorithm}
\caption{\actHSuni: \textit{Active} SQ Learning of Homogeneous Halfspaces over the Uniform Distribution}
\begin{algorithmic}[1]
\STATE Run \hsu\ with parameter error $\eps'= 1/(2\pi)$ to obtain $u^1$
\FOR{$t=1$ to $\lceil \log(1/\epsilon)\rceil-2$}
\STATE Set $\alpha=${\tt MeasureDistance}$(u^{t}, \frac{1}{8 \cdot 2^t\sqrt{d}}, 2^{-t})$
\FOR{$i=1$ to $d$}
\STATE Set $v^i = (v + 2^{-t} x^i)/\|v+ 2^{-t} x^i\|$
\STATE Set $\alpha_i=${\tt MeasureDistance}$(v^{i},\frac{1}{24 \cdot 2^t\sqrt{d}},2^{-t+1})$
\STATE Set $\gamma_i = 2^{t-1}(\|v+ 2^{-t} x^i\| (2-\alpha_i^2) - 2 + \alpha^2)\ $
\ENDFOR
\STATE Set $v' = \sum_{i\in [d]} \gamma_i x^{i}\ $
\STATE Set $u^{t+1} = v'/\|v'\|$
\ENDFOR
\RETURN $u^{\lceil \log(1/\epsilon)\rceil-1}$.
\end{algorithmic}
\label{alg:act-hs-uni}
\end{algorithm}

\begin{proof}
Our algorithm works by finding a vector $v$ that is at distance of at most $2 \eps$ from the normal vector of the target hyperplane which be denote by $w$. We do this via an iterative process such that at step $t$ we construct a vector $u^t$, satisfying $\|w - u^t\| \leq 2^{-t}$.
In step 1 we construct a vector $u^1$ such that $\| u^1 - w \| \leq 1/2$ by using the (non-active) algorithm \hsu\ (Theorem \ref{th:learnHSinSQ}) with error parameter of $\eps' = 1/(2\pi)$. By Lemma \ref{lem:error-2-dist}(\ref{it:estim-dist}) we get that $\pr[h_w \neq h_{u^1}] \leq 1/(2\pi)$ implies that $\| u^1 - w \| \leq 1/2$.

Now given a vector $v=u^t$ such that $\|w - v\| \leq 2^{-t}$ we construct a unit vector $v^*$ such that $\|w - v^*\| \leq 2^{-t-1}$ and set $u^{t+1} = v^*$. Clearly, for $w^* = u^{\lceil \log{(1/\eps)} \rceil -1}$ we will get $\|w-w^*\| \leq 2\eps$ and hence, by Lemma \ref{lem:error-2-dist}, $\pr[h_w \neq h_{w^*}] \leq \eps$.

Let $\Delta' = 2^{-t}$, $\beta = 2^{-t}$ and define $v^i = (v + \beta x^i)/\|v+ \beta x^i\|$.
For every $v^i$, we know that $\|v-v^i\| \leq \beta = \Delta'$, this means $\|w - v^i\| \leq 2\Delta'$. We use
{\tt MeasureDistance} (Lemma \ref{lem:cond-distance-est}) for $v$, distance bound $\Delta'$ and accuracy parameter $\rho = \Delta'/(8\sqrt{d})$ to obtain $\alpha$ such that $|\|v - w\| - \alpha| \leq \Delta'/(8\sqrt{d})$. Similarly,
for each $i \in [d]$ we use {\tt MeasureDistance} for $v^i$, distance bound $2\Delta'$ and parameter $\rho = \Delta'/(24\sqrt{d})$ to obtain $\alpha_i$ such that $|\|v^i - w\| - \alpha_i| \leq \Delta'/(24\sqrt{d})$.

For $i\in [d]$, we define
$$\gamma_i = \frac{\|v+ \beta x^i\| (2-\alpha_i^2) - 2 + \alpha^2}{2\beta}\ .$$

We view $\gamma_i$ as a function of $\alpha$ and $\alpha_i$ and observe that for $\alpha \in [0, \Delta']$, $$\left|\partial_\alpha \gamma_i\right| = \left|\frac{\alpha}{\beta}\right| \leq 1$$ and for $\alpha_i \in [0,2\Delta']$, $$\left|\partial_{\alpha_i} \gamma_i\right| = \left|\frac{-\|v+ \beta x^i\| \alpha_i}{\beta}\right| \leq 2 \|v+ \beta x^i\| \leq 3\ . $$

Therefore, by Lemma \ref{lem:reconstruct-coordinate} and Lemma \ref{lem:mvt}, $$\left|\gamma_i - \la x^i ,w \ra\right| \leq 3 \cdot \Delta'/(24\sqrt{d}) + \Delta'/(8\sqrt{d}) = \Delta'/(4\sqrt{d})\ .$$

Now let $$v' = \sum_{i\in [d]} \gamma_i x^{i}\ .$$ Parseval's identity implies that
$$\|w-v'\|^2 = \sum_{i \in [d]} |\gamma_i - \la x^i ,w \ra|^2 \leq d \cdot \Delta'^2/(16d) = \Delta'^2/16\ . $$
Let $v^* = v'/\|v'\|$. Clearly, $\|v^*-v'\| \leq \|w-v'\| \leq \Delta'/4$ and therefore, by triangle inequality, $\|v^*-w\| \leq \Delta'/2 = 2^{-t-1}$.

All that is left to prove are the claimed bounds on active SQs used in this algorithm and its running time.
First note that each step uses $d+1$ active SQs and there are at most $\log(1/\eps)$ steps. By Lemma \ref{lem:cond-distance-est} the tolerance of each query at step $t$ is $\Omega((\Delta'/\sqrt{d})/\Delta') = \Omega(1/\sqrt{d})$ and filter tolerance is $\Omega(2^{-t}) = \Omega(\eps)$. Lemma \ref{lem:cond-distance-est} together with the bound on the number of stages also imply the claimed running time bound.
\end{proof}

An immediate corollary of Theorems \ref{th:learn-hs-asq} and \ref{th:simulate-noise} is an active learning algorithm for $\cH_d$ that works in the presence of random classification noise.
\begin{corollary}
\label{cor:learn-hs-asq-noise}
There exists a polynomial-time active learning algorithm that given any $\eta \in [0,1/2)$, learns $\cH_d$ over $U_d$ with random classification noise of rate $\eta$ to error $\eps$ using $O((1-2\eta)^{-2} \cdot d^2 \log(1/\eps) \log(d \log(1/\eps)))$ labeled examples and $O((1-2\eta)^{-2} \cdot d^2 \cdot \eps^{-1} \log(1/\eps) \log(d \log(1/\eps)))$ unlabeled samples.
\end{corollary}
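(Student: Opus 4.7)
My plan is to derive the corollary by a direct black-box combination of Theorem~\ref{th:learn-hs-asq} with the noise-tolerant simulation given by Theorem~\ref{th:simulate-noise}, with only a routine union bound to account for the total failure probability across all queries.

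First, I invoke Theorem~\ref{th:learn-hs-asq} to fix the active statistical algorithm \actHSuni, and read off its query profile: at most $q = (d+1)\lceil \log(1/\eps) \rceil$ active statistical queries, each with query tolerance $\tau = \Omega(1/\sqrt{d})$ and filter tolerance $\tau_0 = \Omega(\eps)$. Next, for each such query I apply the simulator from Theorem~\ref{th:simulate-noise} with failure probability parameter $\delta' = \delta/q$, which by Theorem~\ref{th:simulate-noise} consumes
\[
O\!\left(\tau^{-2}(1-2\eta)^{-2}\log(1/\delta')\right)
= O\!\left(d\,(1-2\eta)^{-2}\log(q/\delta)\right)
\]
labeled examples from $\A^\eta$ and
\[
O\!\left(\tau_0^{-1}\tau^{-2}(1-2\eta)^{-2}\log(1/\delta')\right)
= O\!\left(d\,\eps^{-1}\,(1-2\eta)^{-2}\log(q/\delta)\right)
\]
unlabeled samples from $U_d$. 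By the union bound, with probability at least $1-\delta$ every one of the $q$ simulated responses is a valid reply to its active SQ, so from \actHSuni's perspective it is effectively running on the noiseless distribution and outputs a hypothesis with error at most $\eps$.

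Summing the per-query costs over all $q$ queries and substituting $\log(q/\delta) = O(\log(d\log(1/\eps)))$ (taking, say, $\delta$ to be a constant) gives total labeled sample complexity
\[
q \cdot O\!\left(d\,(1-2\eta)^{-2}\log(d\log(1/\eps))\right)
= O\!\left((1-2\eta)^{-2}\,d^2\,\log(1/\eps)\,\log(d\log(1/\eps))\right)
\]
and total unlabeled sample complexity
\[
q \cdot O\!\left(d\,\eps^{-1}\,(1-2\eta)^{-2}\log(d\log(1/\eps))\right)
= O\!\left((1-2\eta)^{-2}\,d^2\,\eps^{-1}\,\log(1/\eps)\,\log(d\log(1/\eps))\right),
\]
which matches the claimed bounds. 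Polynomial running time follows from the $d\cdot\poly(\log(d/\eps))$ runtime of \actHSuni\ in Theorem~\ref{th:learn-hs-asq} and the fact that each simulated query requires only computing empirical averages over the drawn samples.

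There is essentially no obstacle here; the only mildly subtle point to verify is that \actHSuni's queries are adaptive (each $u^{t+1}$ depends on responses gathered for $u^t$), which rules out using fresh samples only once via uniform convergence over a fixed query class and forces the per-query union bound I used above. Luckily the resulting overhead is just the $\log q = O(\log(d\log(1/\eps)))$ factor that appears in the final bound. I also note that since $\eta$ is supplied as input, I can apply Theorem~\ref{th:simulate-noise} directly and do not need the unknown-noise machinery of Remark~\ref{rem:unknown-noise}.
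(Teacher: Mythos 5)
Your proposal is correct and follows exactly the route the paper takes: the paper states this as an ``immediate corollary'' of Theorems~\ref{th:learn-hs-asq} and~\ref{th:simulate-noise} without supplying further detail, and your per-query application of the noise-tolerant simulator with a $\delta/q$ union bound, followed by summing the $q=(d+1)\lceil\log(1/\eps)\rceil$ per-query costs, is exactly the intended calculation. You also correctly read the filter tolerance in Theorem~\ref{th:learn-hs-asq} as $\Omega(\eps)$ (the statement there contains a typo writing $\Omega(1/\eps)$, but the theorem's own proof establishes $\Omega(\eps)$), which is what produces the $\eps^{-1}$ factor in the unlabeled bound.
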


\subsection{Learning with unknown noise rate}
One limitation of Corr.~\ref{cor:learn-hs-asq-noise} is that the simulation requires knowing the noise rate $\eta$.
We show that this limitation can be overcome by giving a procedure that approximately finds the noise rate which can then be used in the simulation of \actHSuni. The idea for estimating the noise rate is to measure the agreement rate of random halfspaces with the target halfspace. The agreement rate of a fixed halfspace is a linear function of the noise rate and therefore by comparing the distribution of agreement rates in the presence of noise to the distribution of agreement rates in the noiseless case we can factor out the noise rate.
\begin{lemma}
\label{lem:calculate-noise-rate}
There is an algorithm $\B$ that for every unit vector $w$ and values $\eta \in [0,1/2)$, $\tau,\delta \in(0,1)$, given $\tau,\delta$ and access to random examples from distribution $\A^\eta = (U_d, (1-2\eta) h_w)$ will, with probability at least $1-\delta$, output a value $\eta'$ such that $\frac{1-2\eta}{1-2\eta'} \in [1-\tau,1+\tau]$. Further, $\B$ runs in time polynomial in $d,1/\tau,1/(1-2\eta)$ and $\log(1/\delta)$ and uses $O(d \tau^{-2} (1-2\eta)^{-2} \cdot \log (d/((1-2\eta)\tau \delta)))$ random examples.
\end{lemma}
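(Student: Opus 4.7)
The plan is to leverage the following identity. For any unit vector $u$, since $\E_{(x,\ell)\sim\A^\eta}[\ell\mid x] = (1-2\eta) h_w(x)$ and $\E_{x\sim U_d}[h_u(x)h_w(x)] = 1 - 2\theta(u,w)/\pi$, we have
$$a(u) := \E_{(x,\ell)\sim\A^\eta}\bigl[h_u(x)\cdot\ell\bigr] = (1-2\eta)\bigl(1-2\theta(u,w)/\pi\bigr).$$
When $u$ itself is drawn from $U_d$, rotational invariance makes the distribution of $\theta(u,w)$ depend only on $d$, so the constant
$$c_d \;:=\; \E_{u\sim U_d}\!\left[\bigl(1-2\theta(u,w)/\pi\bigr)^2\right]$$
is known, $w$-independent, and, using the density $\propto (1-r^2)^{(d-3)/2}$ of $\la u,w\ra$, equal to $\Theta(1/d)$. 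Hence $\E_{u\sim U_d}[a(u)^2] = c_d(1-2\eta)^2$, and estimating this moment and dividing by $c_d$ lets us recover $(1-2\eta)^2$, from which $\eta$ can be read off.

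Algorithm $\B$ does exactly this. It samples $k$ i.i.d.\ unit vectors $u_1,\ldots,u_k$ from $U_d$, and for each $u_j$ it draws $m$ fresh labeled examples $(x_j^i,\ell_j^i)_{i=1}^m$ from $\A^\eta$ and forms the unbiased $U$-statistic
$$\widehat{a}_j^{(2)} \;:=\; \frac{1}{m(m-1)}\sum_{i\neq i'} h_{u_j}(x_j^i)\,\ell_j^i\cdot h_{u_j}(x_j^{i'})\,\ell_j^{i'} \;=\; \frac{m\bar a_j^{\,2}-1}{m-1},$$
where $\bar a_j := \tfrac{1}{m}\sum_i h_{u_j}(x_j^i)\,\ell_j^i$. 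Since $(h_{u_j}(x_j^i)\ell_j^i)^2 = 1$, this satisfies $\E[\widehat{a}_j^{(2)}\mid u_j] = a(u_j)^2$. It then sets $\widehat c := \frac{1}{k}\sum_j \widehat{a}_j^{(2)}$, clamps the result into $[0,1]$ if necessary, and outputs $\eta' := \tfrac{1}{2}\bigl(1-\sqrt{\widehat c/c_d}\bigr)$.

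For the analysis, decompose $\Var(\widehat c) = \frac{1}{k}\Var(a(u)^2) + \frac{1}{k}\,\E\bigl[\Var(\widehat a_j^{(2)}\mid u_j)\bigr]$. Because $\la u,w\ra$ is $O(1/\sqrt d)$-subgaussian under $U_d$, $a(u)^2$ has typical size $\Theta((1-2\eta)^2/d)$ and variance $O((1-2\eta)^4/d^2)$; the standard variance formula for a degree-two $U$-statistic with $\pm 1$-valued kernel entries gives $\Var(\widehat a_j^{(2)}\mid u_j) = O(a(u_j)^2/m + 1/m^2)$. Taking $m = \Theta(d/(1-2\eta)^2)$ drives these per-batch terms down to $O(c_d^2(1-2\eta)^4)$, and taking $k = \Theta(\tau^{-2}\log(1/\delta))$ (for example via a median-of-means boost of a Chebyshev bound, or directly via Bernstein) yields $|\widehat c - c_d(1-2\eta)^2| \leq \tfrac{\tau}{4}\,c_d(1-2\eta)^2$ with probability at least $1-\delta$. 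Dividing by $c_d$ and taking square roots gives $(1-2\eta)/(1-2\eta') \in [1-\tau,1+\tau]$, while the total sample count $km = O(d\tau^{-2}(1-2\eta)^{-2}\log(1/\delta))$ matches the claim. The main obstacle is precisely this balancing act: the signal $c_d(1-2\eta)^2$ shrinks with both $d$ and $(1-2\eta)^2$ while the trivial per-batch variance $1/m^2$ does not, so $m$ must be pushed up to $\Theta(d/(1-2\eta)^2)$ just to keep $1/m^2$ below the squared signal, after which only $k=\Theta(\tau^{-2})$ outer draws are needed; any coarser split either wastes samples or fails to certify $(1-2\eta)^2$ with the required multiplicative accuracy.
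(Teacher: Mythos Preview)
Your approach is close in spirit to the paper's: both exploit that $\E_{\A^\eta}[\ell\,h_v(x)] = (1-2\eta)\,\E_{U_d}[h_w(x)h_v(x)]$ and average a functional of this correlation over random unit vectors $v$. The paper uses the expected \emph{absolute value}, obtaining $\nu_\eta = (1-2\eta)\nu$ with $\nu = \Theta(1/\sqrt d)$, whereas you use the expected \emph{square}, obtaining $c_d(1-2\eta)^2$ with $c_d = \Theta(1/d)$; your $U$-statistic is a clean way to debias the inner square, which the paper does not need since $|\cdot|$ commutes with scaling by $(1-2\eta)$.

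There is, however, a genuine gap: your choice of batch size $m = \Theta(d/(1-2\eta)^2)$ is circular, since $(1-2\eta)$ is exactly the unknown you are trying to recover. You correctly identify that the $1/m^2$ term in the inner variance must be pushed below $\Theta(\tau^2(1-2\eta)^4/d^2)$, but any $m$ fixed in advance either squanders samples when $\eta$ is small or fails to meet this bound when $\eta$ is close to $1/2$. The paper handles precisely this issue with a guess-and-double search: for $i=1,2,\ldots$ it estimates $\nu_\eta$ to tolerance $\nu\cdot 2^{-i}$ (at geometrically growing cost) and stops once the estimate exceeds $2\nu\cdot 2^{-i}$, certifying that $2^{-i}\le (1-2\eta)\le 3\cdot 2^{-i+1}$; only then does it perform the final high-accuracy estimate calibrated to this crude bound. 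Your scheme can be repaired in the same way (double $m$ until $\widehat c$ is well-separated from its own noise level), but as written the algorithm is not executable and the analysis is incomplete. This doubling step is also what produces the extra $\log(1/((1-2\eta)\tau))$ factor inside the logarithm of the stated sample bound, which your write-up omits.

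A smaller point: invoking Bernstein ``directly'' is delicate because your summands $\widehat a_j^{(2)}$ have worst-case range $\Theta(1)$ while the target accuracy is $\Theta(\tau(1-2\eta)^2/d)$, so the $B/\epsilon$ term threatens to dominate; median-of-means (which you also mention) avoids this and is the cleaner route.
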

\begin{proof}
We consider the expected correlation of a randomly chosen halfspace with some fixed unknown halfspace $h_u$. Namely let
$$\nu = \E_{v \sim U_d}[|\E_{x \sim U_d}[h_u(x) \cdot h_v(x)]|] .$$ Spherical symmetry implies that $\nu$ does not depend on $u$. First note that $$\E_{x \sim U_d}[h_u(x) \cdot h_v(x)] = 2\arcsin(\la u , v \ra)/\pi \geq 2 \la u , v \ra/\pi\ .$$
A well-known fact is that for a randomly and uniformly chosen unit vector $v$, with probability at least $1/8$,  $\la u , v \ra \geq 1/\sqrt{d}$. This implies that with probability at least $1/8$, $\E_{x \sim U_d}[h_u(x) \cdot h_v(x)] \geq 2/(\pi \sqrt{d})$ and hence $\nu \geq c/ \sqrt{d}$ for some fixed constant $c$. Henceforth we can assume that $\nu$ is known exactly (as it is easy to estimate the necessary integral with the accuracy sufficient for our use).

At the same time we know that $$\E_{\A^\eta}[ \ell \cdot h_v(x)] = (1-2\eta) \E_{U_d}[h_w(x) \cdot h_v(x)]\ .$$ This means that
$$\nu_\eta = \E_{v \sim U_d}[|\E_{(x,\ell) \sim \A^\eta}[h_w(x) \cdot h_v(x)]|] = (1-2\eta)\nu\ .$$
Therefore in order to estimate $1-2\eta$ we estimate $\nu_\eta$ given samples from $\A^\eta$. To estimate $\nu_\eta$ we draw a set of random unit vectors $V$ and a set $S$ of random examples from $\A$. For each $v \in V$ we estimate $|\E_{(x,\ell) \sim \A^\eta}[h_w(x) \cdot h_v(x)]|$ using the random examples in $S$ and let $\alpha_v$ denote the corresponding estimate. The average of $\alpha_v$'s is an estimate of $\nu_\eta$.
Chernoff-Hoeffding bounds imply that for $t_1(\theta,\delta') = O(\theta^{-2} \log (1/\delta'))$ and $t_2(\theta,\delta') = O(\theta^{-2} \log (t_1(\theta,\delta')/\delta'))$, the estimation procedure above with $|V| = t_1(\theta,\delta')$ and $S = t_2(\theta,\delta')$, will with probability $1-\delta'$ return an estimate of $\nu_\eta$ within $\theta$.

We first find a good lower bound on $1-2\eta$ via a simple guess, estimate and double process. For $i=1,2,3,\ldots$ we estimate $\nu_\eta$ with tolerance $\nu \cdot 2^{-i}$ and confidence $\delta/2^{-i-1}$ until we get an estimate that equals at least $2 \cdot \nu \cdot 2^{-i}$. Let $i_\eta$ denote the first step at which this condition was satisfied. We claim that with probability at least $1-\delta/2$, $$2^{-i_\eta} \leq (1-2\eta) \leq 3 \cdot 2^{-i_\eta+1}.$$

First we know that the estimates are successful for every $i$ with probability at least $1-\delta/2$. The stopping condition implies that $\nu_\eta \geq \nu \cdot 2^{-i_\eta}$ and in particular, $(1-2\eta) \geq 2^{-i_\eta}$. Now to prove that $(1-2\eta) \leq 3 \cdot 2^{-i_\eta+1}$ we show that $i_\eta \leq \lceil \log{(3/(1-2\eta))} \rceil$. This is true since for $k= \lceil \log{(3/(1-2\eta))} \rceil$ we get that $(1-2\eta) \geq 3 \cdot 2^{-k}$ and hence $\nu_\eta = (1-2\eta)\nu \geq 3 \nu 2^{-k}$. Therefore an estimate of $\nu_\eta$ with tolerance $\nu \cdot 2^{-k}$ must be at least $2\nu \cdot 2^{-k}$. This means that $i_\eta \leq k$.

Given a lower bound of $2^{-i_\eta}$, we estimate $\nu_\eta$ to accuracy $\nu \tau 2^{-i_\eta}/2 \leq (1-2\eta) \nu \tau/2$ with confidence $1-\delta/2$ and let $\nu'_\eta$ denote the estimate. We set $1-2\eta' = \nu'/\nu$. We first note that $|(1-2\eta') - (1-2\eta)| \leq 2^{-i_\eta} \tau/2 \leq (1-2\eta) \tau/2$ and therefore $$\frac{1-2\eta}{1-2\eta'} \in \left[\frac{1}{1+\frac{\tau}{2}},\frac{1}{1-\frac{\tau}{2}}\right] \subseteq [1-\tau,1+\tau]\ .$$

Using the fact that $\nu \geq c/ \sqrt{d}$ and $i_\eta \leq \lceil \log{(3/(1-2\eta))} \rceil$ we can conclude that
the first step of the estimation procedure requires $O(d (1-2\eta)^{-2} \cdot \log \frac{d}{(1-2\eta)\delta})$ examples and the second step requires $O(d \tau^{-2} (1-2\eta)^{-2} \cdot \log \frac{d}{(1-2\eta)\delta\tau})$ examples. The straightforward implementation has running time of $O(d^3 \tau^{-4} (1-2\eta)^{-4} \cdot \log \frac{d}{(1-2\eta)\delta\tau})$.
\end{proof}

Note that by Theorem \ref{th:learn-hs-asq} our algorithm for learning halfspaces uses $\tau = \Omega(1/\sqrt{d})$. We can apply Lemma \ref{lem:calculate-noise-rate} together with Remark \ref{rem:unknown-noise} and Corollary \ref{cor:learn-hs-asq-noise} to obtain a version of the algorithm that does not require the knowledge of $\eta$.
\begin{corollary}
\label{cor:learn-hs-asq-noise-unk}
There exists a polynomial-time active learning algorithm that for any $\eta \in [0,1/2)$, learns $\cH_d$ over $U_d$ with random classification noise of rate $\eta$ to error $\eps$ using $$O\left((1-2\eta)^{-2} \cdot d^2 \left(\log \frac{d}{(1-2\eta)\delta\tau} + \log(1/\eps) \log(d \log(1/\eps))\right)\right)$$ labeled examples and $$O((1-2\eta)^{-2} \cdot d^2 \cdot \eps^{-1} \log(1/\eps) \log(d \log(1/\eps)))$$ unlabeled samples.
\end{corollary}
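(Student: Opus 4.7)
The plan is to combine the noise-rate estimator of Lemma~\ref{lem:calculate-noise-rate} with the known-noise-rate active learner of Corollary~\ref{cor:learn-hs-asq-noise}, using Remark~\ref{rem:unknown-noise} as the bridge that converts a sufficiently accurate estimate of $\eta$ into a valid simulation of the active SQ algorithm \actHSuni. First I would invoke Lemma~\ref{lem:calculate-noise-rate} with tolerance parameter $\tau$ set to match the query tolerance of \actHSuni, namely $\tau = \Theta(1/\sqrt{d})$ (from Theorem~\ref{th:learn-hs-asq}), and with confidence parameter $\delta/2$. This yields an estimate $\eta'$ such that $\tfrac{1-2\eta}{1-2\eta'} \in [1-\tau,1+\tau]$ with probability $1-\delta/2$, at labeled-sample cost
\[
O\!\left(d\,\tau^{-2}(1-2\eta)^{-2}\log\tfrac{d}{(1-2\eta)\delta\tau}\right) = O\!\left(d^2(1-2\eta)^{-2}\log\tfrac{d}{(1-2\eta)\delta\tau}\right).
\]

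Next I would run the noise-tolerant simulation of \actHSuni\ from Corollary~\ref{cor:learn-hs-asq-noise}, but plugging in $\eta'$ instead of the (unavailable) true $\eta$. Remark~\ref{rem:unknown-noise} guarantees that this substitution is legal as long as $\tfrac{1-2\eta}{1-2\eta'}$ lies within $[1-\tau/4,1+\tau/4]$ of 1 for the query tolerance $\tau$ at hand; since \actHSuni\ only uses queries of tolerance $\Omega(1/\sqrt{d})$ and our estimator was tuned to multiplicative accuracy $\Theta(1/\sqrt{d})$ (adjusting constants by a factor of $4$ if needed), each active SQ is then simulated to within the requested tolerance with confidence $1-\delta/2$ via Theorem~\ref{th:simulate-noise}. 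Summing the two failure probabilities gives overall success probability $1-\delta$.

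Finally, adding the two contributions to labeled sample complexity yields the claimed
\[
O\!\left((1-2\eta)^{-2}d^2\bigl(\log\tfrac{d}{(1-2\eta)\delta\tau} + \log(1/\eps)\log(d\log(1/\eps))\bigr)\right)
\]
labeled examples, while the unlabeled sample complexity is dominated by the simulation of \actHSuni\ (the noise-rate estimator only uses labeled examples), giving the stated $O((1-2\eta)^{-2}d^2\eps^{-1}\log(1/\eps)\log(d\log(1/\eps)))$ unlabeled samples. The polynomial running time follows from the corresponding polynomial bounds in Lemma~\ref{lem:calculate-noise-rate} and Theorem~\ref{th:learn-hs-asq}.

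The only real subtlety—and the main place to be careful—is ensuring that the multiplicative accuracy of $\eta'$ matches \emph{every} query tolerance that arises inside \actHSuni, not just the nominal $\Omega(1/\sqrt{d})$. Since Theorem~\ref{th:learn-hs-asq} uniformly bounds the query tolerance of \actHSuni\ from below by $\Omega(1/\sqrt{d})$, it suffices to call the estimator once, up front, with $\tau$ equal to a small constant times this lower bound; no adaptive re-estimation of $\eta$ during the execution of \actHSuni\ is needed. This avoids the difficulty mentioned at the end of Remark~\ref{rem:unknown-noise} about passive hypothesis testing being too expensive in the active setting.
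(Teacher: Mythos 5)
Your proposal is correct and follows the same route the paper implicitly describes: invoke Lemma~\ref{lem:calculate-noise-rate} once, up front, with multiplicative tolerance matched (up to a constant factor as in Remark~\ref{rem:unknown-noise}) to the $\Omega(1/\sqrt{d})$ query tolerance of \actHSuni, then feed the resulting estimate $\eta'$ into the simulation of Corollary~\ref{cor:learn-hs-asq-noise} and add the two sample-complexity contributions. You also correctly observe that the estimator contributes only to the labeled count (so the unlabeled bound is inherited unchanged) and that a single non-adaptive estimation suffices because \actHSuni\ has a uniform lower bound on query tolerance, which is exactly why the paper's approach sidesteps the expensive hypothesis-testing alternative mentioned at the end of Remark~\ref{rem:unknown-noise}.
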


\section{Differentially-private active learning}
\label{sec:privacy}
In this section we show that active SQ learning algorithms can also be used to obtain differentially private active learning algorithms. We assume that a learner has full access to unlabeled portion of some database of $n$ examples $S \subseteq X \times Y$ which correspond to records of individual participants in the database. In addition, for every element of the database $S$ the learner can request the label of that element. As usual, the goal is to minimize the number of label requests. In addition, we would like to preserve the {\em differential privacy} of the participants in the database, a now-standard notion of privacy introduced in \cite{DMNS06}. A simple scenario in which active differentially-private learning could be valuable is medical research in which the goal is to create an automatic predictor of whether a person has certain medical condition. It is often the case that while many unlabeled patient records are available, discovering the label requires work by a medical expert or an expensive test (or both). In such a scenario an active learning algorithm could significantly reduce costs of producing a good predictor of the condition while differential privacy ensures that the predictor does not reveal any information about the patients whose data was used for the algorithm.

Formally, for some domain $X \times Y$, we will call $S \subseteq X \times Y$ a \emph{database}. Databases $S,S'\subset X \times Y$ are \emph{adjacent} if one can be obtained from the other by modifying a single element. Here we will always have $Y = \{-1,1\}$. In the following, $A$ is an algorithm that takes as input a database $D$ and outputs an element of some finite set $R$.
\begin{definition}[Differential privacy \cite{DMNS06}]
A (randomized) algorithm $A:2^{X\times Y} \rightarrow R$ is \emph{$\alpha$-differentially-private} if for all $r \in R$ and every pair of adjacent databases $S,S'$, we have $\Pr[A(S) = r] \leq e^\eps\Pr[A(S') = r]$. \label{def:privacy}
\end{definition}
Here we consider algorithms that operate on $S$ in an active way. That is the learning algorithm receives the unlabeled part of each point in $S$ as an input and can only obtain the label of a point upon request. The total number of requests is the label complexity of the algorithm. We note the definition of differential privacy we use does not make any distinction between the entries of the database for which labels were requested and the other ones. In particular, the privacy of all entries is preserved. Further, in our setting the indices of entries for which the labels are requested are not a part of the output of the algorithm and hence do not need to be differentially private.

As first shown by Blum \etal \cite{BlumDMN:05}, SQ algorithms can be automatically translated into differentially-private\footnote{In \cite{BlumDMN:05} a related but different definition of privacy was used. However, as pointed out in \cite{KasiviswanathanLNRS11} the same translation can be used to achieve differential privacy.} algorithms. We now show that, analogously, active SQ learning algorithms can be automatically transformed into differentially-private active learning algorithms.
\begin{theorem}
\label{active-private}
Let $A$ be an algorithm that learns a class of functions $H$ to accuracy $1-\eps$ over distribution $D$ using $M_1$ active SQs of tolerance $\tau$ and filter tolerance $\tau_0$ and $M_2$ target-independent queries of tolerance $\tau_u$. There exists a learning algorithm $A'$ that given $\alpha > 0,\delta >0$ and active access to database $S \subseteq X \times \{-1,1\}$ is $\alpha$-differentially-private and uses at most $O([\frac{M_1}{\alpha\tau} + \frac{M_1}{\tau^2}]\log(M_1/\delta))$ labels. Further, for some $n=O([\frac{M_1}{\alpha\tau_0\tau} +
\frac{M_1}{\tau_0\tau^2}+\frac{M_2}{\alpha\tau_u} +
\frac{M_2}{\tau_u^2}]\log((M_1+M_2)/\delta))$, if $S$ consists of at least $n$ examples drawn randomly from $D$ then with, probability at least $1-\delta$, $A'$ outputs a hypothesis with accuracy $\geq 1-\eps$ (relative to distribution $D$). The running time of $A'$ is the same as the running time of $A$ plus $O(n)$.
\end{theorem}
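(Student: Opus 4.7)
The plan is to simulate each query of $A$ using the Laplace mechanism on a disjoint subsample of $S$, and then invoke parallel composition of differential privacy. Concretely, I would partition $S$ into $M_1+M_2$ disjoint blocks $\{S_i\}$, assigning one block per query. For the $i$-th active SQ $(\chi_i,\phi_i)$, set $k_i = \Theta(\log(M_1/\delta)[1/\tau^2 + 1/(\alpha\tau)])$ and $n_i = \Theta(k_i/\tau_0)$. Scan $S_i$ in order, pass each $x$ through a $\mathrm{Bernoulli}(\chi_i(x))$ coin, request the label whenever the coin is $1$, and collect the first $k_i$ labeled pairs; if fewer pass, pad the missing entries with the value $0$ so the denominator stays fixed at $k_i$. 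Release the padded empirical mean of $\phi_i$ with added $\mathrm{Lap}(4/(k_i\alpha))$ noise as the answer. Target-independent queries are handled the same way but without requesting any label and with $k_i^{(u)} = \Theta(\log((M_1{+}M_2)/\delta)[1/\tau_u^2+1/(\alpha\tau_u)])$ unlabeled entries.

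For privacy, view the Bernoulli coins as driven by data-independent uniforms $u_j\in[0,1]$. For any fixed realization of these coins, changing a single database entry can flip at most one pass/fail outcome and/or alter at most one collected $(x_j,y_j)$, so the collected prefix shifts by at most one slot and the padded empirical mean changes by at most $4/k_i$. The Laplace mechanism therefore makes the $i$-th release $\alpha$-differentially private. Since the blocks $\{S_i\}$ are disjoint, parallel composition gives $\alpha$-DP for the entire transcript, and post-processing into the final hypothesis preserves DP. This step crucially uses that the indices of requested labels are not part of the output; only the noisy answers are.

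For accuracy, conditional on at least $k_i$ entries passing the filter, the $k_i$ collected labels are i.i.d.\ from $\A_{|\chi_i}$. Hoeffding gives empirical error at most $\tau/2$ with probability $\geq 1-\delta/(4(M_1+M_2))$ when $k_i=\Omega(\log((M_1{+}M_2)/\delta)/\tau^2)$; the Laplace tail bound gives noise magnitude at most $\tau/2$ with the same failure probability when $k_i=\Omega(\log((M_1{+}M_2)/\delta)/(\alpha\tau))$. A multiplicative Chernoff bound shows $n_i=\Theta(k_i/\tau_0)$ is enough to collect $k_i$ passing entries with probability $\geq 1-\delta/(4(M_1+M_2))$ whenever $\E_D[\chi_i]\geq \tau_0$. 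Summing $k_i$ across the $M_1$ active queries yields the stated label bound $O([\frac{M_1}{\alpha\tau}+\frac{M_1}{\tau^2}]\log(M_1/\delta))$, summing $n_i$ across all $M_1+M_2$ queries gives the stated bound on $n$, and a union bound over the $O(M_1+M_2)$ failure events caps the overall failure probability at $\delta$.

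The main obstacle is the sensitivity analysis, since both the filter outcomes and the collected prefix depend on the data. A naive Laplace application on the filtered mean, or on the ratio of filtered sums, would give sensitivity $O(1/(\tau_0 k_i))$ and introduce a spurious $1/\tau_0$ factor into the label count. Fixing the collection size $k_i$ in advance, and padding with zeros if too few entries pass, keeps sensitivity uniform at $O(1/k_i)$ while the ``enough entries pass'' event is absorbed into the accuracy union bound. This is the step that matches the claimed label complexity and the $1/\tau_0$ (rather than $1/\tau_0^2$) scaling of the unlabeled sample size.
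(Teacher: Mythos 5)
Your proposal follows essentially the same route as the paper's proof: partition $S$ into disjoint per-query blocks, answer each query with the Laplace mechanism on the filtered subsample, invoke parallel composition for privacy, and combine Hoeffding, the Laplace tail, and a multiplicative Chernoff bound with a union bound for accuracy. The one place you go beyond the paper is the sensitivity analysis: the paper adds Laplace noise of width $O(1/(\alpha|T|))$ where $|T|$ is the (data-dependent) number of points passing the filter and argues the empirical answer changes by at most $1/|T|$, which is informal both because the noise scale then depends on the data and because changing one example also perturbs $|T|$ in the denominator; your fixed-$k_i$-with-zero-padding scheme makes the sensitivity a deterministic $O(1/k_i)$ and the Laplace scale data-independent, absorbing the ``not enough passers'' event into the accuracy union bound. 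This is a genuine tightening of the privacy argument that yields the same asymptotic bounds as the paper's proof.
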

\begin{proof}
%The proof follows the lines of \cite{DN04,BDMN05,KLNRS08}.
We first consider the active SQs of $A$. We will answer each such query
using a disjoint set of $O([\frac{1}{\alpha\tau_0\tau} +
\frac{1}{\tau_0\tau^2}]\log(M_1/\delta))$ unlabeled examples.
The subset $T$ of these examples that satisfy the filter will be
queried for their labels and used to compute an answer to the
statistical query (by taking the empirical average in the usual way).
Additional noise drawn from a Laplace distribution will
then be added to the answer in order to preserve privacy.

We begin by analyzing the amount of noise needed to achieve the desired privacy
guarantee.  First, since each query is being answered using a disjoint set of
examples, changing any given example can affect the answer to at most
one query; so, it suffices to answer each query with $\alpha$-differential privacy.
Second, modifying any given example can change the empirical answer to
a query by at most $1/|T|$ (there are three cases: the example was
already in $T$ and remains in $T$ after modification, the example was
in $T$ and is removed from $T$ due to the modification, or the
example was not in $T$ and is added to $T$ due to the modification;
each changes the empirical answer by at most $1/|T|$).
Therefore, $\alpha$-privacy can be achieved by adding a quantity $\xi$
selected from a Laplace distribution of width $O(\frac{1}{\alpha |T|})$ to the
empirical answer over the labeled sample.
Finally, we solve for the size of $T$ needed to ensure that with sufficiently high probability $|\xi| \leq
\tau/2$ so that the effect on the active SQ after correction for noise
is at most $\tau/2$.
Specifically, the Laplace distribution has the property that with
probability at least $1-\delta'$, the magnitude of $\xi$ is at most
$O(\frac{1}{\alpha |T|}\log(1/\delta'))$.  Setting this to
$\tau/2$ and using $\delta' = \delta/(6M_1)$ we have that
privacy $\alpha$ can be guaranteed with perturbation at most
$\tau/2$ as long as we have
$|T| \geq c(\frac{1}{\alpha\tau}\log(M_1/\delta))$ for
sufficiently large constant $c$.

Next, we also need  to ensure that $T$ is large enough so that with probability at
least $1-\delta'$, even without the added Laplace noise, the empirical average
of the query function over $T$ is within $\tau/2$ of the true
value.  By Hoeffding bounds, this is ensured if
$|T| \geq c(\frac{1}{\tau^2}\log(M_1/\delta))$ for sufficiently
large constant $c$.

Finally, we need the unlabeled sample to be large enough so that with
probability at least $1-\delta'$, the labeled sample $T$ will satisfy
both the above conditions. By Hoeffding bounds, this is ensured by an
unlabeled sample of size $O(\frac{1}{\tau_0}[\frac{1}{\alpha\tau} +
\frac{1}{\tau^2}]\log(M_1/\delta))$.

The above analysis was for each active SQ.  There are $M_1$
active SQs in total so the total sample size is a factor $M_1$ larger,
and by a union bound over all $M_1$ queries we have that with
probability at least  $1 - \delta/2$, all are answered within their
desired tolerance levels.

Now, we analyze the $M_2$ target-independent queries. Here, by standard analysis (which is also a special case of the analysis above), we get that it is sufficient to use $O([\frac{1}{\alpha \tau_u} +
\frac{1}{\tau_u^2}]\log(M_2/\delta))$ unlabeled samples  to answer all the queries with probability at least $1-\delta/2$.
Finally, summing up the sample sizes and applying a union bound over the failure probabilities we get the claimed bounds on the sample complexity and running time. We remark that the algorithm is $\alpha$-differentially-private even when the samples are not drawn from distribution $D$.
\end{proof}

The simulation above can be easily made tolerant to random classification (or uncorrelated) noise in exactly the same way as in Theorem \ref{th:simulate-noise}.

In our setting it is also natural to treat the privacy of labeled and unlabeled parts differently. For much the same reason that unlabeled data is often much more plentiful than labeled data, in many cases the label information
is much more sensitive, in a privacy sense, than the unlabeled
feature vector. For example the unlabeled data may be fully public
(obtained by crawling the web or from a public address-book) and the
labels obtained from a questionnaire. To reflect this one can define two privacy parameters $\alpha_\ell$ and
$\alpha$ with $\alpha_\ell$ denoting the (high) sensitivity of the labeled
information and $\alpha$ denoting the (lower) sensitivity of the feature
vector alone. More formally, in addition to requiring $\alpha$-differential privacy we can require $\alpha_\ell$-differential privacy on databases which differ only in a single label (for $\alpha_\ell < \alpha$). A special case of this model where only label privacy matters was studied in \cite{ChaudhuriHsu:11} (a model with a related but weaker requirement in which labeled points are private and unlabeled are not was recently considered in \cite{claire-privacy-13}). It is not hard to see that with this definition our analysis will give an algorithm that uses $O([\frac{M_1}{\alpha_\ell \tau} + \frac{M_1}{\tau^2}]\log(M_1/\delta))$ labels and requires a database of size $n$ for some $n=\Omega([\frac{M_1}{\alpha_\ell \tau_0 \tau} + \frac{M_1}{\tau_0\tau^2}+\frac{M_2}{\alpha \tau_u} +
\frac{M_2}{\tau_u^2}]\log((M_1+M_2)/\delta))$. Note that in this result the privacy constraint on labels does not affect the number of samples required to simulate target-independent queries.

%Kasiviswanathan \etal showed that SQ algorithms can automatically transformed to {\em locally differentially private} algorithms, namely algorithms in which  \cite{KasiviswanathanLNRS11}.

\paragraph{Improvement over passive differentially-private learning}
An immediate consequence of Theorem~\ref{active-private} is that for learning of homogeneous halfspaces over uniform or log-concave distributions we can obtain differential privacy while essentially preserving the label complexity. For example, by combining Theorems ~\ref{active-private} and~\ref{th:learn-hs-asq}, we can efficiently and differentially-privately learn homogeneous halfspaces under the uniform distribution with privacy parameter $\alpha$ and error parameter $\epsilon$ by using only $\tilde{O}(d \sqrt{d} \log(1/\epsilon))/\alpha + d^2 \log(1/\epsilon)) $ labels. However, it is known that any passive learning algorithm, even ignoring privacy considerations and noise requires $\Omega\left(d/\epsilon\right)$ labeled examples~\cite{phil}. So for $\alpha \geq 1/\sqrt{d}$ and small enough $\epsilon$ we get better label complexity.

\section{Discussion}
\label{sec:discussion}
We described a framework for designing efficient active learning algorithms that are tolerant to random classification noise. We used our framework to obtain the first  computationally-efficient algorithm for actively learning homogeneous linear separators over log-concave distributions with exponential improvement in the dependence on the error $\eps$ over its passive counterpart. In addition, we showed that our algorithms can be automatically converted to efficient active differentially-private algorithms.

Our work suggests that, as in passive learning, active statistical algorithms might be essentially as powerful as example-based efficient active learning algorithms. It would be interesting to find more general evidence supporting this claim or, alternatively, a counterexample. An important aspect of (passive) statistical learning algorithms is that it is possible to prove unconditional lower bounds on such algorithms using SQ dimension~\cite{BlumFJ+:94} and its extensions. It would be interesting to develop an active analogue of these techniques and give meaningful lower bounds based on them. This could provide a useful tool for understanding the sample complexity of differentially private active learning algorithms.
% Use it in an interesting way.
%\begin{enumerate}
%\item What is the unlabeled sample complexity in Lemma 2.2 if the queries are *adaptive*. This is potentially relevant for improving the sample complexity in Theorem 5.2. Also seems fundamental SQ question.
%\item Active SQ dimension. Use it in an interesting way.
%%\end{enumerate}

\subsection*{Acknowledgments}
We thank Avrim Blum and Santosh Vempala for useful discussions.
This work was supported in part by NSF grants CCF-0953192, CCF-110128, and CCF 1422910, AFOSR grant FA9550-09-1-0538, ONR grant N00014-09-1-0751, and a Microsoft Research Faculty Fellowship.

\bibliography{active-sq}
\appendix
\section{Passive SQ learning of halfspaces}
\label{app-dv}
\newcommand{\taudv}{\tau_{\mathtt{DV}}}
\newcommand{\sqhsdv}{{\tt LearnHS-DV}}
The first SQ algorithm for learning general halfspaces was given by Blum \etal \cite{BlumFKV:97}. This algorithm requires access to unlabeled samples from the unknown distribution and therefore is only label-statistical. This algorithm can be used as a basis for our active SQ algorithm but the resulting active algorithm will also be only label-statistical. As we have noted in Section \ref{sec:model}, this is sufficient to obtain our RCN tolerant active learning algorithm given in Cor.~\ref{cor:learn-hs-rcn-noise-log-concave}. However our differentially-private simulation needs the algorithm to be (fully) statistical. Therefore we base our algorithm on the algorithm of Dunagan and Vempala for learning halfspaces \cite{DunaganVempala:04}. While \cite{DunaganVempala:04} does not contain an explicit statement of the SQ version of the algorithm it is known and easy to verify that the algorithm has a SQ version \cite{Vempala:13pc}. This follows from the fact that the algorithm in \cite{DunaganVempala:04} relies on a combination on the Perceptron \cite{Rosenblatt:58} and the modified Perceptron algorithms \cite{BlumFKV:97} both of which have SQ versions \cite{Bylander:94,BlumFKV:97}. Another small issue that we need to take care of to apply the algorithm is that the running time and tolerance of the algorithm depend polynomially (in fact, linearly) on $\log(1/\rho_0)$, where $\rho_0$ is the {\em margin} of the points given to the algorithm. Namely, $\rho_0 = \min_{x \in S}\frac{|w \cdot x|}{\|x\|}$, where $h_w$ is the target homogeneous halfspace and $S$ is the set of points given to the algorithm. We are dealing with continuous distributions for which the margin is 0 and therefore we make the following observation. In place of $\rho_0$ we can use any margin $\rho_1$ such that the probability of being within margin $\leq \rho_1$ around the target hyperplane is small enough that it can be absorbed into the tolerance of the statistical queries of the Dunagan-Vempala algorithm for margin $\rho_1$. Formally,
\begin{definition}
\label{def:min-margin}
For positive $\delta<1$ and distribution $D$, we denote $$\gamma(D,\delta) = \inf_{\|w\| =1}\sup_{\gamma>0} \left\{\gamma \ \left|\ \Pr_D\left[\frac{|w \cdot x|}{\|x\|}\leq \gamma\right] \leq \delta\ \right.\right\},$$ namely the smallest value of $\gamma$ such that for every halfspace $h_w$, $\gamma$ is the largest such that the probability of being within margin $\gamma$ of $h_w$ under $D$ is at most $\delta$. Let $\taudv(\rho,\eps)$ be the tolerance of the SQ version of the Dunagan-Vempala algorithm when the initial margin is equal to $\rho$ and error is set to $\eps$. Let $$\rho_1(D,\eps) = \frac{1}{2} \sup_{\rho \geq 0}\{\rho \cond \gamma(D,\taudv(\rho,\eps/2)/3) \geq  \rho \}.$$
\end{definition}
Now, for $\rho_1 = \rho_1(D,\eps)$ we know that $\gamma(D,\taudv(\rho,\eps/2)/3) \geq  \rho$. Let $D'$ be defined as distribution $D$ conditioned on having margin at least  $\rho$ around the target hyperplane $h_w$. By the definition of the function $\gamma$, the probability of being within margin $\leq \rho$ is at most $\taudv(\rho,\eps/2)/3$.  Therefore for any query function $g:X\times \on \rightarrow [-1,1]$, $$\left|\E_D[g(x,h_w(x))] - (1-\taudv(\rho,\eps/2)/3)\E_{D'}[g(x,h_w(x))] \right| \leq \taudv(\rho,\eps/2)/3$$ and hence $|\E_D[g(x,h_w(x))] - \E_{D'}[h_w(x,f(x))]| \leq 2 \taudv(\rho,\eps/2)/3$. This implies that we can obtain an answer to any SQ relative to $D'$ with tolerance $\taudv(\rho,\eps/2)$ by using the same SQ relative to $D$ with tolerance $\taudv(\rho,\eps/2)/3$. This means that by running the Dunagan-Vempala algorithm in this way we will obtain a hypothesis with error at most $\eps/2$ relative to $D'$. This hypothesis has error at most $\eps/2 + 2 \taudv(\rho,\eps/2)/3$ which, without loss of generality, is at most $\eps$. Combining these observations about the Dunagan-Vempala algorithm, we obtain the following statement.
\begin{theorem}[\cite{DunaganVempala:04}]
\label{thm:dv}
There exists a SQ algorithm \sqhsdv\ that learns $\cH_d$ to accuracy $1-\eps$ over any distribution $D$. Further \sqhs\ outputs a homogeneous halfspace, runs in time polynomial in $d$,$1/\eps$ and $\log(1/\rho_1)$ and uses SQs of tolerance $ \geq 1/\poly(d,1/\eps,\log(1/\rho_1))$, where $\rho_1 = \rho_1(D,\eps)$.
\end{theorem}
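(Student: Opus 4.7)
The plan is to instantiate the Dunagan--Vempala halfspace learning algorithm \cite{DunaganVempala:04} in the SQ model and then, since any continuous distribution has zero worst-case margin, reduce learning over $D$ to learning over a truncated distribution $D'$ that does carry a positive margin. First I would verify that Dunagan--Vempala admits an SQ implementation. Its two ingredients---the classical Perceptron update and the modified Perceptron of \cite{BlumFKV:97}---both have SQ versions \cite{Bylander:94,BlumFKV:97}, and the statistics used by the Dunagan--Vempala convergence analysis are inner-product correlations between the current weight vector and labeled examples, all estimable via SQs of tolerance inverse-polynomial in $d$, $1/\eps$, and $\log(1/\rho_0)$ where $\rho_0$ lower-bounds the input margin.

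Next I would address the main obstacle: the Dunagan--Vempala bound degenerates when $\rho_0=0$, as is the case for continuous $D$. Following Definition \ref{def:min-margin}, let $\rho_1=\rho_1(D,\eps)$ and let $D'$ denote $D$ conditioned on $|w\cdot x|/\|x\|\geq \rho_1$, where $h_w$ is the target halfspace. By construction, the mass removed is at most $\taudv(\rho_1,\eps/2)/3$, so for any query $g:X\times\on\to[-1,1]$,
\[
\bigl|\E_D[g(x,h_w(x))]-\E_{D'}[g(x,h_w(x))]\bigr|\leq 2\taudv(\rho_1,\eps/2)/3.
\]
Consequently, any SQ relative to $D'$ with tolerance $\taudv(\rho_1,\eps/2)$ can be answered by posing the same SQ relative to $D$ with tolerance $\taudv(\rho_1,\eps/2)/3$.

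With this simulation in hand, I would run the SQ version of Dunagan--Vempala on $D'$ with accuracy parameter $\eps/2$ and margin bound $\rho_1$; its output $\hat h$ satisfies $\err_{D'}(\hat h)\leq \eps/2$, which transfers to $\err_D(\hat h)\leq \eps/2+2\taudv(\rho_1,\eps/2)/3\leq \eps$ after a constant-factor tightening of $\taudv$ if necessary. Plugging in the Dunagan--Vempala complexity at margin $\rho_1$ and accuracy $\eps/2$, the number of queries, their tolerance, and the running time all come out $\poly(d,1/\eps,\log(1/\rho_1))$, matching the theorem.

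The delicate part is handling the ``fixed-point''-style definition of $\rho_1(D,\eps)$: one must verify the supremum $\sup\{\rho\geq 0:\gamma(D,\taudv(\rho,\eps/2)/3)\geq \rho\}$ is positive and that taking half of it yields the polynomial dependence claimed. This is true whenever $D$ is mild enough that $\gamma(D,\delta)>0$ for every $\delta>0$ (for instance, $D$ assigns vanishing mass to shrinking margin bands uniformly across hyperplanes through the origin), because $\taudv(\rho,\eps/2)$ is only inverse-polynomially small in $\log(1/\rho)$. Since $\log(1/\rho_1)$ enters the Dunagan--Vempala bounds only polynomially, the final complexity is $\poly(d,1/\eps,\log(1/\rho_1))$ as claimed.
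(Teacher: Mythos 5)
Your proposal is correct and follows essentially the same route as the paper: establish that Dunagan--Vempala has an SQ version via its Perceptron and modified-Perceptron components, then handle the zero-margin issue by conditioning $D$ on margin at least $\rho_1$ and showing SQs over the truncated distribution $D'$ can be simulated by SQs over $D$ at tolerance $\taudv(\rho_1,\eps/2)/3$, so that running DV at accuracy $\eps/2$ over $D'$ yields error at most $\eps$ over $D$. The one point you flag beyond the paper's own proof---that the supremum defining $\rho_1(D,\eps)$ must be positive for the bound to be meaningful---is a legitimate caveat, which the paper handles separately (via Lemma \ref{lem:log-concave-margin}) when instantiating the result for log-concave $D_{|\chi}$.
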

To apply Theorem \ref{thm:dv} we need to obtain bounds on $\rho_1(D,\eps)$ for any distribution $D$ on which we might run the Dunagan-Vempala algorithm.

\begin{lemma}
\label{lem:log-concave-margin}
Let $D$ be an isotropic log-concave distribution. Then for any $\delta \in (0,1/20)$, $\gamma(D,\delta) \geq \delta/(6 \ln(1/\delta))$.
\end{lemma}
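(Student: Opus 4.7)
The plan is to fix an arbitrary unit vector $w \in \R^d$ and show that for $\gamma = \delta/(6\ln(1/\delta))$ one has $\Pr_D[|w\cdot x|/\|x\| \leq \gamma] \leq \delta$; taking the infimum over $w$ then yields the lemma directly. The natural first step is a union bound that separates the small-norm and large-norm regimes: for any threshold $R > 0$,
\[
\Pr_D\!\left[\tfrac{|w\cdot x|}{\|x\|}\leq \gamma\right] \;\leq\; \Pr_D\bigl[|w\cdot x|\leq \gamma R\bigr] \;+\; \Pr_D\bigl[\|x\|>R\bigr],
\]
since when $\|x\|\leq R$ the event $|w\cdot x|/\|x\|\leq\gamma$ forces $|w\cdot x|\leq \gamma R$.

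For the first summand I would apply Lemma~\ref{lemma:logc-project} directly, which yields the dimension-free bound $\Pr_D[|w\cdot x|\leq \gamma R]\leq 2\gamma R$. The key here is that isotropic log-concavity bounds the density of the one-dimensional marginal $w\cdot x$ near the origin by a universal constant, so the marginal mass of a small interval around $0$ is at most linear in the interval length. For the second summand I would invoke the standard exponential concentration of the norm for isotropic log-concave distributions from Lovász--Vempala~\cite{LovaszVempala:07}: one has a tail of the form $\Pr_D[\|x\|>R]\leq e^{1-\beta(R)}$ where $\beta(R)$ grows linearly in $R$ (up to the natural $\sqrt d$ scale). Choosing $R$ so that this tail is exactly $\delta/2$ and then selecting $\gamma$ so that $2\gamma R\leq \delta/2$ balances the two contributions and makes the total at most $\delta$.

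The main obstacle is the constant calibration: the threshold $R$ required to control the norm tail has the form $R = \Theta(\ln(1/\delta))$ (after absorbing the dimension scaling into the $\rho_1$-dependence of the application, or more directly by using the one-dimensional tail bound on the relevant projection), and one must verify that $\gamma = \delta/(6\ln(1/\delta))$ fits under $\delta/(4R)$ once the additive constants are simplified. The hypothesis $\delta\in(0,1/20)$ is used precisely here to absorb lower-order additive terms (so that, e.g., $1+\ln(2/\delta)\leq c\,\ln(1/\delta)$ for a suitable constant), thereby collapsing the bound into the clean form stated. No step depends on $w$ beyond $\|w\|=1$, so the infimum over $w$ in the definition of $\gamma(D,\delta)$ is immediate.
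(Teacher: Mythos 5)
Your proposal follows exactly the same route as the paper's proof: a union bound that splits on whether $\|x\|$ exceeds a threshold $R$, then Lemma~\ref{lemma:logc-project} for the small-margin mass and the Lov\'asz--Vempala norm tail for the large-norm mass. The paper fixes $R=\ln(1/\gamma)$ and shows the sum is at most $3\gamma\ln(1/\gamma)$; you instead calibrate $R$ to make the tail $\delta/2$ and then choose $\gamma$ to make the margin term $\delta/2$, which is the same argument with a different parameterization.

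The one substantive point is the ``$\sqrt d$ scale'' that you correctly flag and then try to wave away. For a $d$-dimensional isotropic log-concave distribution the norm concentrates around $\sqrt d$ (since $\E_D[\|x\|^2]=d$), so the Lov\'asz--Vempala tail reads $\Pr_D[\|x\|>R\sqrt d]<e^{1-R}$, whereas the paper's proof invokes it as $\Pr_D[\|x\|>R]<e^{1-R}$ with the $\sqrt d$ dropped. Your two proposed fixes do not actually remove this factor: ``absorbing into the $\rho_1$-dependence'' changes what the lemma would assert rather than proving the stated form, and a one-dimensional projection tail controls $|w\cdot x|$, not $\|x\|$, so it cannot replace the norm tail here. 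In fact the dimension-free conclusion $\gamma(D,\delta)\geq\delta/(6\ln(1/\delta))$ cannot be correct as stated: already for the uniform distribution on the (isotropically scaled) ball, $x/\|x\|$ is uniform on $S^{d-1}$ and $\Pr_D[|w\cdot x|/\|x\|\leq\gamma]=\Theta(\gamma\sqrt d)$, so the correct bound is $\gamma(D,\delta)=\Omega\bigl(\delta/(\sqrt d\,\ln(1/\delta))\bigr)$. Thus you have effectively found an error in the paper's statement, not introduced a gap of your own --- and since the extra $\sqrt d$ only enters Theorem~\ref{thm:dv-logc} through $\log(1/\rho_1)$, the downstream polynomial bounds are unaffected.
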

\begin{proof}
Let $\gamma \in (0,1/16)$ and $w$ be any unit vector. We first upper-bound $\Pr_D\left[\frac{|w \cdot x|}{\|x\|}\leq \gamma\right]$.
\alequ{\Pr_D\left[\frac{|w \cdot x|}{\|x\|}\leq \gamma\right] &\leq \Pr_D\left[\|x\|\leq \ln(1/\gamma) \mbox{ and }\frac{|w \cdot x|}{\|x\|}\leq \gamma\right] + \Pr_D\left[ \|x\| > \ln(1/\gamma) \right] \nonumber\\
& \leq \Pr_D\left[|w \cdot x| \leq \gamma \cdot \ln(1/\gamma) \right] + \Pr_D\left[ \|x\| > \ln(1/\gamma) \right] \label{eq:margin-weight}.}
By Lemma 5.7 in \cite{LovaszVempala:07}, for an isotropic log-concave $D$ and any $R > 1$, $\Pr_D[\|x\| > R] <  e^{-R+1}$.
Therefore $$\Pr_D\left[ \|x\| > \ln(1/\gamma) \right] \leq e \cdot \gamma. $$
Further, by Lemma \ref{lemma:logc-project},
$$ \Pr_D\left[|w \cdot x| \leq \gamma \cdot \ln(1/\gamma) \right] \leq 2 \gamma \cdot \ln(1/\gamma).$$
Substituting, these inequalities into eq.~(\ref{eq:margin-weight}) we obtain that for $\gamma \in (0,1/16)$,
$$\Pr_D\left[\frac{|w \cdot x|}{\|x\|}\leq \gamma\right] \leq 2 \gamma \cdot \ln(1/\gamma) + e \cdot \gamma
\leq 3 \gamma \cdot \ln(1/\gamma) .$$

This implies that for $\gamma = \delta/(6 \ln(1/\delta))$ and any unit vector $w$,
$$\Pr_D\left[\frac{|w \cdot x|}{\|x\|}\leq \gamma\right] \leq 3 \delta/(6 \ln(1/\delta)) \cdot \left(\ln(1/\delta) + \ln(6 \ln(1/\delta)\right) \leq \delta,$$
where we used that for $\delta < 1/20$, $6 \ln(1/\delta) \leq 1/\delta$.
By definition of $\gamma(D,\delta)$, this implies that $\gamma(D,\delta) \geq \delta/(6 \ln(1/\delta))$.
\end{proof}

We are now ready to prove Theorem \ref{thm:dv-logc}.
\label{thm:dv-logc}
There exists a SQ algorithm \sqhs\ that learns $\cH_d$ to accuracy $1-\eps$ over any distribution $D_{|\chi}$, where $D$ is an isotropic log-concave distribution and $\chi:\R^d \rightarrow [0,1]$ is a filter function. Further \sqhs\ outputs a homogeneous halfspace, runs in time polynomial in $d$,$1/\eps$ and $\log(1/\lambda)$ and uses SQs of tolerance $ \geq 1/\poly(d,1/\eps,\log(1/\lambda))$, where $\lambda = \E_D[\chi(x)]$.
\begin{proof}[Proof of Thm.~\ref{thm:dv-logc}]
To prove the theorem we bound $\rho_1 = \rho_1(D_{|\chi},\eps)$ and then apply Theorem \ref{thm:dv}. We first observe that for any event $\Lambda$,
$$\pr_{D_{|\chi}}[\Lambda] \leq \pr_D[\Lambda]/\E_D[\chi].$$
Applying this to the event $\frac{|w \cdot x|}{\|x\|} \leq \gamma$ in Definition \ref{def:min-margin} we obtain that $\gamma(D_{|\chi},\delta) \geq \gamma(D,\delta \cdot \E_D[\chi])$. By Lemma \ref{lem:log-concave-margin}, we get that $\gamma(D_{|\chi},\delta) = \Omega(\lambda \delta/\log(1/(\lambda \delta)))$.

In addition, by Theorem \ref{thm:dv}, $\taudv(\rho,\eps) \geq 1/p(d,1/\eps,\log(1/\rho))$ for some polynomial $p$. This implies that $$\gamma(D_{|\chi},\taudv(\rho,\eps/2)/3) \leq \gamma \left(D_{|\chi},\Omega\left(\frac{1}{p(d,1/\eps,\log(1/\rho))}\right)\right) =
 \tilde{\Omega}\left(\frac{\lambda}{p(d,1/\eps,\log(1/\rho))}\right).$$
Therefore, we will obtain that,
$$\rho_1(D_{|\chi}, \eps) =\tilde{\Omega}\left(\frac{\lambda}{p(d,1/\eps,1)}\right) .$$
\end{proof}
By plugging this bound into Theorem \ref{thm:dv} we obtain the claim.

\section{Proofs from Section \ref{sec:uniform}}
\label{app-proofs-uniform}
We now prove Lemmas \ref{lem:define-cp} and \ref{lem:cp-deriv} which we restate for convenience.

\begin{lemma}[Lem.~\ref{lem:define-cp} restated]
For any $v,w\in S_{d-1}$ such that $\|v-w\| = \Delta \leq \sqrt{2}$ and $\gamma > 0$, $$\pr [h_v(x) \neq h_w(x) \cond |\la v , x \ra | \leq \gamma] = \frac{A_{d-3} \int_0^\gamma (1-r^2)^{(d-3)/2}\int_{\frac{r \cdot \sqrt{2-\Delta^2}}{\Delta \cdot \sqrt{1-r^2}}}^1  (1-s^2)^{(d-4)/2}ds\cdot dr }{A_{d-2} \int_0^\gamma  (1-r^2)^{(d-3)/2} dr}. $$ We denote the probability by $\cp_d(\gamma,\Delta)$.
\end{lemma}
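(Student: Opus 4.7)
The plan is to exploit the rotational invariance of $U_d$ to set up coordinates and then reduce the probability to a double integral by iterated slicing of the sphere. By rotation invariance we may take $v=e_1$ and $w$ in the $e_1e_2$-plane; then $\|v-w\|=\Delta$ forces $w=(w_1,w_2,0,\dots,0)$ with $w_1=\la v,w\ra=1-\Delta^2/2$ and $w_2=\sqrt{1-w_1^2}=\Delta\sqrt{1-\Delta^2/4}$.

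First I would slice $S_{d-1}$ by the level sets $\{x_1=r\}$. The standard Cavalieri computation on the sphere that already underlies equation~\eqref{eq:hs-band} shows that the pushforward of $U_d$ under $x\mapsto x_1$ has density proportional to $(1-r^2)^{(d-3)/2}$ on $[-1,1]$, and that conditional on $x_1=r$ the vector $(x_2,\dots,x_d)/\sqrt{1-r^2}$ is uniform on $S_{d-2}\subset\mathbb{R}^{d-1}$. The denominator is therefore immediate: the conditioning event $\{|x_1|\le\gamma\}$ has mass $2A_{d-2}\int_0^\gamma(1-r^2)^{(d-3)/2}dr$, and after folding the $r<0$ half of the band onto the $r>0$ half via the $x\mapsto -x$ symmetry we recover the displayed denominator $A_{d-2}\int_0^\gamma(1-r^2)^{(d-3)/2}dr$.

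For the numerator, I fix $r\in(0,\gamma]$. With the chosen coordinates, disagreement given $x_1=r>0$ is equivalent to $\la w,x\ra<0$, i.e., $w_1 r+w_2 x_2<0$, a single linear constraint in $x_2$ on the slice. Writing $x_2=\sqrt{1-r^2}\,s$ where $s$ is the first coordinate of the normalized slice vector in $S_{d-2}$, this becomes $s<-(w_1/w_2)\,r/\sqrt{1-r^2}$. The $s\mapsto -s$ symmetry of the uniform measure on $S_{d-2}$ flips this into a positive spherical cap, and a second Cavalieri slicing of $S_{d-2}$ by its first coordinate (with $(d-3)$-dimensional fiber $\sqrt{1-s^2}\cdot S_{d-3}$) rewrites the slice measure of that cap as $A_{d-3}\int_{s^\star}^1(1-s^2)^{(d-4)/2}ds$ with threshold $s^\star=(w_1/w_2)\,r/\sqrt{1-r^2}$. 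The overall $x\mapsto -x$ symmetry again folds $r\in[-\gamma,0)$ onto $r\in(0,\gamma]$ and the matching factors of $2$ cancel between numerator and denominator.

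The only nontrivial algebraic step is the reduction of $w_1/w_2$ to the exact form appearing in the lemma, i.e., checking that $(1-\Delta^2/2)/(\Delta\sqrt{1-\Delta^2/4})$ simplifies to $\sqrt{2-\Delta^2}/\Delta$. This is the one computation I would double-check most carefully, since the displayed threshold must match it identically; the remainder of the argument is the mechanical assembly of the two nested spherical slicings and the two symmetry reductions, with no further analytic content.
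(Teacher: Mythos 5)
Your approach is exactly the paper's: rotate so $v=e_1$ and $w$ lies in the $e_1e_2$-plane, slice $S_{d-1}$ by $x_1$ to get the $(1-r^2)^{(d-3)/2}$ weight and a uniform $S_{d-2}$ fiber, then slice the fiber again to get the cap integral. Your parameterization of $w$ is also the correct one: $\|v-w\|=\Delta$ with $\|v\|=\|w\|=1$ forces $w_1=\la v,w\ra=1-\Delta^2/2$ and $w_2=\Delta\sqrt{1-\Delta^2/4}$.

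The one computation you flag as worth double-checking does \emph{not} go through, and you were right to be suspicious. One has
$$\frac{w_1}{w_2}=\frac{1-\Delta^2/2}{\Delta\sqrt{1-\Delta^2/4}}=\frac{2-\Delta^2}{\Delta\sqrt{4-\Delta^2}}\ ,$$
which differs from the lemma's $\sqrt{2-\Delta^2}/\Delta$ for every $\Delta\in(0,\sqrt{2})$ (e.g.\ at $\Delta=1$ the former is $1/\sqrt{3}$ and the latter is $1$; equality would require $2-\Delta^2=4-\Delta^2$). The source of the mismatch is in the paper's own proof, which takes $w=(\sqrt{1-\Delta^2/2},\,\Delta/\sqrt{2},\,0,\dots,0)$. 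That $w$ is a unit vector, but it does not satisfy $\|v-w\|=\Delta$; it corresponds instead to $\sin\theta(v,w)=\Delta/\sqrt{2}$, whereas $\|v-w\|=\Delta$ corresponds to $\sin(\theta(v,w)/2)=\Delta/2$. The two parameterizations coincide only at $\Delta=0$ and $\Delta=\sqrt 2$.

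So your derivation is structurally complete and correct as far as it goes; if you carry out the check you propose you will find the discrepancy above, and the lower limit of the inner integral in the lemma should read $\frac{(2-\Delta^2)\,r}{\Delta\sqrt{4-\Delta^2}\sqrt{1-r^2}}$ rather than $\frac{r\sqrt{2-\Delta^2}}{\Delta\sqrt{1-r^2}}$ when $\Delta$ denotes $\|v-w\|$. This is a monotone reparameterization (indeed $\cp_d$ depends on $\Delta$ only through the angle $\theta(v,w)$), so the qualitative use downstream survives, but the derivative bound in the next lemma would need its constant re-derived for the corrected threshold.
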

\begin{proof}
By using spherical symmetry, we can assume without loss of generality that $v = (1,0,0,\ldots,0)$ and $w=(\sqrt{1-\Delta^2/2}, \Delta/\sqrt{2},0,0,\ldots,0)$.
We now examine the surface area of the points that satisfy $h_w(x) = -1$ and $0 \leq \la v, x \ra \leq \gamma$ (which is a half of the error region at distance at most $\gamma$ from $v$). To compute it we consider the points on $S_{d-1}$ that satisfy $\la v, x \ra = r$. These points form a hypersphere $\sigma$ of dimension $d-2$ and radius $\sqrt{1-r^2}$. In this hypersphere points that satisfy $h_w(x) = -1$ are points $(r,s,x_3,..,x_d) \in S_d$ for which $r\sqrt{1-\Delta^2/2} + s \Delta/\sqrt{2} \leq 0$. In other words, $s \geq r\sqrt{2-\Delta^2}/\Delta$ or points of $\sigma$
which are at least $r\sqrt{2-\Delta^2}/\Delta$ away from hyperplane $(0,1,0,0,\ldots,0)$ passing through the origin of $\sigma$ (also referred to as a hyperspherical cap). As in the equation (\ref{eq:hs-band}), we obtain that its $d-2$-dimensional surface area is:
$$ (1-r^2)^{(d-2)/2}\int_{\frac{r \cdot \sqrt{2-\Delta^2}}{\Delta \cdot \sqrt{1-r^2}}}^1 A_{d-3} (1-s^2)^{(d-4)/2}ds$$
Integrating over all $r$ from $0$ to $\gamma$ gives the surface area of the region $h_w(x) = -1$ and $0 \leq \la v , x \ra  \leq \gamma$:
$$\int_0^\gamma (1-r^2)^{(d-3)/2}\int_{\frac{r \cdot \sqrt{2-\Delta^2}}{\Delta \cdot \sqrt{1-r^2}}}^1 A_{d-3} (1-s^2)^{(d-4)/2}ds\cdot dr .$$
Hence the conditional probability is as claimed.
\end{proof}

\begin{lemma}[Lem.~\ref{lem:cp-deriv} restated]
For $\Delta \leq \sqrt{2}$, any $d\geq 4$, and $\gamma \geq \Delta/(2\sqrt{d})$, $\partial_\Delta \cp_d(\gamma,\Delta) \geq 1/(56\gamma \cdot \sqrt{d})$.
\end{lemma}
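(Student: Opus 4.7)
}
The plan is to differentiate the explicit formula for $\cp_d(\gamma,\Delta)$ from Lemma \ref{lem:define-cp} and then bound the resulting expression factor-by-factor. Since the denominator $D(\gamma) = \int_0^\gamma (1-r^2)^{(d-3)/2}\, dr$ does not depend on $\Delta$, only the inner integral needs to be differentiated. Writing $t(r,\Delta) = r\sqrt{2-\Delta^2}/(\Delta\sqrt{1-r^2})$, a direct computation gives $\partial_\Delta t = -2r/(\Delta^2\sqrt{2-\Delta^2}\sqrt{1-r^2})$, and the Leibniz rule yields
$$\partial_\Delta \int_{t(r,\Delta)}^1 (1-s^2)^{(d-4)/2}\, ds = (1-t^2)^{(d-4)/2} \cdot \frac{2r}{\Delta^2\sqrt{2-\Delta^2}\sqrt{1-r^2}}.$$
This derivative vanishes for $r \geq \Delta/\sqrt{2}$, so the effective outer range of integration shrinks to $[0, \min(\gamma,\Delta/\sqrt{2})]$.

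The next step exploits an algebraic cancellation. Plugging $1-t^2 = (\Delta^2-2r^2)/(\Delta^2(1-r^2))$ into the outer integrand $(1-r^2)^{(d-3)/2}(1-t^2)^{(d-4)/2}\cdot 2r/(\Delta^2\sqrt{2-\Delta^2}\sqrt{1-r^2})$, all powers of $(1-r^2)$ collapse to exponent zero and the integrand simplifies to $(\Delta^2-2r^2)^{(d-4)/2}\cdot 2r/(\Delta^{d-2}\sqrt{2-\Delta^2})$. The substitution $u=\Delta^2-2r^2$ then evaluates the integral in closed form to $(\Delta^{d-2}-\beta^{(d-2)/2})/(d-2)$ with $\beta=\max(\Delta^2-2\gamma^2,0)$, giving
$$\partial_\Delta \cp_d(\gamma,\Delta) = \frac{A_{d-3}}{A_{d-2}} \cdot \frac{1-\eta^{(d-2)/2}}{(d-2)\sqrt{2-\Delta^2}} \cdot \frac{1}{D(\gamma)},$$
where $\eta = \max(1-2\gamma^2/\Delta^2,\,0)$.

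All that remains is to lower bound each factor. Wendel's inequality handles the area ratio: $A_{d-3}/A_{d-2} = \Gamma((d-1)/2)/(\sqrt{\pi}\,\Gamma((d-2)/2)) \geq \sqrt{(d-3)/(2\pi)}$. Trivially $\sqrt{2-\Delta^2}\leq\sqrt{2}$ and $D(\gamma)\leq\gamma$. For the $\eta$-factor, $(1-x)^k\leq e^{-kx}$ gives $\eta^{(d-2)/2}\leq e^{-(d-2)\gamma^2/\Delta^2}$, and the hypothesis $\gamma\geq\Delta/(2\sqrt{d})$ forces $(d-2)\gamma^2/\Delta^2 \geq (d-2)/(4d)\geq 1/8$ for $d\geq 4$, so $1-\eta^{(d-2)/2}\geq 1-e^{-1/8}$. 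Combining these bounds with the elementary inequality $d(d-3)\geq(d-2)^2$ for $d\geq 4$ (equivalently $\sqrt{d-3}/(d-2)\geq 1/\sqrt{d}$) yields
$$\partial_\Delta \cp_d(\gamma,\Delta) \geq \frac{1-e^{-1/8}}{2\sqrt{\pi}\,\gamma\sqrt{d}} > \frac{1}{56\,\gamma\sqrt{d}},$$
since $28(1-e^{-1/8}) \approx 3.29 > \sqrt{\pi}$.

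The main technical step is engineering the cancellation in the middle paragraph: the five separate powers of $(1-r^2)$ that appear after differentiation must collapse to the zero power, which is precisely what allows the substitution $u=\Delta^2-2r^2$ to produce a closed form. A secondary concern is confirming that the constant $1/56$ works uniformly in $d\geq 4$: the factors $(1-e^{-(d-2)/(4d)})$ and $\sqrt{d-3}/(d-2)$ move in opposite directions as $d$ varies, but their product remains comfortably above the required threshold $\sqrt{\pi}/28\approx 0.063$, with the worst case $d=4$ giving about $0.117$.
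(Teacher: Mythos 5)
Your proof is correct and reaches the same conclusion by a genuinely cleaner route. After applying the Leibniz rule you observe that the factor $(1-t^2)^{(d-4)/2}$, once $1-t^2=(\Delta^2-2r^2)/(\Delta^2(1-r^2))$ is substituted, makes the powers of $(1-r^2)$ collapse to exponent zero, so the substitution $u=\Delta^2-2r^2$ evaluates $\partial_\Delta\theta$ exactly. The paper never exploits this cancellation; instead it truncates the outer integral to $[0,\gamma']$ with $\gamma'=\Delta/(2\sqrt{d})$, bounds the two bracketed factors of the integrand pointwise on that interval (using $1-r^2\geq 7/8$, $r^2/\Delta^2\leq 1/(4d)$, etc.), and then integrates the simplified expression $(1-r^2)^{(d-4)/2}\cdot r$ in closed form. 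Your approach yields the derivative exactly rather than a lower bound on it, which is cleaner (and in particular avoids a minor delicacy in the paper's proof, where one of the pointwise inequalities is applied before restricting the integration range to the region where the base of the exponent is nonnegative). Both proofs then lower bound the remaining factors: the paper cites $A_{d-3}/A_{d-2}\geq\sqrt{d}/(2\sqrt{3})$ from prior work, whereas you invoke Gautschi/Wendel to get $A_{d-3}/A_{d-2}\geq\sqrt{(d-3)/(2\pi)}$ and use $d(d-3)\geq(d-2)^2$ to recover the $1/\sqrt{d}$ scaling. Your numerical check that $28(1-e^{-1/8})>\sqrt{\pi}$ with worst case at $d=4$ is correct. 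One small presentational point: the description of the two factors as ``moving in opposite directions'' as $d$ grows is imprecise (the ratio $\sqrt{d(d-3)}/(d-2)$ is not monotone in $d$), but this does not affect the argument since the overall product is easily seen to be minimized at $d=4$.
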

\begin{proof}
First note that $$\tau(\gamma) = \frac{A_{d-3}}{A_{d-2} \int_0^\gamma  (1-r^2)^{(d-3)/2} dr} $$ is independent of $\Delta$ and therefore it is sufficient to differentiate $$\theta(\gamma,\Delta) = \int_0^\gamma (1-r^2)^{(d-3)/2}\int_{\frac{r \cdot \sqrt{2-\Delta^2}}{\Delta \cdot \sqrt{1-r^2}}}^1  (1-s^2)^{(d-4)/2}ds\cdot dr .$$
Let $\gamma' = \Delta/(2\sqrt{d})$ (note that by our assumption $\gamma' \leq \gamma$). By the Leibnitz integral rule, \alequn{\partial_\Delta \theta(\gamma,\Delta) &=
\int_0^\gamma (1-r^2)^{(d-3)/2} \partial_\Delta\left(\int_{\frac{r \cdot \sqrt{2-\Delta^2}}{\Delta \cdot \sqrt{1-r^2}}}^1  (1-s^2)^{(d-4)/2}ds \right) \cdot dr \\
&= \int_0^\gamma (1-r^2)^{(d-3)/2} \left(1- \frac{r^2 (2-\Delta^2)}{\Delta^2(1-r^2)}\right)^{\frac{d-4}{2}} \cdot \frac{2r}{\Delta^2 \sqrt{1-r^2} \sqrt{2-\Delta^2}} \cdot dr \\ & \geq
\int_0^\gamma (1-r^2)^{(d-4)/2} \left(1- \frac{2 r^2}{\Delta^2(1-r^2)}\right)^{\frac{d-4}{2}} \cdot \frac{2r}{\sqrt{2} \Delta^2} \cdot dr \\ & \geq
\int_0^{\gamma'} (1-r^2)^{(d-4)/2} \left(1- \frac{2 r^2}{\Delta^2(1-r^2)}\right)^{\frac{d-4}{2}} \cdot \frac{\sqrt{2} \cdot r}{\Delta^2} \cdot dr .}
Now using the conditions $\Delta \leq \sqrt{2}$, $d\geq 4$, we obtain that $\gamma' \leq 1/(2\sqrt{2})$ and hence for all $r \in [0,\gamma']$, $1-r^2 \geq 7/8$ and $r^2/\Delta^2 \leq \gamma'^2/\Delta^2 = 1/(4d)$. This implies that for all $r\in[0,\gamma']$, $$1- \frac{2 r^2}{\Delta^2(1-r^2)} \geq 1- \frac{2}{\frac{7}{8} 4d} = 1-\frac{4}{7d}\ .$$ Now, $$\left(1-\frac{4}{7d}\right)^{(d-4)/2} \geq 1- \frac{4 (d-4)}{14 d} \geq \frac{5}{7}.$$

Substituting this into our expression for $\partial_\Delta \theta(\gamma,\Delta)$ we get
\alequn{
\partial_\Delta \theta(\gamma,\Delta) &  \geq
\int_0^{\gamma'} (1-r^2)^{(d-4)/2} \frac{\sqrt{2} \cdot 5r}{7 \Delta^2} \cdot dr \geq
\frac{1}{\Delta^2}\int_0^{\gamma'} (1-r^2)^{(d-4)/2} \cdot r \cdot dr \\
& = \frac{1}{\Delta^2(d-2)} \left(1 - (1-\gamma'^2)^{(d-2)/2}\right)\geq \frac{1}{\Delta^2(d-2)} \left(1 - e^{-\gamma'^2 (d-2)/2}\right) \\ & \geq^{(*)}  \frac{1}{\Delta^2(d-2)} \left(1 - (1-\frac{(d-2)\gamma'^2}{4})\right) = \frac{\gamma'^2}{4 \Delta^2} = \frac{1}{16d},
}
where to derive $(*)$ we use the fact that $e^{-\gamma'^2 (d-2)/2} \leq 1-\gamma'^2 (d-2)/4$ since $e^{-x} \leq 1-x/2$ for every $x \in [0,1]$ and $\gamma'^2 (d-2)/2 \leq \frac{\Delta^2 (d-2)}{8d} < 1$.
At the same time, $\int_0^\gamma  (1-r^2)^{(d-3)/2} dr \leq \gamma$ and therefore,

$$\partial_\Delta \cp_d(\gamma,\Delta) = \tau(\gamma) \cdot \partial_\Delta \theta(\gamma,\Delta) \geq
\frac{A_{d-3}}{16 d \gamma A_{d-2}} \geq \frac{1}{32 \sqrt 3 \gamma \sqrt{d} } > \frac{1}{56 \gamma \sqrt{d} } ,$$
where we used that $\frac{A_{d-3}}{A_{d-2}} \geq \sqrt{d}/(2\sqrt{3})$ (\eg\cite{DasguptaKM:09}).
\end{proof}

\end{document}